\documentclass[10pt,journal,compsoc]{IEEEtran}
\ifCLASSOPTIONcompsoc
  \usepackage[nocompress]{cite}
\else
  \usepackage{cite}
\fi
\ifCLASSINFOpdf
\else
\fi
\usepackage{graphicx}
\usepackage{amsmath}
\usepackage{amssymb}
\usepackage{amsthm}
\usepackage{subcaption}
\usepackage{blkarray}
\usepackage{threeparttable}
\usepackage{multirow}
\usepackage{makecell}
\usepackage{colortbl}
\usepackage{array}
\usepackage{diagbox}
\usepackage{booktabs}
\usepackage{tikz}
\usepackage{fbox}
\usepackage{bm}
\usepackage{bbm}
\usepackage{url}
\usepackage[linesnumbered,ruled,boxed,norelsize,vlined,commentsnumbered]{algorithm2e}
\usepackage{comment}
\usepackage{marginnote}
\usepackage{sidecap}
\usepackage{adjustbox}
\usepackage{float}
\usepackage{bbding}
\usepackage{adjustbox}

\definecolor{cvprblue}{rgb}{0,0,1}%{0.21,0.49,0.74}
\usepackage[pagebackref,breaklinks,colorlinks,citecolor=cvprblue]{hyperref}

\usepackage[capitalize]{cleveref}
\crefname{section}{Sec.}{Secs.}
\Crefname{section}{Section}{Sections}
\Crefname{table}{Table}{Tables}
\crefname{table}{Tab.}{Tabs.}

\newtheorem{lemma}{Lemma}
\newtheorem{definition}{Definition}
\newtheorem{proposition}{Proposition}

\newcommand{\ie}{{\textit{i.e.}}}
\newcommand{\etal}{{\textit{et al.}}}
\newcommand{\eg}{{\textit{e.g.}}}

\newcommand{\revise}[1]{{\color{black}{#1}}}
\newcommand{\qa}[3]{\ifmmode{\color{blue}#3}\else\marginpar{\centering\color{blue}\textbf{R#1 A#2} }{\color{blue}#3}\fi}
\graphicspath{{./Fig/}}

\begin{document}
%
% paper title
% Titles are generally capitalized except for words such as a, an, and, as,
% at, but, by, for, in, nor, of, on, or, the, to and up, which are usually
% not capitalized unless they are the first or last word of the title.
% Linebreaks \\ can be used within to get better formatting as desired.
% Do not put math or special symbols in the title.
%\title{An Unsupervised Clustering Method for Superquadric Fitting}

% \title{Superquadric Fitting Using Unsupervised Clustering Analysis}

%\title{An Unsupervised Clustering-Inspired Method for Superquadric Fitting}

\title{A Holistic Method for Superquadric Fitting Using Unsupervised Clustering Analysis}

%\title{Superquadric Fitting Using Unsupervised Clustering Analysis}
%
%
% author names and IEEE memberships
% note positions of commas and nonbreaking spaces ( ~ ) LaTeX will not break
% a structure at a ~ so this keeps an author's name from being broken across
% two lines.
% use \thanks{} to gain access to the first footnote area
% a separate \thanks must be used for each paragraph as LaTeX2e's \thanks
% was not built to handle multiple paragraphs
%
%
%\IEEEcompsocitemizethanks is a special \thanks that produces the bulleted
% lists the Computer Society journals use for "first footnote" author
% affiliations. Use \IEEEcompsocthanksitem which works much like \item
% for each affiliation group. When not in compsoc mode,
% \IEEEcompsocitemizethanks becomes like \thanks and
% \IEEEcompsocthanksitem becomes a line break with idention. This
% facilitates dual compilation, although admittedly the differences in the
% desired content of \author between the different types of papers makes a
% one-size-fits-all approach a daunting prospect. For instance, compsoc 
% journal papers have the author affiliations above the "Manuscript
% received ..."  text while in non-compsoc journals this is reversed. Sigh.

	\author{Mingyang~Zhao,~
		Sipu Ruan,~
		 Xiaohong Jia%,~ 
		%  %Shiqing Xin, 
		% Dong-Ming Yan
		\IEEEcompsocitemizethanks{
			\IEEEcompsocthanksitem M. Zhao and X. Jia are with the State Key Laboratory of Mathematical Sciences, Academy of Mathematics and Systems Science, Chinese Academy of Sciences, and also with the University of Chinese Academy of Sciences. E-mail: \{zhaomingyang, xhjia\}@amss.ac.cn.\protect
			% note need leading \protect in front of \\ to get a newline within \thanks as
			% \\ is fragile and will error, could use \hfil\break instead.
			\IEEEcompsocthanksitem S. Ruan  is with the Robotics Institute, School of Mechanical Engineering and Automation, Beihang University. E-mail:ruansp@buaa.edu.cn.\protect
			%\IEEEcompsocthanksitem X. Li is with the Beijing Huairou Hospital of Traditional Chinese Medicine, Beijing, China. E-mail: xuel46690@gmail.com\protect
			%\IEEEcompsocthanksitem L. Ma is with the Peking University, Beijing,
			%China. E-mail: lei.ma@pku.edu.cn\protect
			% \IEEEcompsocthanksitem D. Yan is with the Institute of Automation, Chinese Academy of Sciences. E-mail: yandongming@ia.ac.cn.

            \IEEEcompsocthanksitem Mingyang Zhao is the corresponding author.
		}% <-this % stops an unwanted space
		%\thanks{Manuscript}
		}

\markboth{IEEE TRANSACTIONS ON PATTERN ANALYSIS AND MACHINE INTELLIGENCE}{Shell \MakeLowercase{\textit{et al.}}: An Unsupervised Clustering Method for Superquadric Fitting}

\IEEEtitleabstractindextext{%
\begin{abstract}
This work presents a novel method for fitting superquadrics to point clouds under the contamination of noise and outliers, which has many applications for shape modeling across diverse fields. Unlike prior approaches that either exclusively focus on fitting rigid or deformable superquadrics, or suffer from robustness and numerical instability issues, our method redefines the problem from a new \emph{unsupervised clustering perspective}, enabling the \emph{holistic} fitting of both rigid and deformable superquadrics within a unified framework. Central to our approach is a stable optimization function inspired by unsupervised clustering analysis, where we formulate the point cloud data and samples from the potential parametric surface as \emph{clustering members} and \emph{centroids}, respectively. Then, the clustering process with dynamic updates to centroid locations serves as a direct proxy for optimizing superquadric parameters, establishing a principled link between geometric fitting and clustering dynamics. We further derive the relationship between pairwise computations of clustering centroids and clustering members to orthogonal distances, effectively eliminating the need for the time-consuming surface sampling process. Moreover, our formulation provides \emph{closed-form analytical solutions} for both the fuzzy membership degree vector and the covariance matrix, ensuring efficient iteration optimization and enabling more effective handling of geometric deformations. In addition, we provide a \emph{theoretical certificate} of convergence analysis and demonstrate that the clustering-inspired fitting method can \emph{escape local minima} by inherently increasing the convexity of the objective function. We experimentally demonstrate the improvements of our superquadric fitting method in accuracy, robustness, and stability over state-of-the-art approaches. 
We also show that our method effectively avoids convergence to local minima, yielding smaller point-to-surface distances, particularly for the highly tapered shapes. Additionally, we illustrate the versatility of our method in diverse applications, including \emph{\revise{complex object approximation}} via multiple superquadric representation, \emph{type-specific primitive fitting}, \emph{geometry editing}, and \emph{medical modeling}. \revise{The implementation is publicly available at \url{https://github.com/zikai1/SuperquadricFitting}.}

\end{abstract}

% Note that keywords are not normally used for peerreview papers.
\begin{IEEEkeywords}
Superquadric fitting, parameter estimate, tapering deformation, bending deformation, unsupervised clustering.
\end{IEEEkeywords}}

% make the title area
\maketitle

% To allow for easy dual compilation without having to reenter the
% abstract/keywords data, the \IEEEtitleabstractindextext text will
% not be used in maketitle, but will appear (i.e., to be "transported")
% here as \IEEEdisplaynontitleabstractindextext when the compsoc 
% or transmag modes are not selected <OR> if conference mode is selected 
% - because all conference papers position the abstract like regular
% papers do.
\IEEEdisplaynontitleabstractindextext
% \IEEEdisplaynontitleabstractindextext has no effect when using
% compsoc or transmag under a non-conference mode.

% For peer review papers, you can put extra information on the cover
% page as needed:
% \ifCLASSOPTIONpeerreview
% \begin{center} \bfseries EDICS Category: 3-BBND \end{center}
% \fi
%
% For peerreview papers, this IEEEtran command inserts a page break and
% creates the second title. It will be ignored for other modes.
\IEEEpeerreviewmaketitle

\IEEEraisesectionheading{\section{Introduction}\label{sec:introduction}}
\IEEEPARstart{F}{itting} geometric primitives to 3D point clouds serves as a critical bridge between low-level scattered data and high-level \emph{structural representations} of underlying 3D shapes~\cite{li2019supervised}. {A}{s} a powerful modeling primitive, \emph{superquadratic surfaces} (\emph{superquadrics}) extend traditional CAD models (\eg, cylinders, spheres) with great \emph{flexibility} and \emph{versatility}~\cite{gross1988error}. These surfaces encompass a rich shape vocabulary, ranging from cuboids, ellipsoids, and octahedra to their intermediate forms (the left panel of \cref{fig:teaser}), encoded with only five parameters in canonical form. Additionally, their modeling power can be further expanded via global deformations, such as tapering or bending, enabling the description of more complex geometries that transcend standard primitives (the right panel of \cref{fig:teaser}).

Benefiting from their \emph{low-dimensional} and \emph{compact} characterization as well as  \emph{sufficient} shape representation capacity, superquadrics have found widespread applications across various domains, including computer graphics~\cite{gielis2003superquadrics,schultz2010superquadric,choi20243doodle}, robotics~\cite{ruan2022efficient,han2023sq, xing2024object,zhao2024bayesian}, and computer vision~\cite{paschalidou2019superquadrics,liu2022robust,li2024shared}. For example, Choi \etal~\cite{choi20243doodle} leveraged superquadric contours to generate smooth volumetric outlines across different viewpoints for 3D stroke drawing. Xing \etal~\cite{xing2024object} pioneered the integration of superquadrics into object-based SLAM systems, facilitating robust camera localization.

{The} problem of {single} superquadric fitting necessitates optimizing a set of unknown parameters to define a superquadric surface that accurately models a given shape or its partial geometry, such as those described by 3D point cloud samples. This parameter estimation process is commonly cast as a least-squares problem, achieving satisfactory local minima in conventional scenarios. However, most existing approaches for superquadric fitting predominantly concentrate on rigid geometry, thereby struggling to accurately capture \emph{spatial deformations} (\eg, the tapering and bending deformations illustrated in the right panel of \cref{fig:teaser}). Such geometric deformations pose a more challenging optimization problem yet are ubiquitously occurred in both man-made and natural environments. Moreover, these methods often insufficiently tackle the impact of noise, outliers, and partial data during algorithmic design, which inevitably compromises the fitting results in terms of both accuracy and applicability.

{To address these issues, this work focuses on the parameter estimation and geometric optimization of a single superquadric and introduces a novel method for fitting superquadric surfaces to point cloud data.} The proposed approach accommodates both rigid and deformable geometries within a \emph{holistic} optimization framework. We introduce a new clustering-based solution to the fitting problem, formulating it as an \emph{unsupervised clustering process}. Specifically, given a set of unorganized point samples, we designate them as \emph{clustering members} and the potential superquadric as the continuous shape space where \emph{clustering centroids} reside. Then, analogous to clustering, the dynamic update of clustering centroids corresponds to the parameter optimization of superquadrics.  
To avoid inefficient and nonuniform sampling from potential surfaces, we further derive a relationship that correlates the pairwise distance computation between clustering centroids and members with the orthogonal distance, and thus significantly accelerates the {parameter} optimization process. We highlight the advantages of our clustering-inspired optimization function from a \emph{convex optimization perspective}, demonstrating its ability to automatically escape local minima and thereby achieve a \emph{tighter} shape approximation. Additionally, we provide \emph{closed-form solutions} for both the fuzzy membership degree vector and the fuzzy covariance matrix during each iteration, which enables fast and efficient implementations. 
The proposed method enables \revise{joint} optimization of \emph{shape parameters}, \emph{spatial transformations}, and \emph{geometric deformations}, and is guaranteed to converge.

\begin{figure}[t]
    \centering
\includegraphics[width=\linewidth]{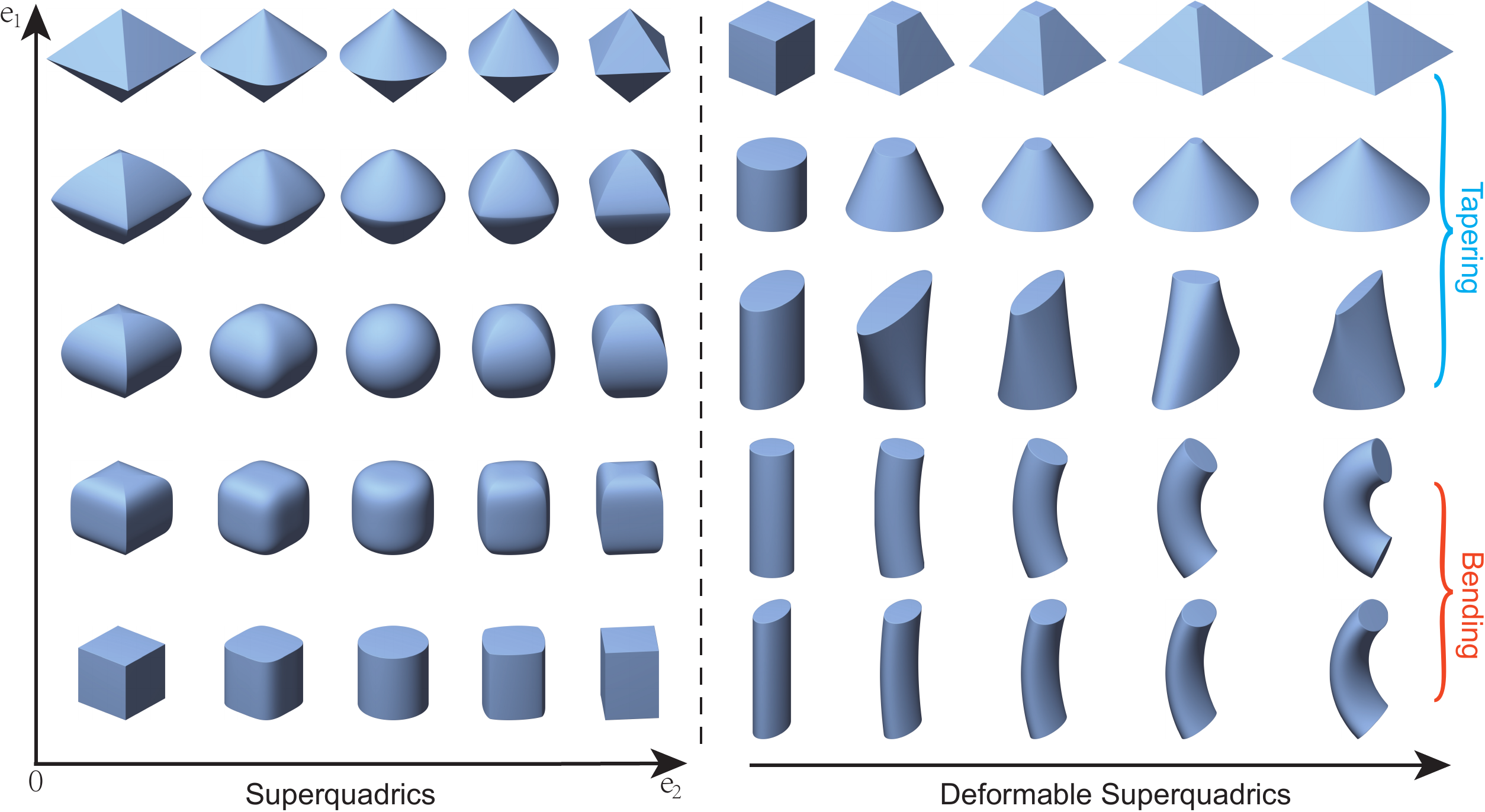}
\caption{Gallery of superquadrics. Left: Rigid superquadrics. Right: Deformable superquadrics. Our method enables holistic fitting of both categories within a single unsupervised clustering optimization framework.}
\label{fig:teaser}
\end{figure}

We conduct extensive experiments to validate the superior accuracy, robustness, and stability of our superquadric fitting method under challenging conditions such as noise, outliers, partial data, and geometric deformations, as well as its broad applicability to real-world tasks including shape \revise{approximation} and medical modeling. We also empirically investigate the convergence behavior of the designed algorithm and its capacity to escape local minima. Experimental results demonstrate a significant performance margin over baseline approaches, confirming the method's \revise{stable} convergence and exceptional capacity to escape local minima when handling severe geometric deformations and external disturbances.

Our contributions can be summarized as follows:
\begin{itemize}
 
\item We present a novel and unified framework for {single} superquadric fitting inspired by unsupervised clustering principles. The method demonstrates superior performance in modeling both rigid and deformable geometries and is guaranteed to converge.

\item We derive and establish a theoretical relationship between the pairwise distance computation of clustering centroids and their members using orthogonal distance, eliminating the need for inefficient and nonuniform surface sampling.

\item {Our method provides closed-form solutions for the fuzzy membership degree and the fuzzy covariance matrix at each iteration, incorporates an entropy regularization term that inherently avoids local minima, and significantly improves robustness to noise, outliers, and geometric deformations.}
\end{itemize}

The rest of this paper is structured as follows. \cref{sec:related_work} reviews the state-of-the-art approaches closely related to ours, followed by \cref{sec:preliminary}, which introduces the preliminaries of superquadric definitions and fuzzy clustering analysis. The proposed method is detailed in \cref{sec:method}, where we show how to perform superquadric fitting under an unsupervised clustering framework. 
\cref{sec:experiments} presents experimental validations, demonstrating the method's accuracy, robustness, stability, and applicability across diverse scenarios. 
We discuss the limitations of the current work in~\cref{sec:limitation}.~\cref{sec:conclusions} concludes this study and outlines the future directions.

\section{Related Work}\label{sec:related_work}
In this section, we review superquadric fitting algorithms related to our work. 
Since the introduction of superquadrics by~\cite{barr1981superquadrics}, significant efforts have been made to improve the fitting accuracy of superquadrics for shape representation and modeling by revising or redesigning new loss functions. For instance,~\cite{solina1987three,solina1990recovery} defined a loss function based on the \emph{implicit inside-outside equation} (\emph{i.e.}, algebraic distance) of superquadrics.~\cite{gross1988error} investigated the bias of the implicit function and proposed a more stable radial Euclidean distance as a modification.~\cite{zhang2003experimental}
compared the implicit and radial objective functions and provided a guidance for selecting the optimal objective function in superquadric representation tasks.~\cite{hu1995robust} introduced a weighted energy function to robustly extract superquadrics from noisy data.~\cite{van1998fitting} improved Solina's algorithm by modifying the distance measure during iteration and incorporating a background constraint.

However, as noted in~\cite{vaskevicius2017revisiting}, all the aforementioned approaches is prone to numerical instabilities in certain regions of the parameter space (\emph{i.e.}, when either of the shape parameters of superquadrics approaches zero). To mitigate this issue,~\cite{vaskevicius2017revisiting} revisited the previous superquadric fitting problem and proposed a numerically stable formulation for evaluating the surface function and its gradient using a new  auxiliary function. More recently,~\cite{liu2022robust} formulated superquadric fitting as a maximum likelihood estimation problem, demonstrating impressive performance across a series of tests. Nevertheless, both of these recent methods focus solely on rigid superquadrics, leaving open questions about algorithm capabilities for more complex deformable scenarios.

Learning-based methods have also been explored to address the superquadric fitting problem~\cite{vsircelj2020segmentation}. For instance,~\cite{oblak2020learning} designed a learnable objective based on the superquadric inside-outside function and developed two learning strategies for training convolutional neural networks to predict superquadric parameters. Similarly, DSQNet~\cite{kim2022dsqnet} adopted a two-stage supervised learning framework: first identifying object shapes via superquadrics, then guiding robotic grasping tasks. {In contrast to most learning-based approaches, which demand extensive data and substantial training time, our method adopts a per-instance \emph{geometric optimization} perspective, enabling direct and fast superquadric fitting.}

Beyond superquadric fitting, some approaches integrate segmentation techniques with parameter optimization for multi-superquadric recovery, enabling part-based shape decomposition~\cite{leonardis1997superquadrics, alaniz2023iterative}, while others directly parse objects for shape abstraction~\cite{hachiuma2019volumetric, paschalidou2019superquadrics, wu2022primitive}. 
While our method focuses on estimating the parameters of a \revise{single} superquadric, including shape, spatial pose, and geometric deformations, it can be seamlessly integrated with these techniques to describe shapes using multiple superquadrics. Moreover, unlike most existing multi-superquadric  approaches that target rigid shapes, our framework naturally incorporates geometric deformations into the recovery process, further extending its applicability to deformable structures. We demonstrate this versatile integration in the following experiments.

\section{Preliminaries}\label{sec:preliminary}
In this section, we begin with the introduction of 
notations used throughout the paper for clarity, followed by the mathematical
preliminaries of rigid and deformable superquadrics, as well as the fuzzy clustering framework. 
\subsection{Notations}

Let $\mathbb{Z}_{+}$, $\mathbb{R}_{+}$, $\mathbb{R}_{\geq0}$, $\mathbb{R}^{n}$, $\mathbb{R}^{m\times n}$, $\mathbb{S}_{++}^{n}$ denote the set of positive integers, positive scalars, non-negative scalars, $n$-dimensional vectors, size $m\times n$ matrices, and size $m\times n$ positive definite matrices, respectively. We adopt the following notation conventions: Lowercase letters for scalars (\eg, $x$), boldface lowercase letters for vectors (\eg, $\mathbf{x}$), and uppercase letters (\eg, \(\mathbf{A}\)) for matrices. For a matrix \(\mathbf{A} \in \mathbb{R}^{m \times m}\), \(\text{Tr}(\mathbf{A})\) denotes its trace, and \(|\mathbf{A}|\) denotes its determinant. The transpose of a vector or matrix is denoted by \((\cdot)^{\top}\). Let \([m]\) represent the set \(\{1, \dots, m\}\). The \(\ell_p\) norm of a vector is denoted by\(\|\cdot\|_p\).

\subsection{Superquadrics}\label{subsec:SQ}
\subsubsection{{Rigid  Superquadrics}} 
Superquadrics~\cite{barr1981superquadrics} are a family of compact geometric primitives that extend the modeling capabilities of quadric surfaces and solids, thereby enabling the powerful representation of a broader range of shapes. Mathematically, a rigid superquadric in {canonical form} can be defined by the following parametric equation:
\begin{equation}
\begin{array}{l}
\bm{x}(\eta, \omega)\!=\!\left[\begin{array}{c}
C_{\eta}^{\varepsilon_{1}} \\
a_{z} S_{\eta}^{\varepsilon_{1}}
\end{array}\right] \otimes\left[\begin{array}{c}
a_{x} C_{\omega}^{\varepsilon_{2}} \\
a_{y} S_{\omega}^{\varepsilon_{2}}
\end{array}\right]\!=\!\left[\begin{array}{c}
a_{x} C_{\eta}^{\varepsilon_{1}} C_{\omega}^{\varepsilon_{2}} \\
a_{y} C_{\eta}^{\varepsilon_{1}} S_{\omega}^{\varepsilon_{2}} \\
a_{z} S_{\eta}^{\varepsilon_{1}}
\end{array}\right], 
\end{array}
\label{eq:parametric}
\end{equation}where $ C_{\alpha}^{\varepsilon}=\cos^{\varepsilon}(\alpha)$, $S_{\alpha}^{\varepsilon} = \sin^{\varepsilon}(\alpha)$, and \(\eta \in [-\pi/2, \pi/2]\), \(\omega \in (-\pi, \pi]\) are parametric angles. The operation \(\otimes\) is the spherical product (\ie, cross product in spherical polar coordinates). The shape parameters \(\varepsilon_{1}\) and \(\varepsilon_{2} \in \mathbb{R}_{+}\) control the superquadric's geometry (squareness) along the {latitude} and {longitude} directions, respectively. Parameters \(a_x\), \(a_y\), \(a_z \in \mathbb{R}_{+}\) define the superquadric size along the \(x\), \(y\), \(z\) coordinates. We focus on the surface fitting of convex superquadrics (\ie, \(\varepsilon_{1}, \varepsilon_{2} \in (0, 2]\)) for 3D point clouds.

Alternatively, we can convert~\cref{eq:parametric} to an \emph{implicit function} to readily determine the relative position of a point $\bm{x} = [x, y, z]\in\mathbb{R}^3$ and a superquadric (\ie, the inside-outside function):
\begin{equation}
F(x, y, z)\!=\!\left(\left(\frac{x}{a_{x}}\right)^{\frac{2} {\epsilon_{2}}}+\left(\frac{y}{a_{y}}\right)^{\frac{2}{ \epsilon_{2}}}\right)^{\frac{\epsilon_{2}}{\epsilon_{1}}}
\!+\!\left(\frac{z}{a_{z}}\right)^{\frac{2}{\epsilon_{1}}}.
\label{eq:implicit}
\end{equation} A point \(\bm{x}\in\mathbb{R}^3\) lies on the surface if \(F(\bm{x}) = 1\), inside if \(F(\bm{x}) < 1\), and outside if \(F(\bm{x}) > 1\). Eqs.~\eqref{eq:parametric} and~\eqref{eq:implicit} describe the superquadrics in a \emph{canonical form}. We can further parameterize superquadrics in a \emph{general position} by applying a rigid transformation \(\mathcal{T}\), \ie, $\mathbf{x}=\mathcal{T}(\mathbf{x})$, where \(\mathcal{T}\) belongs to the \emph{special Euclidean group} \(SE(3)\):
\begin{equation*}
SE(3)\! =\! \left\{\left.\mathcal{T} \!= \!\begin{bmatrix}
\mathbf{R} & \mathbf{t} \\
\mathbf{0}^{\top} & 1
\end{bmatrix} \in \mathbb{R}^{4 \times 4} \right\rvert\, \mathbf{R} \in SO(3), \mathbf{t} \in \mathbb{R}^3\right\}.
\end{equation*}Therefore, fitting rigid superquadrics amounts to optimizing an unknown variable set \(\mathbf{\Theta} = \{\varepsilon_1, \varepsilon_2, a_x, a_y, a_z, \mathcal{T}\}\) that simultaneously describes the superquadric's geometric shape (\ie, $\varepsilon_1, \varepsilon_2, a_x, a_y, a_z$) and spatial pose (\ie, $\mathbf{R}$, $\mathbf{t}$). The left panel of~\cref{fig:teaser} illustrates the shape vocabulary of rigid superquadrics, showcasing canonical forms under different \(\varepsilon_1, \varepsilon_2\) combinations.

\subsubsection{Deformable Superquadrics} 
Although rigid superquadrics can represent a diverse set of shapes in our surroundings, their shape modeling capabilities can be further enhanced by \emph{deforming} them in various ways, such as \emph{tapering} and \emph{bending}~\cite{barr1984global} as representative cases. As illustrated on the right panel of~\cref{fig:teaser}, these deformations allow us to represent more complex shapes observed in both natural and man-made environments.

Specifically, a tapering deformation along the $z$-axis is defined as 
\begin{equation}
\Tilde{{x}}=f_{k_{x}}(z){x},\quad \Tilde{{y}}=f_{k_{y}}(z){y},\quad \Tilde{{z}}={z},
\end{equation}where $f_{k_{x}}$ and $f_{k_{y}}$ are the \emph{linear tapering functions} defined as 
\begin{equation}
f_{k_{x}}(z)=\frac{{k_{x}}}{a_z}z+1,\quad f_{k_{y}}(z)=\frac{{k_{y}}}{a_z}z+1.
\end{equation} Here, ${k_{x}}$, ${k_{y}}\in[-1, 1]$ are the tapering parameters in the $x$- and $y$-axes of the object-centered coordinate system, respectively. Note that the tapered superquadrics will degenerate to rigid ones when $k_x=k_y=0$. Thus, the fitting of tapered superquadrics in a general spatial pose is equivalent to optimizing the parameter set \(\mathbf{\Theta}_{\text{taper}} = \{\varepsilon_1, \varepsilon_2, a_x, a_y, a_z,\) \(\mathcal{T}, k_x, k_y\}\). The top right part of~\cref{fig:teaser} presents several deformable superquadrics subjected to tapering transformations.

Similarly, for bending superquadrics, we adopt a transformation that bends the 
$z$-axis into a circular section, defined by
\begin{equation}
\left\{
\begin{array}{l}
{x}'=x+\cos(\alpha)(R-r),\\
{y}'=y+\sin(\alpha)(R-r),\\
{z}'=\sin(\gamma)(\kappa^{-1}-r), 
\end{array}
\right.
\end{equation} where 
\begin{equation}
\left\{
\begin{array}{l}
r=\cos(\alpha-\arctan(y/x))\sqrt{x^2+y^2},\\
\gamma=z\kappa,\\
R=\kappa^{-1}-\cos(\gamma)(\kappa^{-1}-r).\\
\end{array}
\right.
\end{equation} The parameters \(\kappa \in \mathbb{R}_{+}\) and \(\alpha\in[0, \pi/2]\) denote the \emph{bending curvature} and \emph{bending direction} in the \(x\)-\(y\) plane, respectively. These variables are optimized alongside the rigid superquadric parameters \(\mathbf{\Theta}\), resulting in the extended parameter set \(\mathbf{\Theta}_{\text{bend}} = \{\mathbf{\Theta}, \kappa, \alpha\}\). Interested readers are recommended to refer to~\cite{barr1984global} for detailed derivations.

\subsection{Fuzzy Clustering Analysis}
As a representative unsupervised learning paradigm, fuzzy clustering enables \emph{soft data partitioning} by assigning each point a degree of membership to multiple clustering centroids. This property makes it well-suited for superquadric fitting, where geometric ambiguities (\eg, measurement noise, partial point clouds) necessitate handling of \emph{uncertain data}.

\begin{definition}
Let $\mathbb{R}^n$ denote the data space with data points $\mathbf{X}=\{\bm{x}_i\in \mathbb{R}^n\}_{i=1}^M$. Fuzzy clustering analysis aims to solve the following problem: 
\begin{equation}
\min_{\mathbf{U},\mathbf{V}}\sum_{j=1}^C\sum_{i=1}^M(u_{ij})^r||\bm{x}_i-\bm{v}_j||_2^2, s.t.\sum_{j=1}^Cu_{ij}=1, u_{ij}\in[0,1]. 
\label{eq:fcm_origin}
\end{equation}Here, $\mathbf{U}=[u_{ij}]_{M\times C}\in\mathbb{R}^{M\times C}$ is the {fuzzy membership degree matrix}, $\mathbf{V}=\{\bm{v}_j\in \mathbb{R}^n\}_{j=1}^C$ is the set of clustering centroids consisting of $C\in\mathbb{Z}_{+}$ classes, and $r\in(1,+\infty)$ is the {fuzzy factor}, which controls the clustering fuzziness. 
\end{definition}The closed-form solution for $\mathbf{U}$ in~\cref{eq:fcm_origin} (derived in the \emph{Supplementary Material}) can be expressed as 
\begin{equation}
u_{ij}=\frac{1}{\sum_{k=1}^{C}\left(\frac{\left\|\boldsymbol{x}_i-\boldsymbol{v}_j\right\|^2_2}{\left\|\boldsymbol{x}_i-\boldsymbol{v}_k\right\|^2_2}\right)^{\frac{1}{r-1}}}.
\label{eq:U_origin}
\end{equation} As Euclidean distance-based
clustering algorithms are primarily suitable for spherical
data, Gustafson and Kessel~\cite{gustafson1979fuzzy} further introduced the \emph{Mahalanobis distance} to generalize fuzzy clustering analysis to accommodate ellipsoidal structures. Recently, Chen \etal~\cite{chen2023ell} combined the merits of previous fuzzy clustering analysis approaches and introduced a novel clustering framework based on the robust $\ell_{2,p}$ norm. This framework achieves appealing results on a set of unsupervised clustering analysis tasks: 
\begin{equation}
\begin{small}
\begin{gathered}
\min_{\mathbf{U},\mathbf{V},\mathbf{\Sigma},\mathbf{\zeta}}\sum_{i, j=1}^{M, C}{u}_{ij}||\mathbf{\Sigma}_j^{-\frac{1}{2}}(\bm{x}_i\!-\!\bm{v}_j)||_2^p\!+\!{u}_{ij}\text{log}|\mathbf{\Sigma}_j|\!+\!\lambda {u}_{ij}\text{log}\frac{{u}_{ij}}{\zeta_j},\\
s.t.\quad\sum_{j=1}^C{u}_{ij}=1,\quad\sum_{j=1}^C\zeta_{j}=1,\quad u_{ij}, \zeta_{j}\in(0,1).
\end{gathered}
\label{eq:loss_cluster}
\end{small}
\end{equation}where $\mathbf{\Sigma}_j\in \mathbb{S}^{n}_{++}\triangleq\{\mathbf{A}\in\mathbb{R}^{n\times n}|\bm{x}^{T}\mathbf{A}\bm{x}>0, \forall \bm{x}\in\mathbb{R}^n\}$ denotes the fuzzy covariance matrix of the $j$-th class, with the corresponding determinant equivalent to $|\mathbf{\Sigma}_j|\in\mathbb{R}_{+}$. $\lambda\in\mathbb{R}_{+}$ is a regularization parameter. $\bm{\zeta}=[\zeta_1, \zeta_2, \cdots, \zeta_C]^{\top}\in\mathbb{R}^C$ are the \emph{clustering size controlling variables} against unbalanced data distributions~\cite{miyamoto2008algorithms}.~\cref{eq:loss_cluster} forms the basis of our method for the fitting of  both rigid and deformable superquadrics. We experimentally demonstrate its superiority over prior approaches.

\section{Proposed Method}\label{sec:method}
In this section, we present specific steps to show how to formulate the superquadric fitting problem as an unsupervised clustering process. 

\subsection{Overview}
We first formally define the superquadric fitting problem in~\cref{sec:problem_formulation}, establishing the mathematical framework for subsequent analysis. Then, in~\cref{sec:parameter_fitting}, we detail the superquadric parameter estimation process, elaborating on the motivation for adopting unsupervised clustering and addressing the challenges of inefficient, non-uniform surface sampling. In~\cref{sec:advantages}, we analyze the clustering framework's merits from a convex optimization perspective, highlighting its inherent mechanism for escaping local minima. Finally, we present the unsupervised optimization pipeline in~\cref{sec:unsupervised_optimization}, deriving analytical closed-form solutions for the fuzzy membership matrix and the covariance matrix to enable efficient and scalable parameter estimation. Finally, in~\cref{sec:cluster_convergence}, we provide a theoretical analysis of the algorithm, establishing its convergence guarantees.

\subsection{Problem Formulation}\label{sec:problem_formulation} Given a set of input point clouds $\mathbf{X}=\{\bm{x}_i\in \mathbb{R}^3\}_{i=1}^m$ sampled from a superquadric surface, potentially contaminated by noise and outliers, as defined in~\cref{subsec:SQ}, our goal is to estimate the optimal parameter set $\widehat{\mathbf{\Theta}}$.
This set defines the underlying superquadric surface \(S_{\widehat{\mathbf{\Theta}}}\) that minimizes the deviation between \(\mathbf{X}\) and \(S_{\widehat{\mathbf{\Theta}}}\).  For simplicity, \(\widehat{\mathbf{\Theta}}\) denotes the parameter vector encompassing both rigid and deformable superquadric cases.

\begin{figure}[t]
\centering
\includegraphics[width=0.87\linewidth]{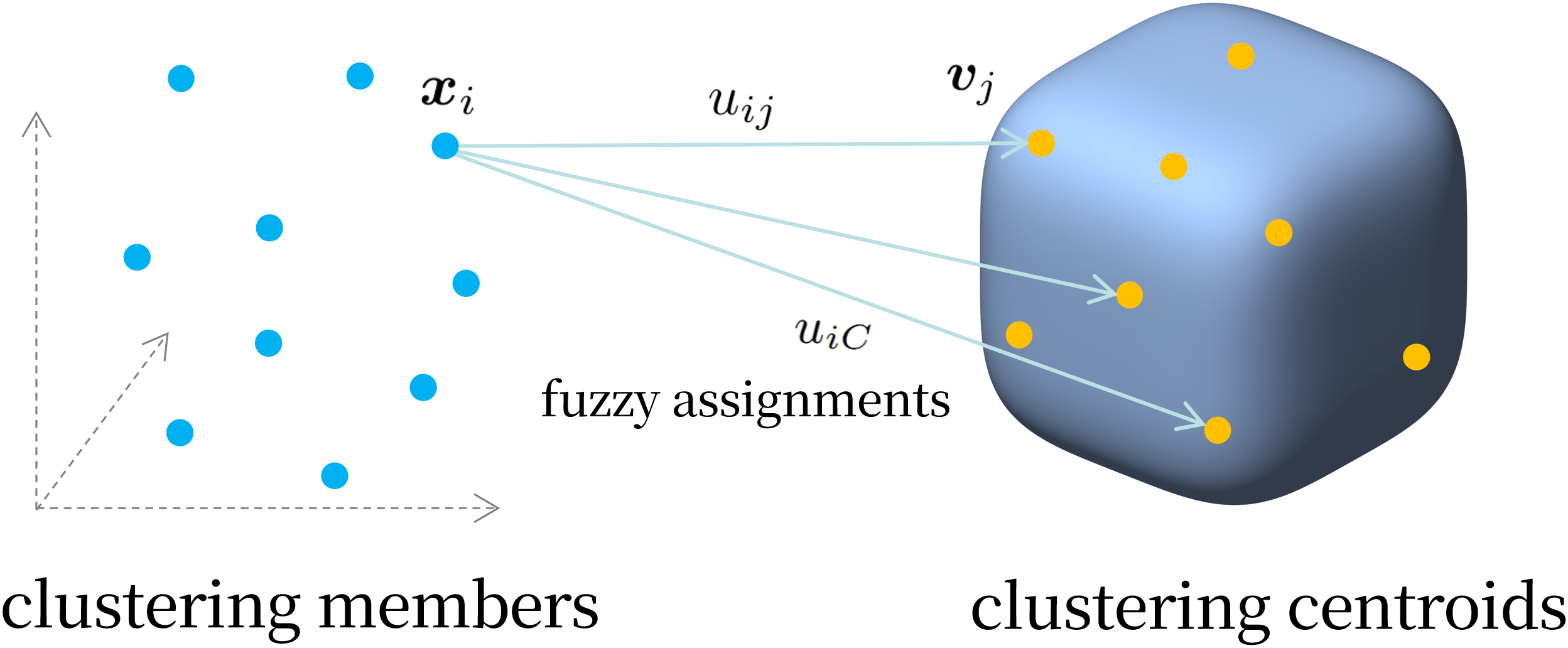}
\caption{Visualization of the fuzzy clustering-inspired superquadric fitting framework. The input point cloud \(\mathbf{X} = \{\bm{x}_i \in \mathbb{R}^3\}_{i=1}^M\) serves as clustering members, while \(\mathbf{V} = \{\bm{v}_j \in \mathbb{R}^3\}_{j=1}^C\) represents the clustering centroids approximating the underlying superquadric surface. Analogous to clustering process, the iterative update of centroids \(\mathbf{V}\) is mathematically equivalent to optimizing the superquadric parameters \(\mathbf{\Theta}\), where each centroid \(\bm{v}_j\) lies on the surface, \ie,  \(S_\mathbf{\Theta}(\bm{v}_j) = 0\).}
\label{fig:clustering_vis}
\end{figure}
\subsection{Parameter Estimate via a Clustering    Framework}\label{sec:parameter_fitting}
\subsubsection{Motivation} During the clustering process, we observe that the spatial positions of clustering centroids \(\mathbf{V} \subset \mathbb{R}^3\) are iteratively updated until the distance between \(\mathbf{V}\) and their corresponding members \(\mathbf{X} \subset \mathbb{R}^3\) is minimized (\ie, \cref{eq:loss_cluster} reaches its minimum). This dynamic update mechanism shares striking similarities with the iterative optimization of superquadrics, where each sample point \(\bm{x}_i \in \mathbb{R}^3\) is associated with its corresponding centroid \(\bm{v}_j \in \mathbb{R}^3\) lying on the potential superquadric surface, \ie, \(S_\mathbf{\Theta}(\bm{v}_j) = 0\). Inspired by this analogy, we propose to reformulate the superquadric fitting problem as an \emph{unsupervised clustering optimization process}:

\begin{equation}
\begin{gathered}
\min_{\mathbf{U},\mathbf{\Theta},\mathbf{\Sigma}}\mathcal{L}(\mathbf{U},\mathbf{\Theta},\mathbf{\Sigma})\begin{aligned}=\sum_{j=1}^C\sum_{i=1}^M{u}_{ij}||\mathbf{\Sigma}_j^{-\frac{1}{2}}(\bm{x}_i-\bm{v}_j(\mathbf{\Theta}))||_2^2\end{aligned} \\
\begin{aligned}+{u}_{ij}\text{log}|\mathbf{\Sigma}_j|+\lambda {u}_{ij}(\text{log}{{u}_{ij}}+\text{log}C),\end{aligned} \\
s.t.~|\mathbf{\Sigma}_j|=\theta_j, \sum_{j=1}^C{u}_{ij}=1, u_{ij}\in[0,1]. 
\end{gathered}
\label{eq:loss_reg}
\end{equation} We consider \(\mathbf{V} = \{\bm{v}_j \in \mathbb{R}^3\}_{j=1}^C\) as the clustering centroids that approximate the potential  continuous superquadric surface, and the input point clouds \(\mathbf{X} = \{\bm{x}_i \in \mathbb{R}^3\}_{i=1}^M\) as the clustering members. Here, we set \(p = 2\) and \(\zeta_j \equiv 1/C\) to simplify the computation, which also ensures closed-form solutions for the fuzzy membership degree matrix \(\mathbf{U} \in \mathbb{R}^{M \times C}\) and the covariance matrix \(\mathbf{\Sigma}_j \in \mathbb{S}^{n}_{++}\) as derived in the following. For clarity,~\cref{fig:clustering_vis} visualizes the clustering-inspired fitting process.

\subsubsection{How to Avoid Surface Sampling?} 
We note that computing the fuzzy membership degree matrix $\mathbf{U}$ in~\cref{eq:loss_reg} requires pairwise distance computations between each clustering sample $\bm{x}_i\in\mathbb{R}^3$ and all centroids $\bm{v}_j\in\mathbb{R}^3$ for $ j=1, \cdots, C$. Nevertheless, there is no rigorous or theoretically sound method for equal-distance sampling of $\bm{v}_j$, whether via parametric or explicit approaches, from the underlying superquadric $S_\mathbf{\Theta}$~\cite{liu2022robust}. Directly sampling $C$ points from $S_\mathbf{\Theta}$ can lead to \emph{non-uniform distribution} and \emph{inefficient sampling} problems~\cite{vaskevicius2017revisiting}. To address these challenges, we derive and analyze the relationship between pairwise distance computations and the geometrically closest (\ie, orthogonal) distance.

\begin{lemma}[\cite{charalambous1976non}]
Given a set of real numbers  $d_{i}\geq0$  for  $i=1, \cdots, N-1$, and a variable  $d_{N}\in\mathbb{R}_{+}$, then,
\begin{equation}
J=\left(\sum_{i=1}^{N-1} d_{i}^{-v}+d_{N}^{-v}\right)^{-1 / v}, v \geq 1    
\end{equation}
decreases as  $d_{N}$  decreases.
\label{lemma:1}
\end{lemma}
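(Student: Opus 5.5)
The plan is a direct monotonicity argument, treating $J$ as a composition of monotone scalar maps in the single variable $d_N$, with $d_1,\dots,d_{N-1}$ held fixed. First I would fix the convention for vanishing entries. If some $d_i=0$ with $i\le N-1$, then $d_i^{-v}=+\infty$, so the sum is $+\infty$ and $J=0$ for every $d_N$. In that case $J$ is constant in $d_N$, so it is trivially non-increasing as $d_N$ decreases; I would read the claim there in the weak sense, or simply exclude it. The substantive case is $d_i>0$ for all $i$, where every term is finite and positive.

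In that case, write $A=\sum_{i=1}^{N-1} d_i^{-v}\ge 0$, a constant independent of $d_N$. Then $J(d_N)=\phi\bigl(A+\psi(d_N)\bigr)$, where $\psi(t)=t^{-v}$ and $\phi(s)=s^{-1/v}$ on $(0,\infty)$. The map $\psi$ is strictly decreasing on $\mathbb{R}_{+}$ since $v\ge 1>0$. Adding the constant $A$ preserves strict monotonicity. The map $\phi$ is strictly decreasing on $(0,\infty)$ since $-1/v<0$. A composition of two strictly decreasing maps is strictly increasing, so $J$ is strictly increasing in $d_N$. Hence decreasing $d_N$ decreases $J$.

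Equivalently, one can differentiate:
\begin{equation*}
\frac{\partial J}{\partial d_N}=\Bigl(-\tfrac{1}{v}\Bigr)\bigl(A+d_N^{-v}\bigr)^{-1/v-1}\cdot\bigl(-v\,d_N^{-v-1}\bigr)=\bigl(A+d_N^{-v}\bigr)^{-1/v-1}d_N^{-v-1}>0 .
\end{equation*}
This confirms strict positivity on $\mathbb{R}_{+}$.

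I would also record two bounds, since they are what the lemma is presumably used for afterwards. From $A+d_N^{-v}\ge d_N^{-v}$ we get $J\le d_N$, and likewise $J\le d_i$ for every $i$. So $J\le\min_i d_i$, and $J\to 0$ as $d_N\to 0$. This is what links $J$ to the orthogonal (closest-point) distance.

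The only real obstacle is the degenerate case of zero entries among the fixed $d_i$, where strict decrease fails. I would handle it by stating the convention $0^{-v}=+\infty$ explicitly and noting that $J\equiv 0$ there. Note also that the hypothesis $v\ge1$ is stronger than needed: the argument works for any $v>0$.
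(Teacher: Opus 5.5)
Your proof is correct. The paper does not prove this lemma; it simply imports it from Charalambous by citation, so there is no competing argument to compare against. With $A=\sum_{i=1}^{N-1}d_i^{-v}$ held fixed, writing $J$ as a composition of the two strictly decreasing maps $t\mapsto t^{-v}$ and $s\mapsto s^{-1/v}$ is exactly the right elementary verification. Your derivative $\partial J/\partial d_N=(A+d_N^{-v})^{-1/v-1}d_N^{-v-1}>0$ confirms it.

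Your two refinements of the statement as written are also accurate:
\begin{itemize}
\item If some fixed $d_i=0$, then with the convention $0^{-v}=+\infty$ you get $J\equiv 0$, so only weak monotonicity holds.
\item The argument needs only $v>0$, not $v\ge 1$.
\end{itemize}

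One connection is worth noting. Your bound $J\le\min_i d_i$ is the upper half of Lemma~2. That sandwich bound, passed to the limit by the squeeze theorem, is what the proof of Proposition~1 actually uses. Lemma~1 is cited alongside it there but plays no essential role.
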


\begin{lemma}[\cite{charalambous1976non}]
If $d_i \geq 0$ for $i\in[N]$, and $v>0$, then,
\begin{small}
\begin{equation}
\min _{i \in[N]} \frac{d_i}{N^{1 / v}} \leq\left(\sum_{i=1}^N d_i^{-v}\right)^{-1 / v} \leq \min _{i \in[N]} d_i.
\end{equation} 
\end{small}
\label{lemma:2}
\end{lemma}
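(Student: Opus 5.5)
Write $m=\min_{i\in[N]} d_i$ and $J=\bigl(\sum_{i=1}^N d_i^{-v}\bigr)^{-1/v}$. The plan is to bound the sum $\sum_{i=1}^N d_i^{-v}$ above and below in terms of $m^{-v}$, and then pass to $J$ using the fact that $t\mapsto t^{-1/v}$ is strictly decreasing on $(0,\infty)$ when $v>0$. This monotonicity reverses the inequalities, which is exactly what converts bounds on the sum into the two-sided bound in Lemma~\ref{lemma:2}.

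First assume all $d_i>0$, so every term $d_i^{-v}$ is finite and positive.

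For the lower bound on the sum, keep only the index $i^\ast$ attaining the minimum. Since all other terms are nonnegative,
\[
\sum_{i=1}^N d_i^{-v}\ \ge\ d_{i^\ast}^{-v}=m^{-v}.
\]
Raising both sides to the power $-1/v$ reverses the inequality and gives $J\le m$, which is the right-hand inequality.

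For the upper bound on the sum, use $d_i\ge m$ together with the fact that $t\mapsto t^{-v}$ is decreasing. This gives $d_i^{-v}\le m^{-v}$ for every $i$, hence
\[
\sum_{i=1}^N d_i^{-v}\le N m^{-v}.
\]
Applying $t\mapsto t^{-1/v}$ again reverses the inequality and yields $J\ge (N m^{-v})^{-1/v}=N^{-1/v}m$. This is the left-hand inequality, because $\min_i d_i/N^{1/v}=m/N^{1/v}$.

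The only real obstacle is the degenerate case where some $d_i=0$, since then $d_i^{-v}$ is undefined as a real number. I would handle it by the standard convention that $0^{-v}=+\infty$, so the sum is $+\infty$ and $J=0$ (equivalently, by taking the limit as $d_i\to0^+$, which Lemma~\ref{lemma:1} shows is monotone). In that case $m=0$ as well, so both sides equal $0$ and the inequalities hold trivially.

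I would also remark that both bounds are tight. The right-hand inequality is approached when a single $d_i$ is much smaller than the others, and the left-hand inequality is an equality when all $d_i$ are equal. This justifies reading $J$ as a smooth surrogate for the minimum distance, which is how the paper uses it in the following section.
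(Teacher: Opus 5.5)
Your proof is correct and complete. The paper gives no argument of its own for this lemma: it imports it directly from \cite{charalambous1976non} and only uses it, with $v=1/(r-1)\to\infty$, inside the squeeze argument for Proposition~1. Your argument fills that gap with the standard elementary proof: sandwich the sum as $m^{-v}\le\sum_i d_i^{-v}\le N m^{-v}$, then apply the order-reversing map $t\mapsto t^{-1/v}$.

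One small point concerns the degenerate case. Invoking Lemma~1 there is unnecessary, and as stated in the paper Lemma~1 only covers $v\ge 1$, whereas the present lemma allows any $v>0$. The limit is immediate anyway. Since $d_i^{-v}\to+\infty$ as $d_i\to 0^+$ for every $v>0$, the sum blows up and $J\to 0$. Alternatively, your own bound $0<J\le m$ for positive entries, together with $m\to 0$, gives the same conclusion by squeezing.
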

Building upon~\cref{lemma:1} and~\cref{lemma:2}, we establish the equivalence between pairwise distance and closest distance computations through the following proposition.

\begin{proposition}
The sum of weighted pairwise distances from each clustering sample $\bm{x}_i\in \mathbb{R}^n$ to all clustering centroids $\mathbf{V}=\{\bm{v}_j\in\mathbb{R}^n\}$ in \cref{eq:fcm_origin} is equivalent to the closest distance from $\bm{x}_i$ to $\mathbf{V}$ when the fuzzy factor $r\rightarrow1$, i.e., 
\begin{equation}
\lim_{r\rightarrow1}\left(\sum_{j=i}^C(u_{ij})^r||\bm{x}_i-\bm{v}_j||_2^2\right)= \min_{j\in[C]}||\bm{x}_i-\bm{v}_j||_2^2. 
\end{equation}

\label{proposition:1}
\end{proposition}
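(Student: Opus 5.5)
Write $d_j=\|\bm{x}_i-\bm{v}_j\|_2^2$ and substitute the closed-form memberships \cref{eq:U_origin} into the weighted sum. Then $u_{ij}=d_j^{-1/(r-1)}/\sum_{k} d_k^{-1/(r-1)}$, assuming for now that all $d_j>0$. Set $v=1/(r-1)$, so that $r\to1^+$ corresponds to $v\to+\infty$.

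The first step is to rewrite the weighted sum in closed form. From $u_{ij}^r d_j = d_j^{-rv}\,d_j/(\sum_k d_k^{-v})^r$ and $rv = v+1$, we get $d_j^{1-rv}=d_j^{-v}$. Summing over $j$ then gives
\begin{equation*}
\sum_{j=1}^C (u_{ij})^r d_j=\Big(\sum_{k=1}^C d_k^{-v}\Big)^{1-r}=\Big(\sum_{k=1}^C d_k^{-v}\Big)^{-1/v},
\end{equation*}
since $1-r=-1/v$. This is the standard fuzzy $c$-means identity: at the optimal $\mathbf{U}$, the objective value for $\bm{x}_i$ equals exactly the generalized harmonic-type mean $J$ appearing in \cref{lemma:1} and \cref{lemma:2}.

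The second step applies \cref{lemma:2} with $N=C$, which gives
\begin{equation*}
C^{-1/v}\min_j d_j \le \Big(\sum_k d_k^{-v}\Big)^{-1/v}\le \min_j d_j.
\end{equation*}
As $v\to\infty$ we have $C^{-1/v}=C^{-(r-1)}\to1$, so the squeeze yields the limit $\min_{j\in[C]}\|\bm{x}_i-\bm{v}_j\|_2^2$. \cref{lemma:1} adds the monotonicity interpretation: shrinking the closest distance shrinks the objective. This justifies replacing the pairwise sum by the orthogonal distance.

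The main obstacle is the degenerate case where some $d_j=0$, i.e.\ $\bm{x}_i$ coincides with a centroid. There \cref{eq:U_origin} is undefined, and the convention is $u_{ij}=1$ for such a $j$ (or a split among the tied zero-distance centroids) with all other memberships zero. In this case the sum equals $0=\min_j d_j$ for every $r$, and the limit holds trivially. I would state this case separately. I would also note that the limit is taken as $r\to1^+$, consistent with $r\in(1,\infty)$, and that the index typo $j=i$ in the sum should read $j=1$.
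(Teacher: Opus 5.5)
Your proposal is correct and follows the paper's proof. You substitute the closed-form memberships to collapse the weighted sum to $\bigl(\sum_{j} d_j^{-v}\bigr)^{-1/v}$ with $v=1/(r-1)$, then squeeze via Lemma~2 using $C^{-1/v}\to 1$. Your separate treatment of the case where some $d_j=0$ and your insistence on the one-sided limit $r\to1^{+}$ are sensible additions that the paper leaves implicit.
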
We present the theoretical proof of~\cref{proposition:1} in the \emph{Supplementary Material}. From~\cref{proposition:1}, we mitigate the need for iterative point sampling by \revise{reformulating it as the closest distance computation}. We utilize the closed-form radial distance that refines the mean-square inside-outside measure~\cite{gross1988error} to approximate the geometric distance, as an analytical solution for the exact point to superquadric distance remains intractable. For each sample point \(\boldsymbol{x}_i\), its corresponding closest centroid \(\boldsymbol{v}_i(\mathbf{\Theta})\) on the superquadric \(\mathcal{S}(\mathbf{\Theta})\) can be analytically derived as:
\begin{equation}
\bm{v}_i(\mathbf{\Theta})=\bm{x}_i-\left(1\!-\!F^{-\frac{\epsilon_1}{2}}\left(\mathcal{D}^{-1} (\bm{x}_i)\right)\right)\mathcal{D}^{-1} (\bm{x}_i)\in\mathbb{R}^3,   
\label{eq:closest_point}
\end{equation} where $\mathcal{D}^{-1}$
represents the \emph{inverse mapping} of the spatial and deformable transformations (\ie, rigid, tapering, or bending) applied to the superquadrics (details are presented in the \emph{Supplementary Material}).~\cref{eq:closest_point} enables direct computation of the point \(\bm{v}_i(\mathbf{\Theta})\) on the superquadric surface without recourse to discrete sampling.  Then, \cref{eq:loss_reg} can be simplified to 
\begin{equation}
\begin{gathered}
\min_{\mathbf{u},\mathbf{\Theta},\mathbf{\Sigma}}\mathcal{L}({\mathbf{u},\mathbf{\Theta},\mathbf{\Sigma}}) \!=\!\sum_{i=1}^M{u}_{i}||\mathbf{\Sigma}_i^{-\frac{1}{2}}(\bm{x}_i\!-\!\bm{v}_i(\mathbf{\Theta}))||_2^2\!+\!{u}_{i}\text{log}|\mathbf{\Sigma}_i|\\
+\lambda {u}_{i}(\text{log}{{u}_{i}}+\text{log}M).\\
s.t.~|\mathbf{\Sigma}_j|=\theta_j, ~ u_{i}\in[0,1], 
\end{gathered}
\label{eq:loss_fit_final}
\end{equation} where $\mathbf{u}=[u_1, u_2, \cdots, u_M]^{\top}\in\mathbb{R}^{M}$ is the degenerated fuzzy membership degree vector.

\subsection{Advantages of the Clustering-Inspired Method}\label{sec:advantages}
We further analyze the advantages of the clustering-induced fitting method in \cref{eq:loss_fit_final}. From an optimization perspective, the term \( f(u_i) = u_i \log(u_i) \) is strictly convex with respect to \( u_i \), as its second derivative
$\nabla^2 f(u_i) = \frac{1}{u_i} > 0,$ where \( \nabla \) denotes the gradient operator. Consequently, \( f(u_i) \) introduces \emph{convexity} to the objective function defined in \cref{eq:loss_fit_final}, which helps to guide the optimization away from local minima. The parameter 
\(\lambda\) controls the influence of this convex term, thereby regulating the landscape of the solution space. We conduct an ablation study to investigate the impact of \(\lambda\) in the subsequent experiments and show that increasing $\lambda$ \revise{effectively facilitates escaping} from local minima. Moreover, $f(u_i)$ functions as a \emph{barrier function}, enforcing the constraint that $u_i$ stays within the interval $[0, 1]$. This constraint is inherently justified as $f(u_i)$ attains its global minimum within the interval $[0, 1]$.

\subsection{Unsupervised Optimization}\label{sec:unsupervised_optimization}
We adopt the \emph{coordinate descent} (\emph{alternating optimization}) framework to solve for $\mathbf{u}$, $\mathbf{\Theta}$, and $\mathbf{\Sigma}$ in a decoupled manner. Moreover, our method enables analytical solutions for $\mathbf{u}$ and $\mathbf{\Sigma}$ as derived below, thus guaranteeing efficient implementation.

\begin{figure*}
    \centering
    \includegraphics[width=\linewidth]{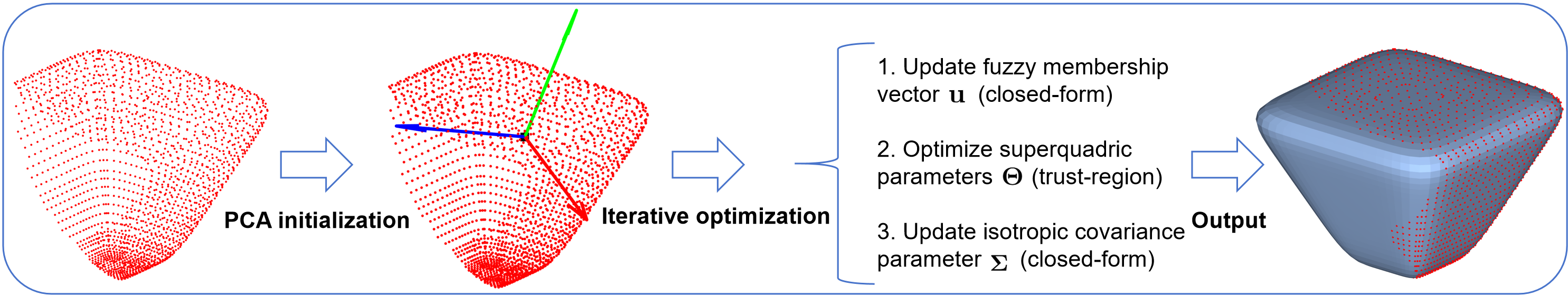}
    \caption{\revise{Flowchart of the proposed superquadric fitting method. Given an input point cloud, we first perform PCA initialization, followed by iterative optimization, where the fuzzy membership vector $\mathbf{u}$, superquadric parameters $\mathbf{\Theta}$, and covariance matrix $\mathbf{\Sigma}$ are updated sequentially. The algorithm outputs the fitted superquadric upon convergence.}}
    \label{fig:flowchart}
\end{figure*}

\subsubsection{Update of $\mathbf{u}$} First, we fix $\mathbf{\Theta}$ and $\mathbf{\Sigma}$ to update $\mathbf{u}$, which reduces to a convex optimization problem. By setting $\frac{\partial\mathcal{L}}{\partial\mathbf{u}}=0$, we obtain the following closed-form solution: 
\begin{equation}
\mathbf{u}=\frac{\exp{(-\mathbf{d}/\lambda)}\odot |\mathbf{\Sigma}|^{-1/\lambda}}{Me}\in\mathbb{R}^M,
\label{eq:U}
\end{equation}where $\mathbf{d}=[d_i]\in \mathbb{R}^{M}$ is a squared Mahalanobis distance vector defined by \(d_{i} =\|\mathbf{\Sigma}_j^{-\frac{1}{2}} (\bm{x}_i - \bm{v}_j(\mathbf{\Theta}))\|_2^2\). The function \(\mathrm{exp}(\cdot)\) denotes the element-wise exponential operator on vectors, \(|\mathbf{\Sigma}|^{-1/\lambda} = [|\mathbf{\Sigma}_1|^{-1/\lambda}, \cdots, |\mathbf{\Sigma}_M|^{-1/\lambda}]^\top \in \mathbb{R}^C\) represents the vector of covariance determinants \(|\mathbf{\Sigma}_j|\) for all clustering centroids, and $\odot$ signifies the element-wise Hadamard product of two vectors.  To ensure \(\mathbf{u} \in [0, 1]^M\) and further enhance the algorithm's robustness against noise and outliers, like~\cite{wells1997statistical}, we incorporate an additional uniform distribution to account for outliers. Consequently, \(\mathbf{u}\) becomes
\begin{equation}
\text{diag}(\mathbf{u} + \mathbf{p})^{-1} \mathbf{u}\in[0, 1]^M,
\label{eq:U_normalization}
\end{equation} where \(\mathrm{diag}(\cdot)\) denotes the diagonal matrix with its argument on the main diagonal, \(\mathbf{p} = [p_i] \in \mathbb{R}^M\), with \(p_i = {w \sqrt{(2\pi)^3 |\Sigma_i|}}/{(1-w)V}\in\mathbb{R}_{\geq0}\). Here, \(V\) represents the volume of the bounding box of the input point samples \(\mathbf{X}\), and \(w \in \mathbb{R}_{\geq 0}\) is the weight factor that balances the influence of points from \(\mathcal{S}_{\mathbf{\Theta}}\) and outliers.

\subsubsection{Update of $\mathbf{\Theta}$} 
After updating $\mathbf{u}$, we proceed to optimize the set of unknown variables $\mathbf{\Theta}$. By omitting the terms that are independent of $\mathbf{\Theta}$ in~\cref{eq:loss_fit_final}, and given that~\cref{eq:closest_point} represents a non-linear transformation, \cref{eq:loss_fit_final} becomes a \emph{weighted non-linear least-squares problem}. Similar to~\cite{liu2022robust}, we adopt the \emph{interior trust region}
method~\cite{coleman1996interior}, which 
ensures both global convergence and local quadratic convergence for optimizing $\mathbf{\Theta}$, particularly for the high-dimensional solution space including shape, location, and deformation parameters. {Compared to the Levenberg–Marquardt scheme~\cite{more2006levenberg}, which guarantees only local convergence with a linear or superlinear rate. The interior trust region approach achieves faster convergence. Moreover, it is more stable because it adaptively controls the search region and maintains a better step size.}
The upper and lower bounds for each parameter setting used in the interior trust region optimization are detailed in the \emph{Supplementary Material.}

\subsubsection{Update of $\mathbf{\Sigma}$} 
Finally, we optimize the fuzzy covariance matrix \(\mathbf{\Sigma}\). For simplicity, we relax each clustering centroid's covariance matrix to be isotropic, \textit{i.e.}, \(\mathbf{\Sigma}_j = \sigma^2 \mathbf{I}\), where \(\mathbf{I} \in \mathbb{R}^{3 \times 3}\) is the identity matrix. This simplication ensures a closed-form solution for the variance \(\sigma^2\):
\begin{equation}
\sigma^2 = \frac{\operatorname{Tr}\left(\mathrm{diag}(\mathbf{u})(\mathbf{X} - \mathbf{V})(\mathbf{X} - \mathbf{V})^\top\right)}{3 \mathbf{u}^\top \mathbf{1}_M},
\label{eq:sigma}
\end{equation}where $\mathbf{1}_M\in\mathbb{R}^M$ represents the all-ones vector. Although the use of non-isotropic covariance matrices leads to a more complex optimization problem, studies have shown that there is no  significant improvement in accuracy~\cite{evangelidis2017joint}. The above parameters are iteratively updated until the algorithm converges. 
\begin{algorithm}[t]
\SetAlgoLined
\KwIn{Point cloud samples $\mathbf{X}=\{\bm{x}_i\in \mathbb{R}^3\}_{i=1}^M$, regularization factor $\lambda\in \mathbb{R}_{\geq0}$ and weight factor $w\in \mathbb{R}_{\geq0}$. }
\KwOut{The estimated superquadric parameter $\mathbf{\widehat{\Theta}}.$}
Perform PCA on $\mathbf{X}$ to initialize the size and orientation parameters.\\
Initialize $\mathbf{t}_0=\texttt{mean}(\mathbf{X})$.\\
Compute the volume $V$ of the bounding box of the input point clouds \(\mathbf{X}\) and initialize $\sigma^2=V^{1/3}$.\\
\While{not converging}{
Update $\mathbf{u}$ by~\cref{eq:U_normalization}.\\
Update $\mathbf{\Theta}$ by interior trust region optimization.\\
Update $\sigma^2$ by~\cref{eq:sigma}.\\
}
\caption{Pseudo-code for Superquadric Fitting }
\label{alg:iteration}
\end{algorithm}

\subsubsection{Optimization Initialization}
The proposed method requires initialization of the unknown parameter $\mathbf{\Theta}$ due to its alternating optimization strategy.  Following~\cite{vaskevicius2017revisiting,liu2022robust}, {we adaptively initialize the size parameters $a_x$, $a_y$, $a_z$ as the median lengths of the point cloud along the $x$, $y$, $z$ axes.} \revise{The rotation matrix}, represented by a set of 3D Euler angles, by performing \emph{Principal Component Analysis} (PCA) on the input point cloud $\mathbf{X}$. The position $\mathbf{t}$ is initialized as the centroid of the point cloud, \ie, $\mathbf{t}_0=\texttt{mean}(\mathbf{X})$. For rigid and tapering superquadrics, the shape parameters are set to $\epsilon_1 = \epsilon_2 = 1$. While for bending superquadrics, we initialize $\epsilon_1=0.01$ and $\epsilon_2=1$. The determinant $|\mathbf{\Sigma}_j| = |\sigma^2 \mathbf{I}|=(\sigma^2)^3$ is geometrically related to the volume $V$ of the bounding box of the input point samples $\mathbf{X}$, therefore we initialize the scale parameter as $\sigma^2 = V^{1/3}$. {We summarizes the optimization process in~\cref{alg:iteration} and illustrate it with the flowchart in~\cref{fig:flowchart} for ease of understanding.}

\subsection{Theoretical Convergence Analysis}\label{sec:cluster_convergence}
Subsequently, we present the theoretic analysis of the algorithm convergence.

\begin{proposition}
The proposed superquadric fitting method summarized in~\cref{alg:iteration} monotonically decreases the objective function defined in \cref{eq:loss_fit_final} in each iteration step until convergence. 
\end{proposition}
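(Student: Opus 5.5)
We use the standard block coordinate descent argument. Write $\mathcal{L}^{(t)}=\mathcal{L}(\mathbf{u}^{(t)},\mathbf{\Theta}^{(t)},\mathbf{\Sigma}^{(t)})$ for the objective of \cref{eq:loss_fit_final} at the start of iteration $t$. We will show the chain
\begin{equation*}
\mathcal{L}(\mathbf{u}^{(t)},\mathbf{\Theta}^{(t)},\mathbf{\Sigma}^{(t)})\geq\mathcal{L}(\mathbf{u}^{(t+1)},\mathbf{\Theta}^{(t)},\mathbf{\Sigma}^{(t)})\geq\mathcal{L}(\mathbf{u}^{(t+1)},\mathbf{\Theta}^{(t+1)},\mathbf{\Sigma}^{(t)})\geq\mathcal{L}^{(t+1)}.
\end{equation*}
Each inequality corresponds to one line of \cref{alg:iteration}. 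We then add a lower bound on $\mathcal{L}$ and conclude by the monotone convergence theorem.

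\textbf{The $\mathbf{u}$-step.} With $\mathbf{\Theta},\mathbf{\Sigma}$ fixed, $\mathcal{L}$ is separable in the $u_i$. Each summand $u_i d_i+u_i\log|\mathbf{\Sigma}_i|+\lambda u_i(\log u_i+\log M)$ is strictly convex on $[0,1]$, since its second derivative is $\lambda/u_i>0$ (\cref{sec:advantages}). Hence the stationary point \cref{eq:U} is the unique minimizer, and the first inequality holds.

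The main obstacle is the outlier normalization \cref{eq:U_normalization}, which is not literally the stationary point of \cref{eq:loss_fit_final}. We would try to handle it by interpreting the normalized $\mathbf{u}$ as the exact minimizer of the same objective augmented by a constant uniform outlier component. Concretely, we would include a fixed outlier membership term with density $w/((1-w)V)$, so that $u_i+u_i^{\text{out}}=1$ and the minimizer becomes $u_i/(u_i+p_i)$. Equivalently, this is the E-step of an EM scheme for a Gaussian–uniform mixture, where monotonic decrease of the free energy is standard (cf.~\cite{wells1997statistical}). We will state that convergence is proved for this augmented objective, which coincides with \cref{eq:loss_fit_final} up to $\mathbf{\Theta}$-independent terms in the other two steps.

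\textbf{The $\mathbf{\Theta}$-step.} With $\mathbf{u},\mathbf{\Sigma}$ fixed, only the weighted least-squares term depends on $\mathbf{\Theta}$. The interior trust region method~\cite{coleman1996interior} accepts a step only if it yields sufficient actual reduction relative to the predicted reduction, and it is warm-started at $\mathbf{\Theta}^{(t)}$. The returned iterate therefore satisfies the second inequality, even though it need not be a global minimizer. If a trust region iteration fails to decrease, it simply returns $\mathbf{\Theta}^{(t)}$, and the inequality holds with equality.

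\textbf{The $\mathbf{\Sigma}$-step.} With $\mathbf{\Sigma}_j=\sigma^2\mathbf{I}$ and the other blocks fixed, the objective in $s=\sigma^2$ becomes $a/s+3b\log s$ plus a constant, where $a=\sum_i u_i\|\bm{x}_i-\bm{v}_i\|^2$ and $b=\mathbf{u}^\top\mathbf{1}_M$. This function is minimized uniquely at $s=a/(3b)$, which is exactly \cref{eq:sigma}. It is strictly convex in $1/s$, and tends to $+\infty$ both as $s\to0^+$ (when $a>0$) and as $s\to\infty$. This gives the third inequality.

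\textbf{Boundedness and convergence.} On the admissible set the entropy term $u\log u$ is bounded below by $-1/e$, and the distance terms are nonnegative. For the term $\log\sigma^2$, we would restrict to $\sigma^2\geq\sigma_{\min}>0$, or alternatively use that $a/s+3b\log s\geq 3b\log(a/(3b))+3b$. Either way, $\{\mathcal{L}^{(t)}\}$ is bounded below, provided the residual $a$ stays bounded away from zero; otherwise we impose $\sigma_{\min}$. A nonincreasing sequence that is bounded below converges, which proves the claim.
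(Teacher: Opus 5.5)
Your route is the paper's: a three-block coordinate-descent chain in which each line of \cref{alg:iteration} is shown not to increase \cref{eq:loss_fit_final}. You are more careful than the paper in three places.
\begin{itemize}
\item For the $\mathbf{\Theta}$-step you use the trust-region acceptance test. The paper cites ``global convergence'', which is about stationarity of limit points, not monotonicity.
\item For the $\sigma^2$-step you correctly note that $a/s+3b\log s$ is convex in $1/s$, not in $s=\sigma^2$ as the paper asserts.
\item Your lower-bound remark, that $\mathcal{L}\to-\infty$ is possible as the residual vanishes, covers something the paper skips.
\end{itemize}
One small fix in the $\mathbf{u}$-step: \cref{eq:U} can exceed $1$ once $\sigma^2$ is small. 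The minimizer over $[0,1]$ is then its projection onto $[0,1]$, not the stationary point, though descent still holds.

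The genuine gap is the step you flag yourself. The paper's proof silently treats the $\mathbf{u}$-update as the exact minimizer \cref{eq:U}, but \cref{alg:iteration} actually uses \cref{eq:U_normalization}. Your augmented-objective repair does not close this.
\begin{itemize}
\item Take the augmentation you describe, per point $uA_i+(1-u)B_i+\lambda[u\log u+(1-u)\log(1-u)]$ with $A_i=d_i+\log|\mathbf{\Sigma}_i|+\lambda\log M$, so the $\mathbf{\Theta}$-step is untouched. Its $u$-minimizer is $e^{-A_i/\lambda}/(e^{-A_i/\lambda}+e^{-B_i/\lambda})$.
\item \cref{eq:U_normalization} equals $e^{-A_i/\lambda}/(e^{-A_i/\lambda}+e\,p_i)$. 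Matching the two forces $B_i=-\lambda(1+\log p_i)$.
\item Because $p_i\propto\sqrt{|\mathbf{\Sigma}_i|}$, this $B_i$ contains $-\tfrac{\lambda}{2}\log|\mathbf{\Sigma}_i|=-\tfrac{3\lambda}{2}\log\sigma^2$. So the outlier term is not constant in the $\sigma^2$-step.
\item In that step the augmented objective is $a/s+\bigl(3b-\tfrac{3\lambda}{2}(M-b)\bigr)\log s$. Its minimizer is not \cref{eq:sigma}, and it is even unbounded below as $s\to\infty$ when $b<\tfrac{\lambda}{2}(M-b)$. So the $\sigma^2$-update can increase it.
\item If you instead keep the outlier density genuinely fixed, $B_i$ is constant. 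The $u$-minimizer then has a $\mathbf{\Sigma}$-independent $p_i$, i.e.\ it is not \cref{eq:U_normalization}.
\end{itemize}

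The EM reading fails for the same reason. $p_i=w\sqrt{(2\pi)^3|\mathbf{\Sigma}_i|}/((1-w)V)$ is the Gaussian--uniform E-step constant only when paired with the numerator $e^{-d_i/2}$. \cref{eq:U} instead supplies $e^{-d_i/\lambda}|\mathbf{\Sigma}_i|^{-1/\lambda}/(Me)$ with $\lambda=3$. So there is no single free energy for which both \cref{eq:U_normalization} and \cref{eq:sigma} are exact block minimizers.

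Consequently your argument, like the paper's, only proves monotone decrease for the variant of \cref{alg:iteration} whose $\mathbf{u}$-step is \cref{eq:U} projected to $[0,1]$. Covering the normalized update as implemented would need either a mutually consistent set of update rules or a different Lyapunov function, and neither is supplied.
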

\begin{proof}
\cref{alg:iteration} takes the alternating optimization strategy to update $\mathbf{u}$, $\mathbf{\Theta}$, and ${\sigma}^2$, separately. Therefore, for the objective function in~\cref{eq:loss_fit_final},~\cref{alg:iteration} will converge if 
each variable update leads to a non-increasing function value. Let $\hat{\mathbf{u}}_{t}$, $\widehat{\mathbf{\Theta}}_{t}$, and $\hat{{\sigma}}^2_{t}$ denote the variables at the $t$-th iteration. Then, in the $(t+1)$-th iteration, when we fix other variables to update $\mathbf{u}$,~\cref{eq:loss_fit_final} degenerates to a convex problem, and thus we can obtain a global optimal $\mathbf{u}_{t+1}$ for~\cref{eq:loss_fit_final}, which also means $\mathbf{u}_{t+1}$ can better decrease the objective function in~\cref{eq:loss_fit_final} than $\mathbf{u}_{t}$. When we fix $\mathbf{u}$ and ${{\sigma}^2}$ to update $\mathbf{\Theta}$, as the trust region method guarantees global convergence~\cite{powell1975convergence,shultz1985family}, $\mathbf{\Theta}_{t+1}$ yielding a lower objective function value in~\cref{eq:loss_fit_final}  than $\mathbf{\Theta}_{t}$. Similarly, when we update ${\sigma}^2$ with fixed 
$\mathbf{u}$ and $\mathbf{\Theta}$,~\cref{eq:loss_fit_final} also degenerates to a convex problem, yielding a global optimal ${\sigma}^2_{t+1}$ that further decreases the objective function~\cref{eq:loss_fit_final} relative to \(\sigma^2_{t}\). Since each update step decreases the objective function,~\cref{alg:iteration} converges monotonically. This completes the proof.
\end{proof}

We perform experiments to show the convergence behavior of the designed superquadric fitting algorithm in the following.

\section{Experimental Evaluations}\label{sec:experiments}
In this section, we conduct extensive experiments to validate the performance of the designed superquadric fitting method for both rigid and deformable objects. We first detail the implementation setup in \cref{sec:experiment_initialization} and define the evaluation metric in \cref{subsec:evaluation_metric}. Subsequently, we assess the fitting accuracy \revise{and robustness} for rigid superquadrics in \cref{sec:experiment_rigid} and  for deformable superquadrics in \cref{sec:experiment_deformable}. The method is further validated on real-world scanned data in \cref{sec:experiment_real}. We then analyze the algorithm's convergence behavior in \cref{sec:experiment_convergence} and conduct an ablation study on the influence of the $u_i\log u_i$ regularization term for escaping local minima in \cref{sec:experiment_ablation}. Finally, we demonstrate the method's applicability to type-specific primitive fitting in \cref{sec:type_specific_fitting} and its versatility in downstream applications including complex shape \revise{approximation}, geometry editing, and medical modeling in \cref{sec:experiment_applications}.

\subsection{Implementation Details}\label{sec:experiment_initialization}
Given a set of samples \(\mathbf{X} = \{\bm{x}_i \in \mathbb{R}^3\}_{i=1}^{M}\), we first normalize the point coordinates to increase the numerical stability for optimization. However, the evaluation is still based on the original input by denormalization. During optimization, \revise{we fix the regularization factor \(\lambda = 3\) in~\cref{eq:loss_fit_final}}, which yields impressive and stable results across various settings. All algorithms and experiments are implemented in MATLAB and executed on a laptop with an 11th Gen Intel(R) Core(TM) i7-11800H(2.30GHz) processor. We leverage publicly available implementations of baseline approaches for assessment, with their parameters either fine-tuned by us or configured to the values recommended by the authors to achieve optimal performance. Concrete parameter settings and implementation details of baseline
approaches are reported in the \emph{Supplementary Material}.

\subsection{Evaluation Metric}\label{subsec:evaluation_metric}
As in~\cite{liu2022robust}, we employ the following metric to quantitatively evaluate the efficacy of the fitted superquadric \(\mathcal{S}_{\widehat{\mathbf{\Theta}}}\):
\begin{equation}
dist(\mathbf{X}, \mathcal{S}_{\widehat{\mathbf{\Theta}}}) = \mathbb{E}_{\bm{s}_i\sim\mathcal{S}_{\widehat{\mathbf{\Theta}}}}\|\bm{x}_i - \bm{s}_i\|_2,
\label{eq:metric}
\end{equation}where \(\bm{s}_i = \min_{k\in[K]} \|\bm{x}_i - \bm{s}_k\|_2\) and \(\{\bm{s}_k\}_{k=1}^K\in \mathcal{S}_{\widehat{\mathbf{\Theta}}}\) represent a set of densely and evenly sampled points (using nearly equal-distance sampling) on the fitted superquadric \(\mathcal{S}_{\widehat{\mathbf{\Theta}}}\).

\begin{figure*}[t]
    \centering
\includegraphics[width=0.96\linewidth]{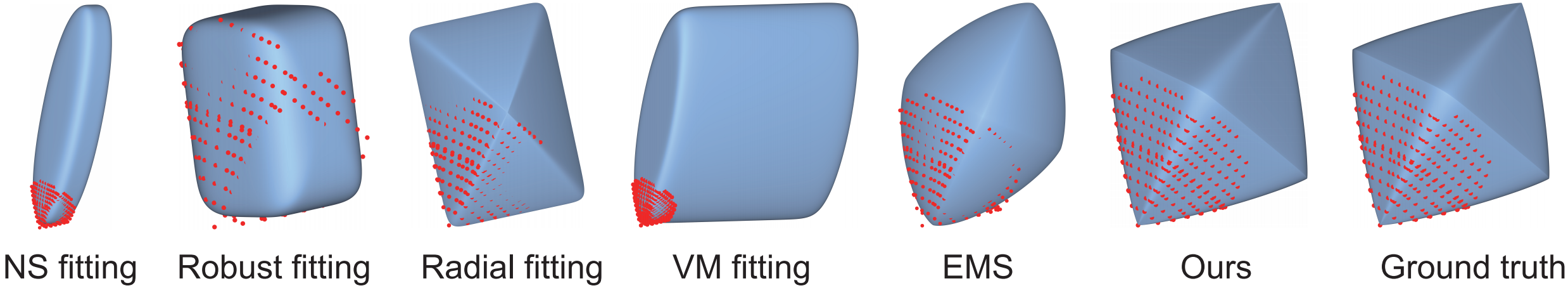}
\caption{Qualitative comparisons of rigid superquadric fitting under severe occlusion with a partial ratio of $r=0.3$. Red point clouds represent the input partial data. Our method achieves most accurate fitting than competing approaches.}
\label{fig:rigid_test_vis}
\end{figure*}

\begin{figure}[t]
\centering
\includegraphics[width=0.495\linewidth]{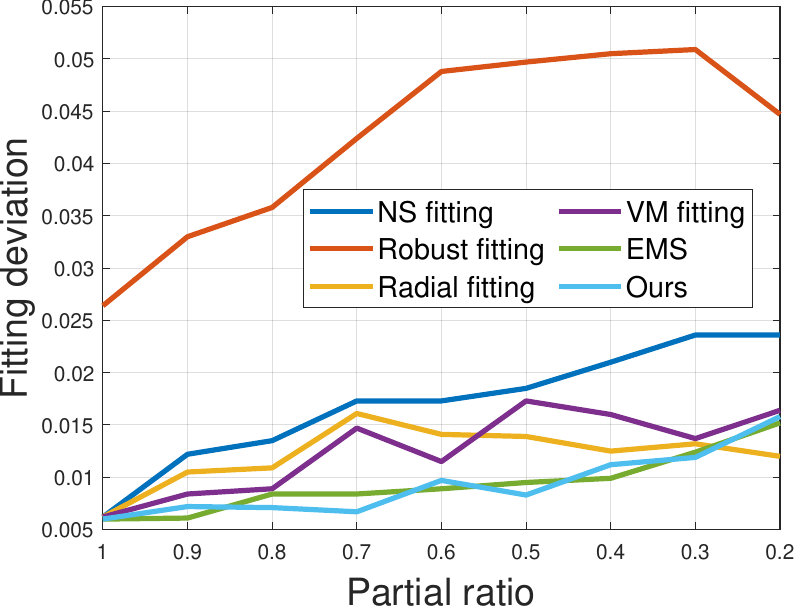}
\includegraphics[width=0.48\linewidth]{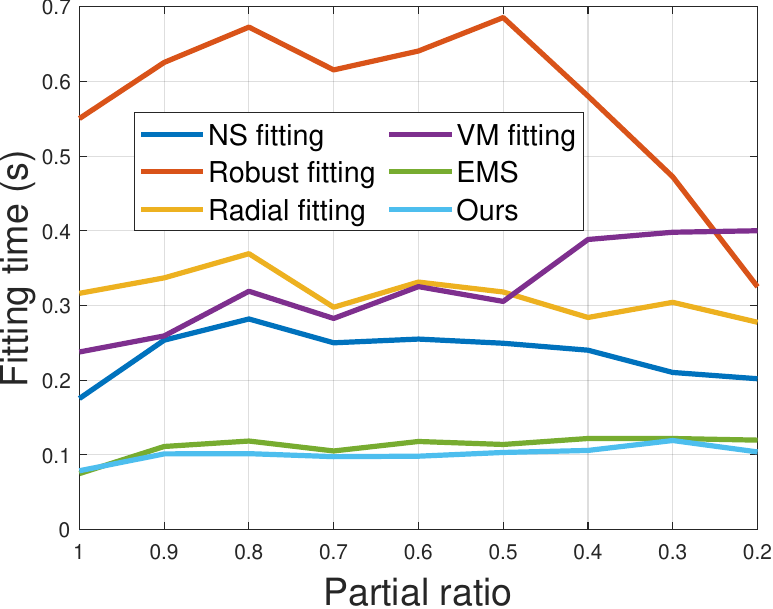}
\caption{Quantitative comparisons of accuracy (left) and efficiency (right) for rigid superquadric fitting  under occlusion disturbances. We report the average result across 100 random tests for each partial ratio.}
\label{fig:rigid_test}
\end{figure}

\begin{figure}[t]
\centering
\includegraphics[width=0.495\linewidth]{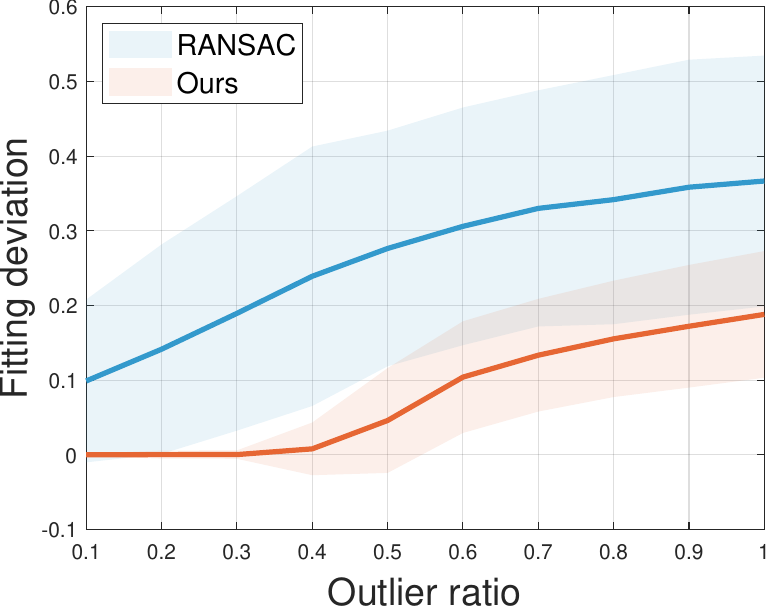}
\includegraphics[width=0.495\linewidth]{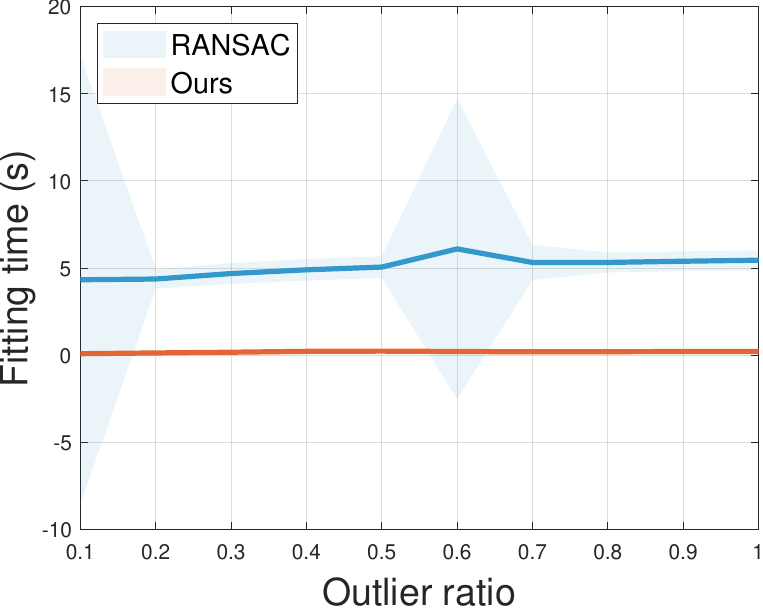}
\caption{\revise{Robustness comparison in terms of accuracy (left) and efficiency (right) between RANSAC and the proposed method under increasing outlier ratios. For each outlier ratio, we report the average result over 1,000 random trials, along with the standard deviation (shown as the bandwidth).}}
\label{fig:rigid_test_outlier}
\end{figure}
\begin{figure*}[t]
    \centering
\includegraphics[width=\linewidth]{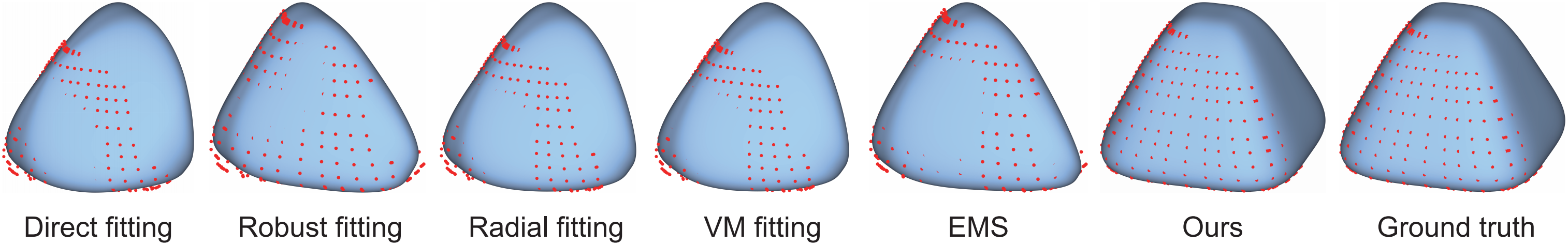}
\caption{Qualitative comparisons of deformable superquadric fitting under tapering deformations with occlusion.}
\label{fig:tapering_test_vis}
\end{figure*}
\begin{figure*}[t]
    \centering
\includegraphics[width=\linewidth]{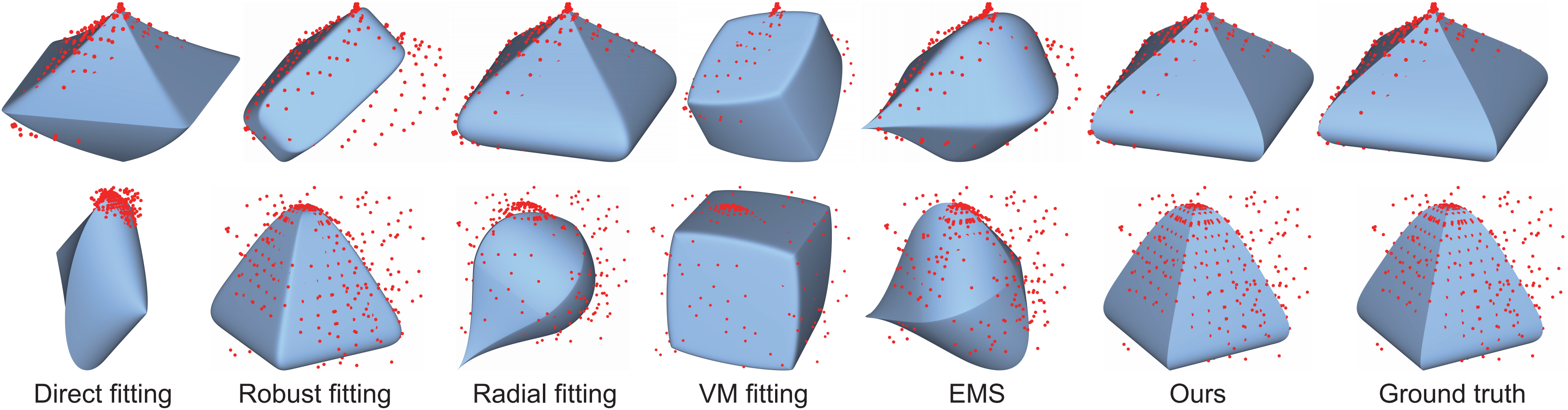}
\caption{Robustness comparisons of deformable tapering superquadric fitting under noise (top) and outlier (bottom) disturbances. As observed, the proposed method delivers highly accurate and stable fittings.}
\label{fig:tapering_test_robust_vis}
\end{figure*}
\subsection{Assessment of Rigid Superquadric Fitting}\label{sec:experiment_rigid}
\subsubsection{Parameter Setting} \label{subsubsec:rigid_parameter_setting}
To assess the proposed method on rigid superquadric fitting, particularly under \emph{partially visible data}, a common challenge in real-world scenarios, we consider various partial ratios \( r \in [0.2, 1] \) with a step size of \( \Delta r = 0.1 \). This means only $r\times 100\%$ of the superquadric's surface is accessible for fitting. For each partial ratio, we randomly generate 100 superquadrics, resulting in a total of 900 occluded superquadrics for evaluation. Analogous to~\cite{liu2022robust}, the parameters of the generated superquadrics are determined using the following strategy: The shape parameters \(\epsilon_1\) and \(\epsilon_2\) are uniformly sampled from the interval \((0, 2]\), \textit{i.e.}, $(\epsilon_1, \epsilon_2) \in (0, 2]^2$, while the size parameters \(a_x, a_y, a_z\) are sampled from the interval \([0.5, 3]\). The rotation matrix \(\mathbf{R}\) is determined by randomly sampling a rotation axis on the unit sphere, and the translation vector \(\mathbf{t}\) is sampled from the interval \([-1, 1]^3\). All test samples are generated by evenly sampling points on the random superquadric surface with a constant interval of $0.2$.

\subsubsection{Quantitative Results} The quantitative comparison results with state-of-the-art and representative approaches, including NS fitting~\cite{vaskevicius2017revisiting}, Robust fitting~\cite{hu1995robust}, Radial fitting~\cite{gross1988error}, VM fitting~\cite{solina1990recovery}, and EMS~\cite{liu2022robust}, are reported in~\cref{fig:rigid_test}. For each partial ratio $r$, we report the average fitting error over 100 random superquadrics and the average computational time for each method. As observed, all approaches exhibit increasing fitting error with higher occlusion levels. However, our method consistently achieves the best or comparable fitting accuracy relative to competing approaches, while also demonstrating superior efficiency across all levels of occlusion. \cref{fig:rigid_test_vis} presents a qualitative comparison result under severe occlusion (\eg, \( r = 0.3 \)), where our method still delivers the most accurate fitting and the highest fidelity to the ground truth surface.

\begin{figure}[!ht]
\centering
\includegraphics[width=0.489\linewidth]{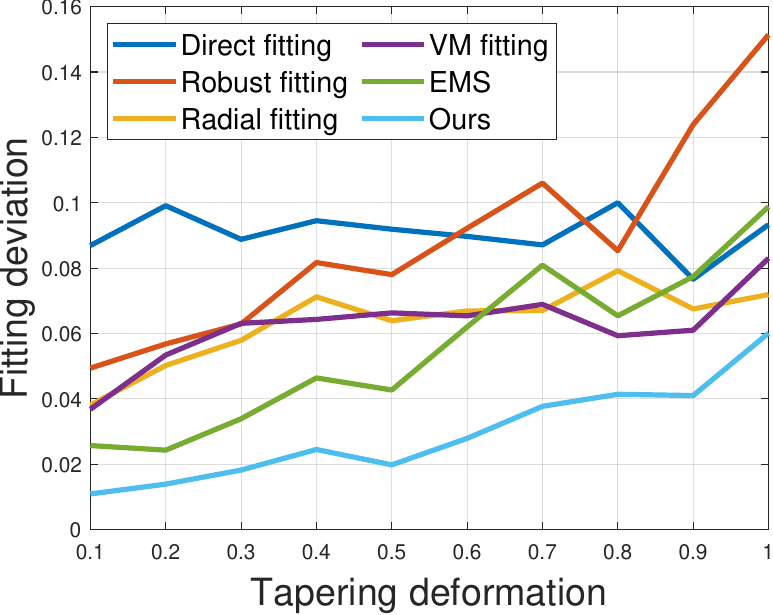}
\includegraphics[width=0.494\linewidth]{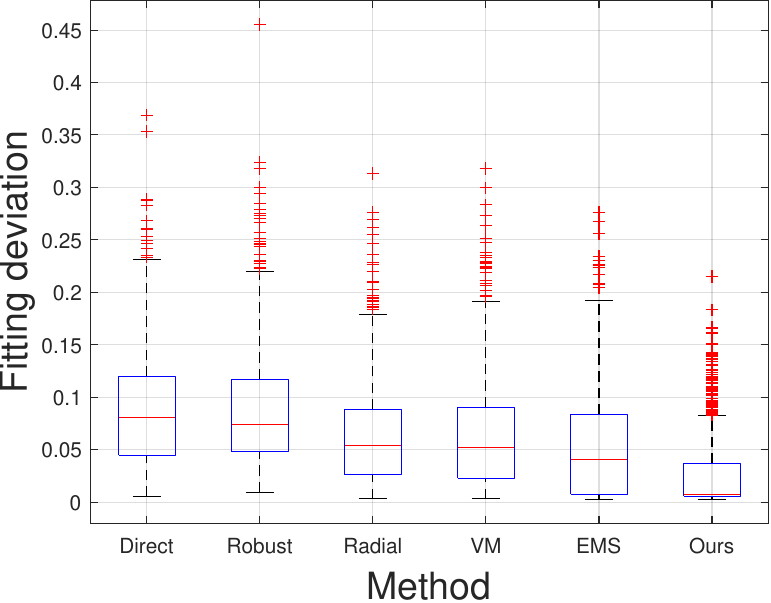}
\caption{Quantitative comparisons for fitting tapering deformable superquadrics. Left: Results for 1,000 randomly generated superquadrics with increasing tapering deformations ($k_1=k_2$); Right: Results for an additional 1,000 random superquadrics with $k_1$ and $k_2$ uniformly sampled from $[0, 1]$ (allowing non-equal values).}
\label{fig:tapering_test}
\end{figure}
\revise{\subsubsection{Robustness against Outliers}}
\revise{Next, we evaluate the robustness of the proposed method against outliers. We re-implement the classic RANSAC-based superquadric fitting approach~\cite{afanasyev20123d} for comparison,} {where the minimum number of points required to fit a superquadric is set to 13 (we found that using 11 points yields poor results). The outlier ratio varies from $0.1$ to $1.0$ in steps of $0.1$ relative to the original point cloud size. For each outlier ratio, we randomly generate 1,000 rigid superquadrics using the same parameter settings as in Section~\ref{subsubsec:rigid_parameter_setting}. Fig.~\ref{fig:rigid_test_outlier} reports the average accuracy and computation time, along with their standard deviations, for each outlier ratio. As observed, compared to RANSAC, our method exhibits stronger robustness, consistently achieving lower fitting deviation. Moreover, our method demonstrates greater stability, as reflected by the narrower bandwidth. Unlike RANSAC, which requires hundreds of sampling iterations and thus consumes more time, our method delivers substantially more efficient fitting.}

\subsection{Assessment of Deformable Superquadric Fitting}\label{sec:experiment_deformable}
Subsequently, we evaluate the performance of the proposed method on the fitting of more challenging deformable superquadric surfaces. Given the generality of our optimization framework, we test it on both tapering and bending deformations.

\begin{figure*}[!htbp]
    \centering
\includegraphics[width=0.96\linewidth]{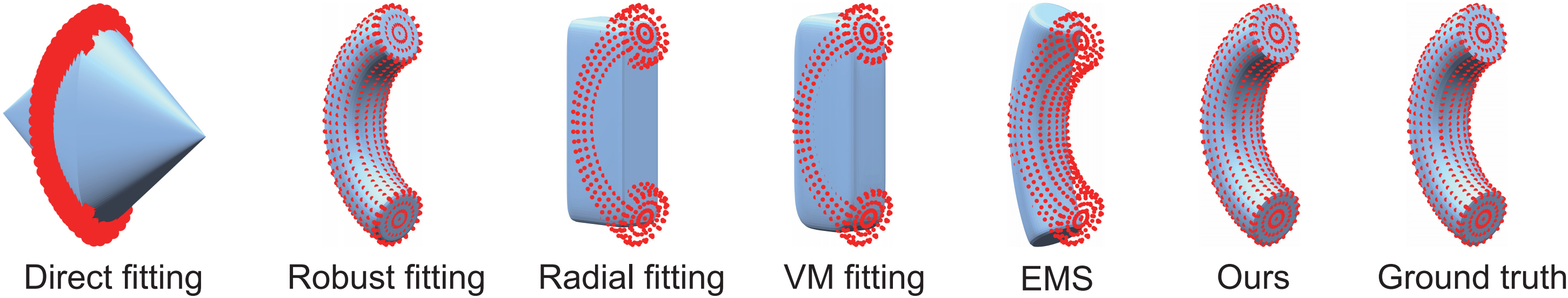}
\caption{Qualitative comparisons of deformable superquadric fitting under bending deformations.}
\label{fig:bending_test_vis}
\end{figure*}

\begin{figure}[t]
\centering
\includegraphics[width=0.494\linewidth]{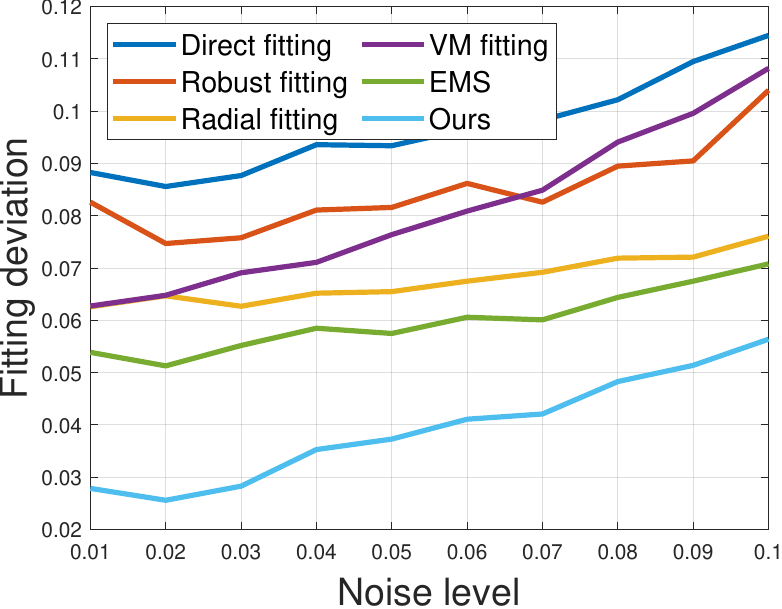}
\includegraphics[width=0.495\linewidth]{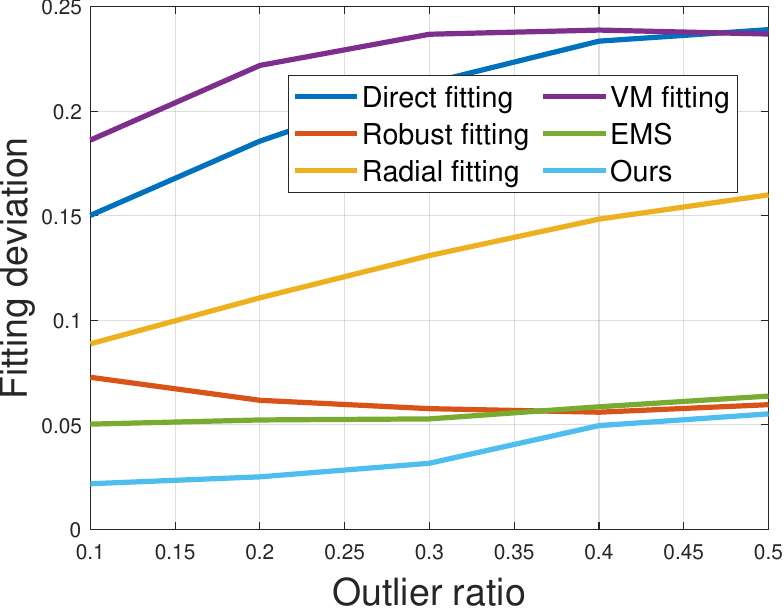}
\caption{Robustness comparisons against external disturbances, including noise (left) and outliers (right).} 
\label{fig:tapering_test_robust}
\end{figure}

\begin{figure}[t]
\centering
\includegraphics[width=0.493\linewidth]{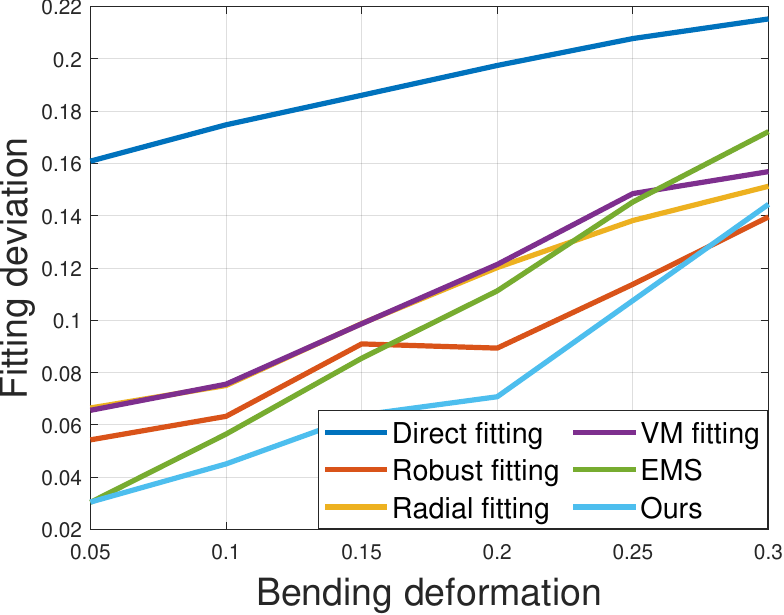}
\includegraphics[width=0.497\linewidth]{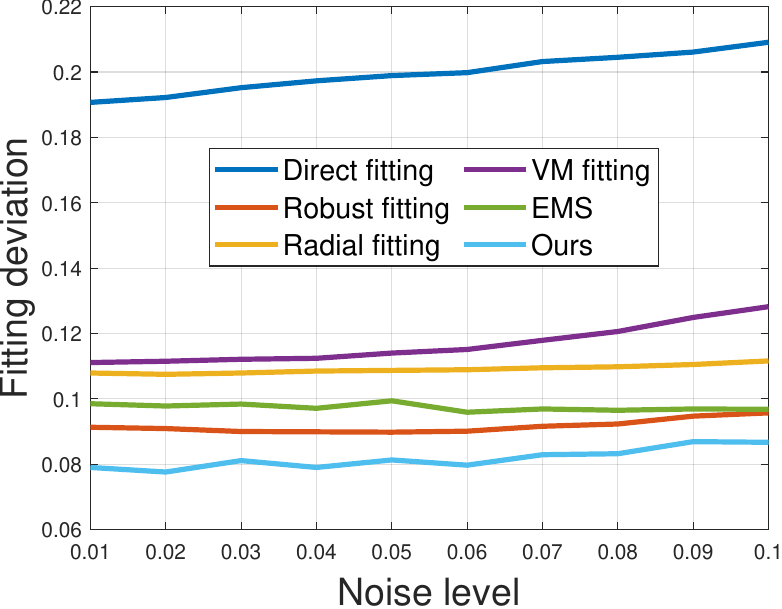}
\caption{Quantitative comparisons of bending deformable superquadric fitting (left) and an additional noise robustness test (right).} 
\label{fig:bending_test_robust}
\end{figure}
\subsubsection{Quantitative Results on Tapering Deformations} 
We first assess the proposed method under tapering deformations. Apart from the parameter setting used for rigid superquadrics (\cref{subsubsec:rigid_parameter_setting}), we further introduce the tapering parameters \( k_1 \), \( k_2\in\mathbb{R} \) to deform the superquadric surface. The tapering deformation intensity increases from $0.1$ to $1.0$ with \( k_1 = k_2 \) and a partial visible ratio of \( r = 0.5 \). For each tapering deformation, we randomly generate 100 superquadrics, resulting in a total of 1,000 test cases. As shown in the left panel of~\cref{fig:tapering_test}, compared with previous approaches, our method consistently achieves the highest shape fitting accuracy across all tapering deformations. It still maintains significantly lower fitting error even under substantial deformations (\(\eg, k_1 = k_2 = 1 \)). Moreover, the right boxplot of~\cref{fig:tapering_test} summarizes results from an additional 1,000 tests, where \(k_1, k_2\) are independently sampled from \([0, 1]\) (\ie, \((k_1, k_2) \in [0, 1]^2\), not necessarily equal). As can be seen, our method still delivers the lowest median and minimum fitting error along with the narrowest interquartile range across all tests, suggesting its superior fitting accuracy and stability. Qualitative comparison results are presented in~\cref{fig:tapering_test_vis}.

\subsubsection{Robustness Test} Next, we investigate the robustness of the proposed method against external disturbances, including noise and outliers. Gaussian noise with zero mean and increasing standard deviations 
\(\sigma \in [0.01, 0.1]\) (with 
\(\Delta\sigma = 0.01\)) is added to all the 1,000 previously generated random deformable superquadrics. The left panel of~\cref{fig:tapering_test_robust} reports the average fitting error under noise perturbations. It can be seen that our method achieves the highest parameter estimate accuracy across all noise settings and significantly outperforms competing approaches. Additionally, the right panel of~\cref{fig:tapering_test_robust} summarizes fitting results under outlier disturbances, with the outlier ratio increasing from \(0.1\) to \(0.5\) of the original point size. Our method still maintains the 
lowest fitting error, indicating its high robustness and stability.  Qualitative comparison results under noise and outlier disturbances are presented in~\cref{fig:tapering_test_robust_vis}.

\begin{table}[!htbp]
\centering
\caption{{Quantitative comparisons on the real-scanned KIT object modal dataset. $Q_1$ and $Q_2$ represent the 25th and 75th pencentiles, separately. \textbf{Bold} values stand for the top performer.}}
\vskip -0.2cm
\renewcommand{\arraystretch}{1.15} 
\begin{adjustbox}{width=\linewidth}
\begin{tabular}{l|c|c|c|c|c}
\Xhline{1pt}
{Method}&$Q_1$&{Average} &{Median} &$Q_3$&{Time (s)}\\ 
\cline{2-5}
\Xhline{1pt}
Direct~\cite{gross1988error}&3.716&4.684& 4.638&5.356&    0.978\\
{Robust~\cite{hu1995robust}}&0.681&1.946&  1.313&2.726& 1.471 \\
Radial~\cite{gross1988error}&0.631&1.213&0.947&1.618&1.976\\
VM~\cite{solina1990recovery}&0.639&1.155&0.970&1.553&0.841\\
EMS~\cite{liu2022robust}&0.622&1.369&0.994&1.833&\textbf{0.506}\\
Ours&\textbf{0.582}&\textbf{1.099}&\textbf{0.814}&\textbf{1.355}&0.792\\
\Xhline{1pt}
\end{tabular}
\end{adjustbox}
\label{tab:KIT}
\end{table}

\subsubsection{Quantitative Results on Bending Deformations} Then, we demonstrate the efficacy of the proposed method for fitting superquadrics with complex bending deformations. The shape parameters \(\epsilon_1\) and \(\epsilon_2\) are uniformly sampled from \((0,0.4]\) and \((0,2]\), respectively. The curvature parameter \(\kappa\) is ranged from \(0.05\) to \(0.3\) with a step size of \(\Delta \kappa = 0.05\) and \(\alpha\) is uniformly sampled from \((0, \pi/2]\). The scale parameters follow the same definitions as in the rigid cases. For each value of \(\kappa\), we randomly generate 100 deformable superquadrics, resulting in a total of 600 bending superquadrics. The left panel of \cref{fig:bending_test_robust} 
reports the test results under increasing degrees of bending deformations. Our method achieves superior or comparable performance across all levels of bending deformations. Moreover, to demonstrate the robustness of the developed algorithm, we further add additional Gaussian noise to the 600 randomly generated bending superquadrics. The right panel of \cref{fig:bending_test_robust} shows that our method  maintains highly robust and stable performance, consistently outperforming its competitors. Qualitative comparison results are provided in \cref{fig:bending_test_vis}.

\begin{figure*}[h]
\centering
\includegraphics[width=0.98\linewidth]{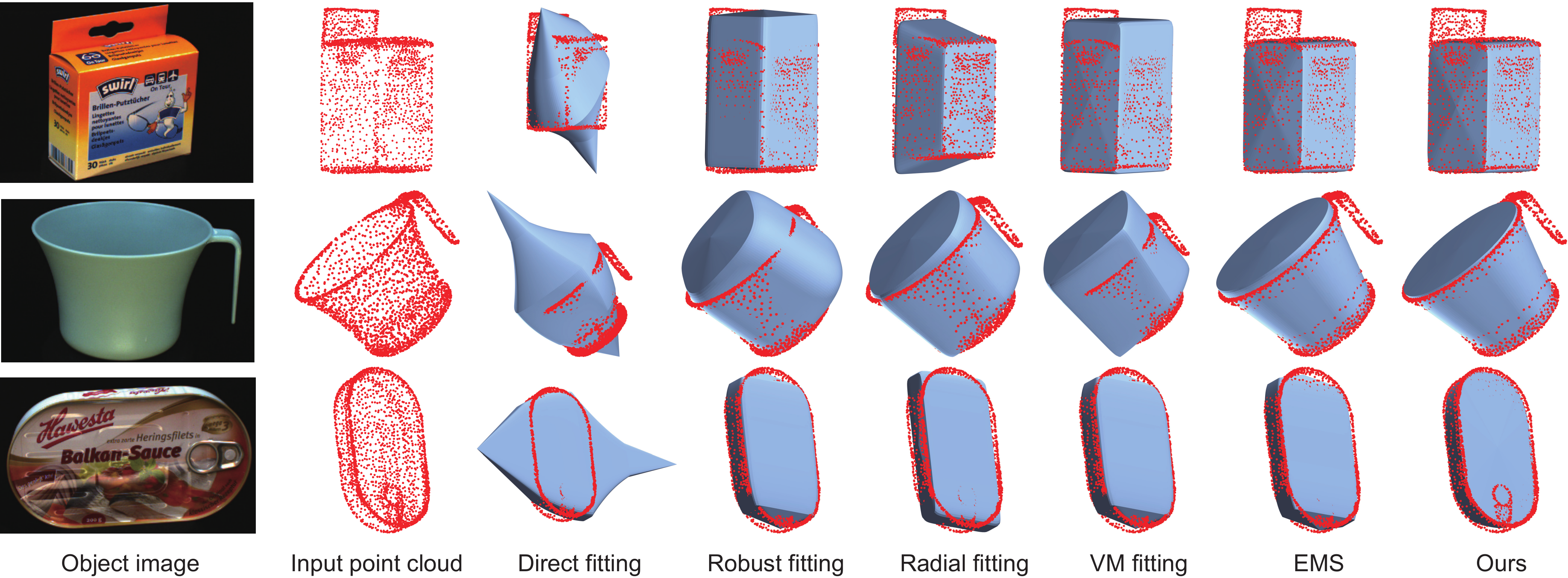}
\vskip -0.2cm
\caption{Qualitative comparisons of deformable superquadric fitting (\eg, with tapering) on real-scanned objects. In contrast to competing approaches, our method achieves superior accuracy in capturing both the \emph{dominant geometry} and \emph{fine structural details}, such as the rim of the cup and the edges of the can box.}
\label{fig:KIT_vis}
\end{figure*}

\begin{figure*}[t]
    \centering
\includegraphics[width=0.97\linewidth]{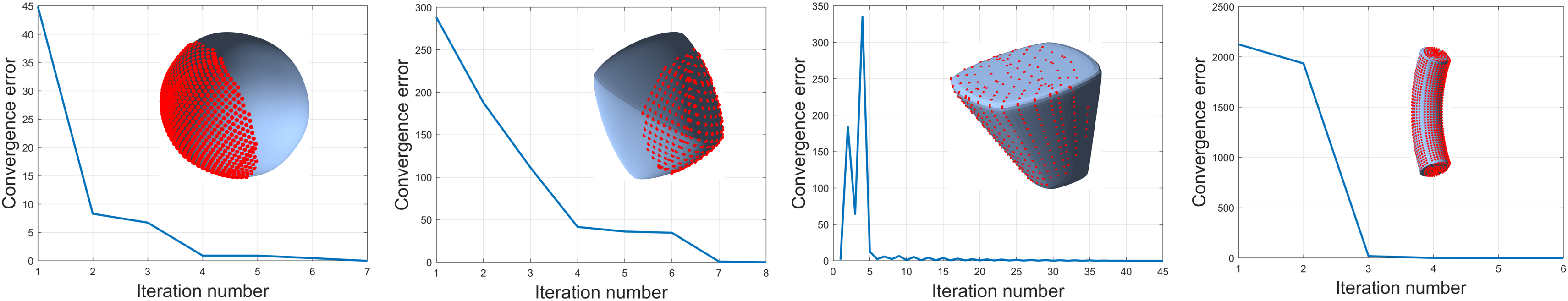}
\caption{Visualization of the convergence process for fitting rigid (first two columns), tapered (third column), and bent (last column) superquadrics. The optimization uses a convergence threshold of \(1 \times 10^{-3}\). The input point cloud is shown in red, and the fitted superquadric surface is rendered in light gray.}
\label{fig:convergence_vis}
\end{figure*}

\begin{figure}
\centering
\includegraphics[width=0.49\linewidth]{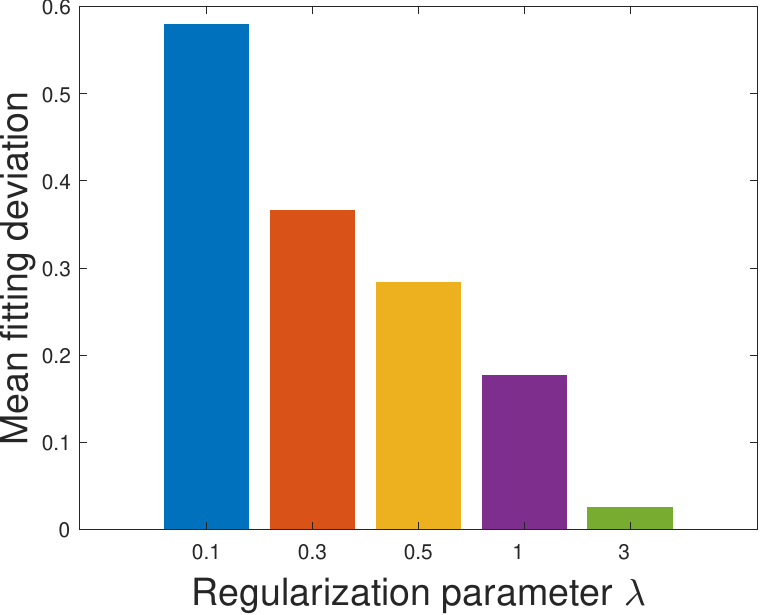}
\includegraphics[width=0.49\linewidth]{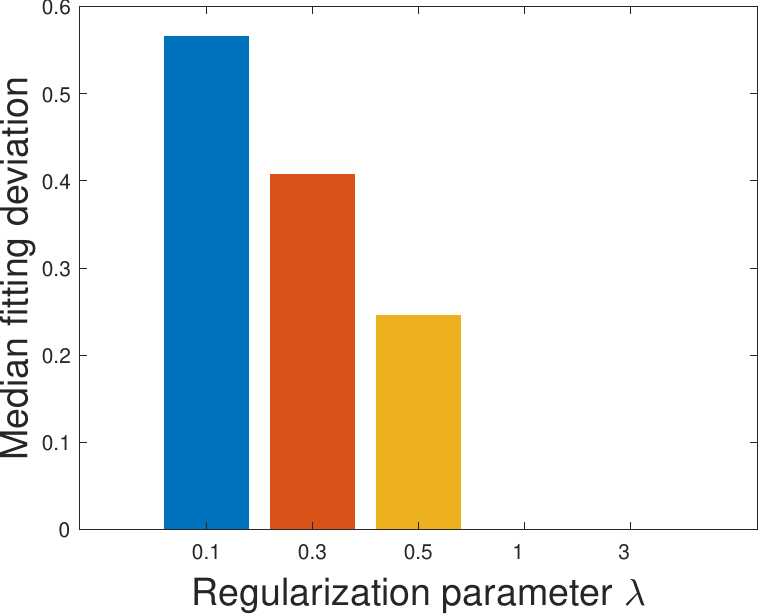}
\caption{Ablation study on the effect of the $u_i\log u_i$ regularization term (weighted by $\lambda$) for escaping local minima in superquadric fitting. The mean (left) and median (right) fitting deviations are reported.}
\vskip -0.5cm
\label{fig:lambda_ablation}
\end{figure}

\subsection{Real-Scanned Data Assessment}\label{sec:experiment_real}
Subsequently, we evaluate the proposed method on the real-scanned KIT object model database~\cite{kasper2012kit}, which contains 145 household objects designed for robotics recognition, localization, and manipulation. We manually select 100 objects that can be reasonably approximated by a single superquadric. Since most objects can be described by rigid or tapered superquadrics, we leverage superquadrics with tapering deformations for shape fitting. The statistical results in~\cref{tab:KIT} demonstrate that our method consistently achieves the highest fitting accuracy across all metrics, including the $Q_1$(25th) percentile, average, median (\ie, the $Q_2$(50th) percentile), and the $Q_3$(75th) percentile, suggesting its high stability. Moreover, while achieving remarkable accuracy, our method also maintains the second-best efficiency, making it a highly
practical and effective solution. The qualitative comparison
results in~\cref{fig:KIT_vis} indicate that our method not only provides consistently better deformable superquadric fitting or approximation to the dominated geometry of the object but also more accurately and faithfully recovers the geometric details, such as the rim region of the cup and the edges of the can box.

\begin{figure}[t]
    \centering
    \subcaptionbox{$\lambda=0.1$}{
    \begin{minipage}{0.30\linewidth}
      \centering
    \includegraphics[width=\linewidth]{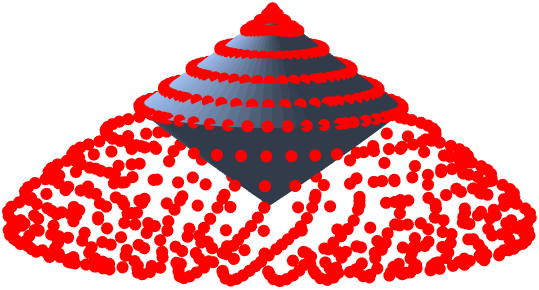}\\
    \includegraphics[width=\linewidth]{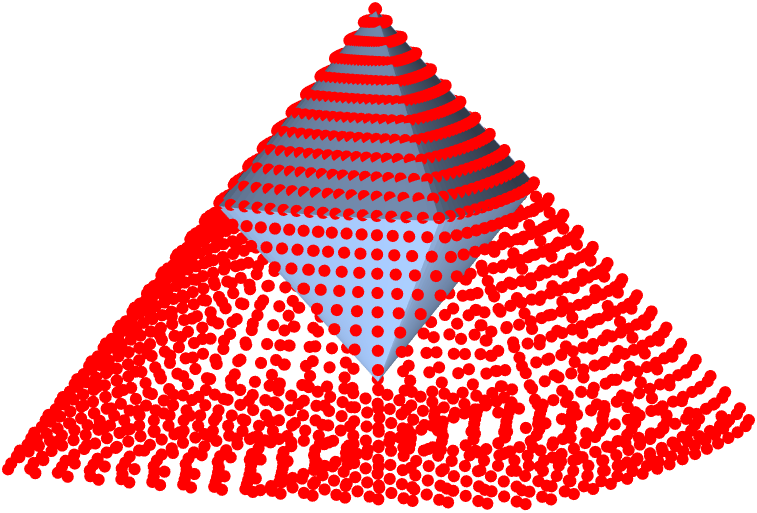}
    \end{minipage}
    }
    \subcaptionbox{$\lambda=0.5$}{
    \begin{minipage}{0.30\linewidth}
      \centering
    \includegraphics[width=\linewidth]{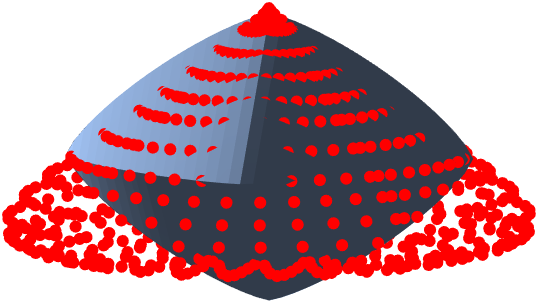}\\
    \includegraphics[width=\linewidth]{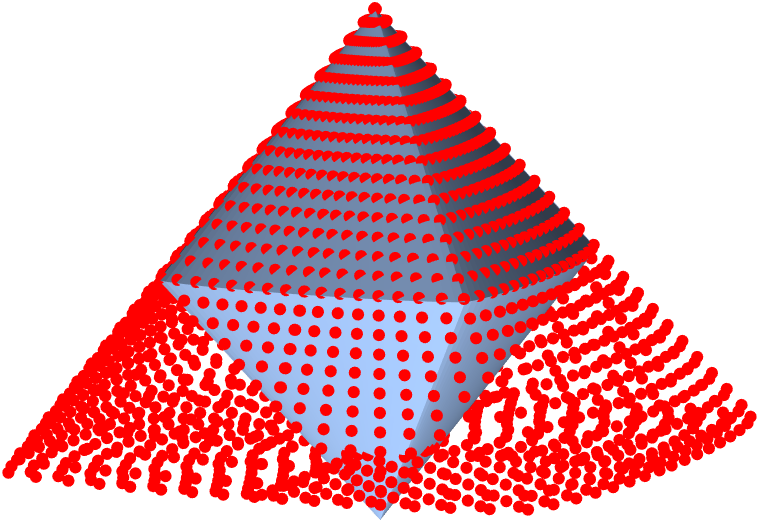}
    \end{minipage}
    }
    \subcaptionbox{$\lambda=3.0$}{
     \begin{minipage}{0.30\linewidth}
      \centering
    \includegraphics[width=\linewidth]{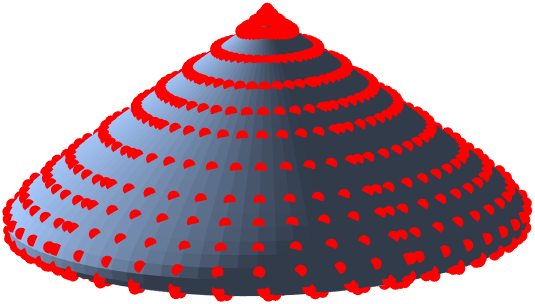}\\
    \includegraphics[width=\linewidth]{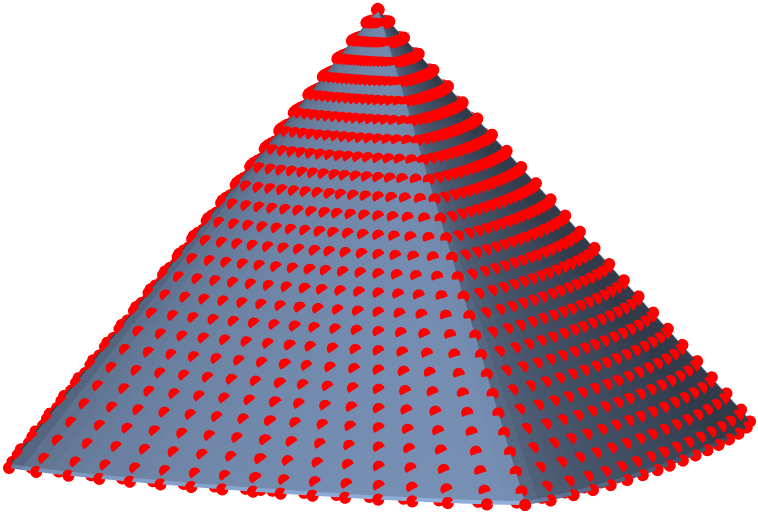}
    \end{minipage}
    }
    \caption{Illustration of the influences of the $u_i\log u_i$ regularization term (weighted by $\lambda\in\{0.1, 0.5, 3.0\}$) on highly tapered superquadric fitting.}
    \label{fig:lambda_ablation_vis}
\end{figure}
\begin{figure*}[!htbp]
\centering
\includegraphics[width=0.95\linewidth]{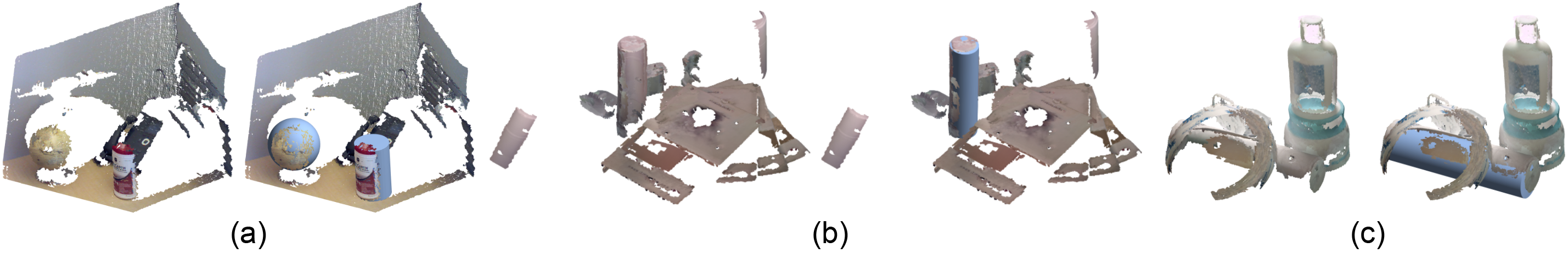}
\caption{Adaptation of the proposed method to type-specific fitting (spherical and cylindrical primitives) under heavy occlusion. The first scene is sourced from~\cite{Matlab}; the last two are from our own scans.}
\label{fig:cylinder}
\end{figure*}

\begin{figure*}[t]
\centering
\includegraphics[width=0.9\linewidth]{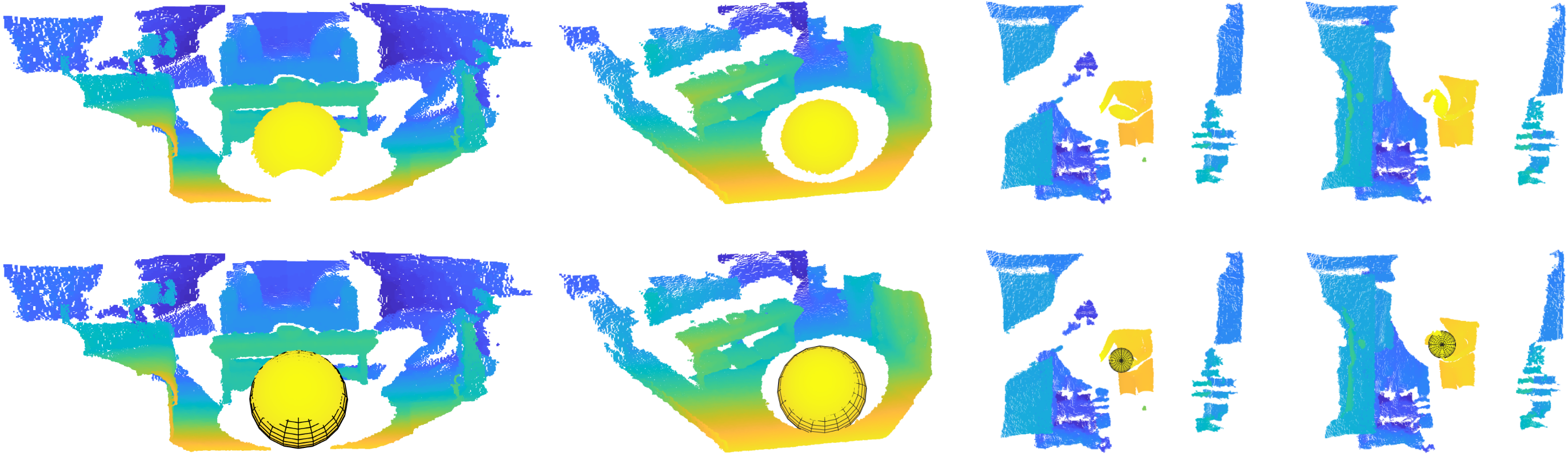}
\caption{Adaptation of the proposed method to sphere-specific primitive fitting under various occluded scenarios.}
\label{fig:sphere}
\end{figure*}

\subsection{Convergence Test}\label{sec:experiment_convergence}
The proposed method has demonstrated impressive performance in both rigid and deformable superquadric fitting. Subsequently, we investigate its convergence behavior using four randomly generated point cloud datasets: two from rigid superquadrics, one from a tapering superquadric, and one from a bending superquadric, as shown in the inset of of~\cref{fig:convergence_vis}. During fitting, we record the corresponding fitting error in each iteration and the iteration number. As can be seen from~\cref{fig:convergence_vis}, our method converges rapidly for all cases, reaching the predefined objective function threshold $1\times10^{-3}$ within just a small number of iterations, even for more challenging deformable shapes involving tapering or bending. Additionally, the final convergence errors for the four superquadrics are $4.49\times10^{-9}$, $1.47\times10^{-5}$, $2.97\times10^{-5}$, and $6.06\times10^{-4}$, respectively, indicating strong numerical stability and precision in achieving convergence.

\subsection{Effect of $u_i\log u_i$ on Avoiding Local Minima}\label{sec:experiment_ablation}
We have analyzed in~\cref{sec:advantages} that the term $u_i\log u_i$ enhances the convexity of the objective function in~\cref{eq:loss_fit_final}, where the regularization parameter \(\lambda\) 
controls the degree of convexity. This advantage can mitigate convergence to local minima, particularly for highly tapered superquadrics with \(k_1 = k_2 = \pm 1\).
To quantify this effect, we experimentally investigate how $\lambda$ influences fitting efficacy by varying $\lambda\in\{0.1, 0.3, 0.5, 1.0, 3.0\}$ on 100 randomly generated highly tapered superquadrics (larger \(\lambda\) corresponds to stronger convexity). The statistical results including mean and median fitting deviations are reported in~\cref{fig:lambda_ablation}. As observed, increasing $\lambda$ significantly reduces the fitting error, and thus facilitates \revise{escaping} from the local minima. Moreover, at $\lambda=3$, the proposed algorithm achieves the lowest mean fitting error along with a median error equal to $0$, indicating its high accuracy and stability.~\cref{fig:lambda_ablation_vis} visualizes how larger values of $\lambda$, particularly $\lambda=3$, prevent the optimization getting stuck into the local minimizer, especially for challenging highly tapered superquadrics. 

\begin{figure}
    \centering
    \subcaptionbox{}{
    \includegraphics[width=0.2\linewidth]{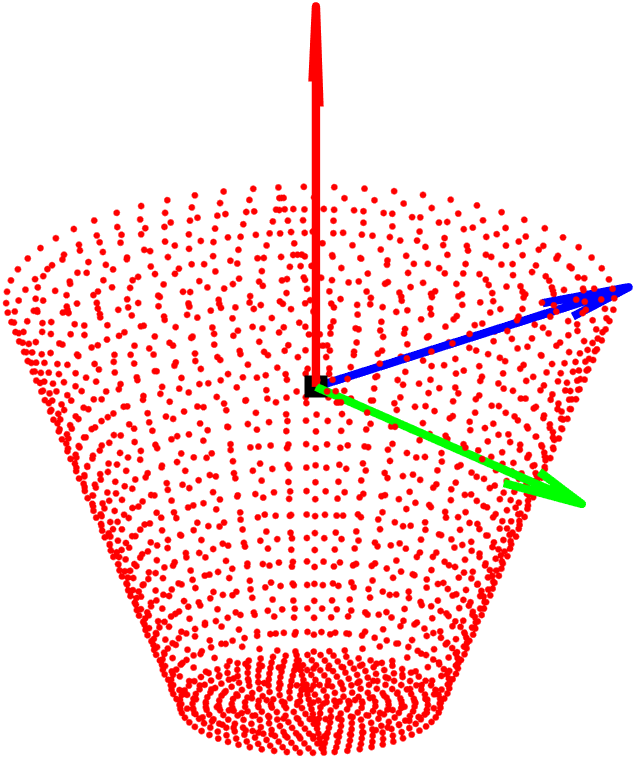}}
    \subcaptionbox{}{
    \includegraphics[width=0.2\linewidth]{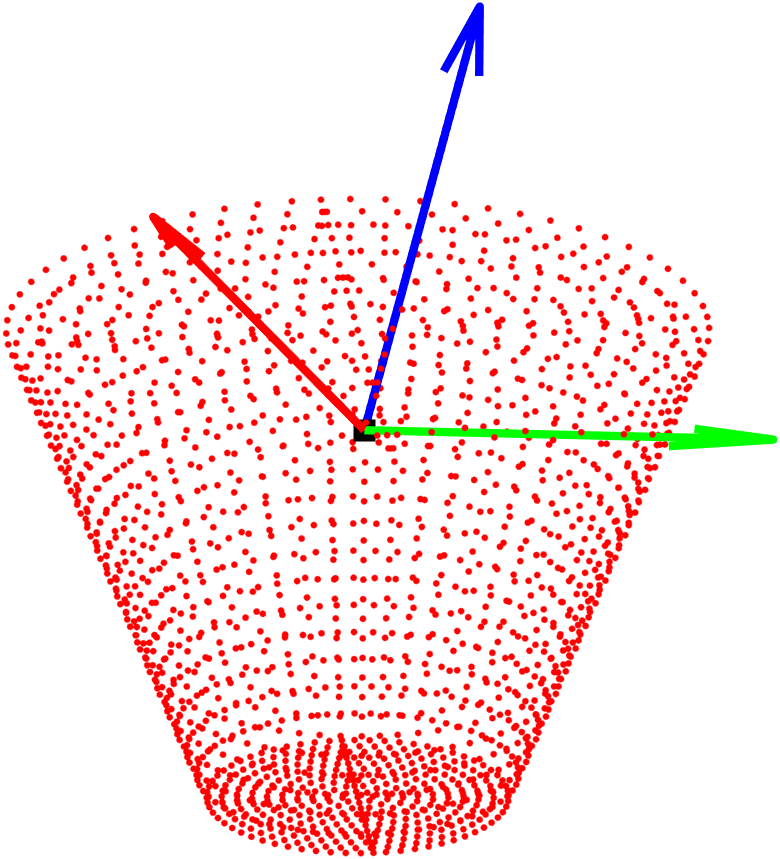}}
    \subcaptionbox{}{
    \includegraphics[width=0.2\linewidth]{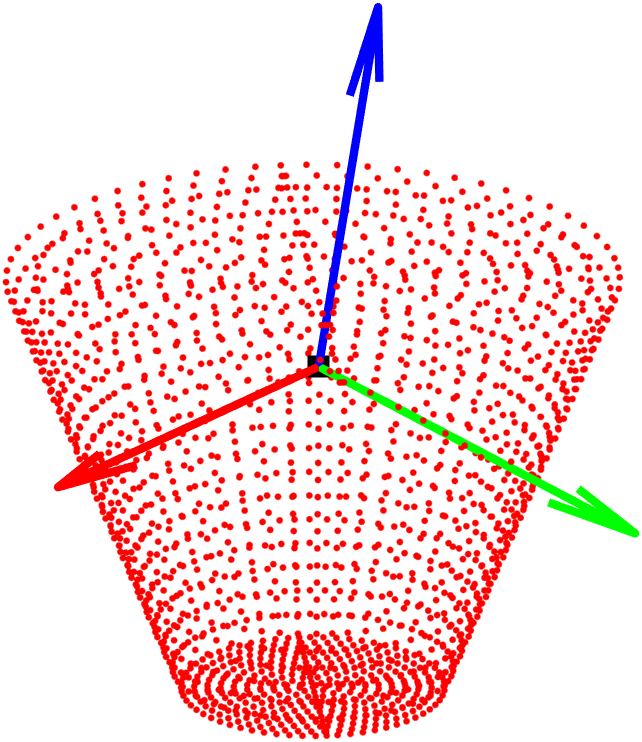}}

    \subcaptionbox{}{
    \includegraphics[width=0.2\linewidth]{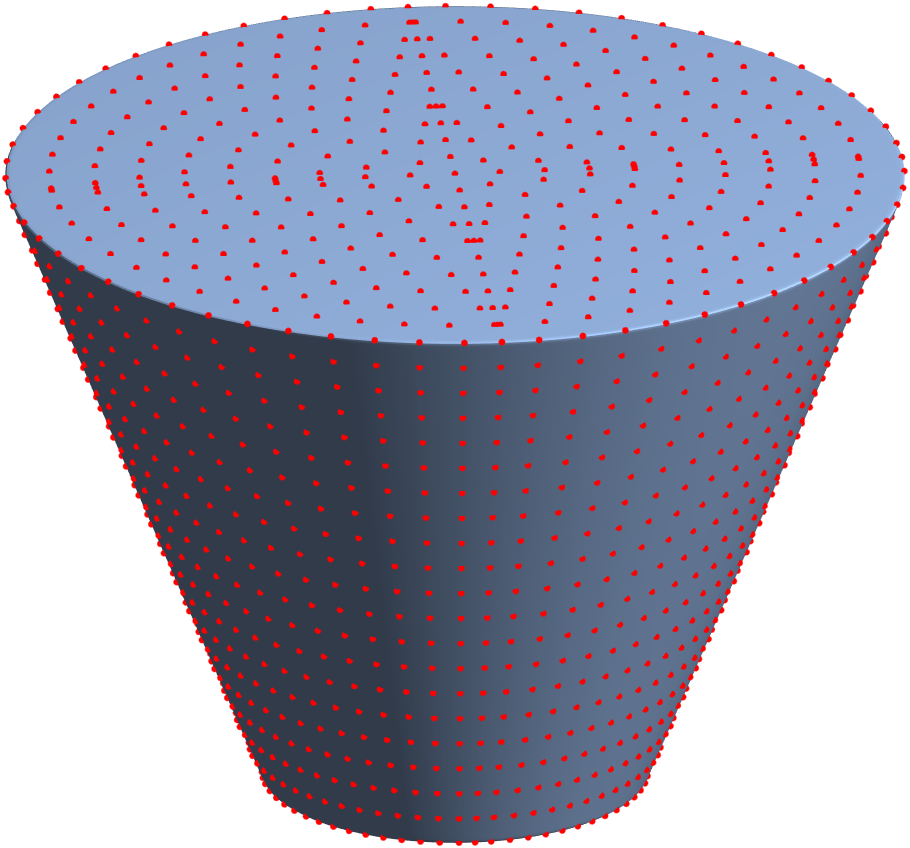}}
    \subcaptionbox{}{
    \includegraphics[width=0.2\linewidth]{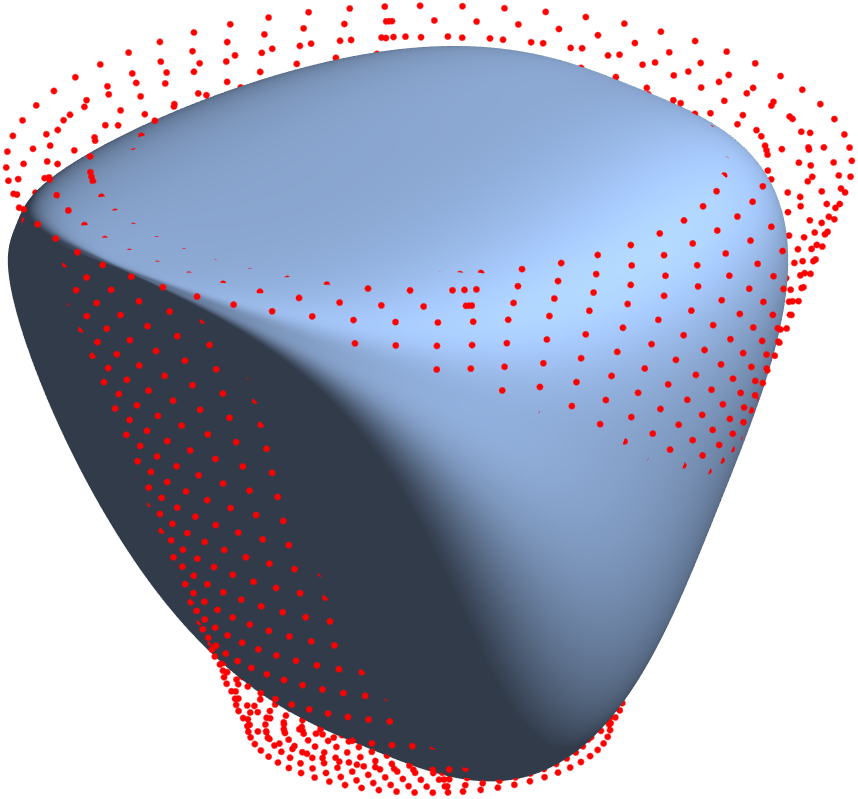}}
    \subcaptionbox{}{
    \includegraphics[width=0.2\linewidth]{Fig/PCA_90.png}}
    \subcaptionbox{}{
    \includegraphics[width=0.2\linewidth]{Fig/PCA_90.png}}

    \caption{\revise{Sensitivity analysis of PCA-based initialization. (a) PCA initialization; (b) rotation by $45^\circ$ about the $x$-axis; (c) rotation by $90^\circ$ about the $x$-axis; (d) fitting result using PCA initialization; (e) fitting result under $45^\circ$ rotation; (f) fitting result under $45^\circ$ rotation with three different axis initializations; (g) fitting result under $90^\circ$ rotation. To ensure consistently high-quality fitting, we recommend performing three initializations to avoid local minima when the PCA result deviates significantly from the correct orientation.}}
    \label{fig:PCA_init}
\end{figure}
 \revise{\subsection{Sensitivity Analysis of PCA-based Initialization}}
 {We conduct additional experiments to investigate the sensitivity of our method to PCA-based initialization. As shown in Fig.~\ref{fig:PCA_init}, with correct PCA initialization (Fig.~\ref{fig:PCA_init}(a)), our method yields a successful fit (Fig.~\ref{fig:PCA_init}(d)). Figs.~\ref{fig:PCA_init}(b) and (c) show initial orientations deviating from the correct one by $45^\circ$ and $90^\circ$, respectively, via rotation about the $x$-axis. Fig.~\ref{fig:PCA_init}(g) indicates that our method can still achieve successful fitting when the deviation is $90^\circ$; however, it returns an incorrect result when the deviation is $45^\circ$. To address this issue, we perform three initializations using different axes as the $x$-axis and select the fitting result with the lowest error. Fig.~\ref{fig:PCA_init}(f) demonstrates that our method successfully recovers the original superquadric. Therefore, to ensure consistently high-quality fitting, we recommend performing three initializations to avoid local minima when the PCA result deviates significantly from the correct orientation. }

\subsection{Adaptation to Type-Specific Fitting} 
\label{sec:type_specific_fitting}
While the proposed method is designed for general (\ie, type-agnostic) superquadric fitting, it can be readily adapted to fit \emph{type-specific primitives} for specialized applications. To validate its adaptability, we apply our algorithm to two common cases: cylinder fitting and  sphere fitting.

Focusing first on cylinder fitting, by setting $\epsilon_1 = 0$ and $\epsilon_2 = 1$ in \cref{eq:parametric}, the equation transforms to:  
\begin{equation}
\bm{x}(\eta, \omega) = 
\left[
\begin{array}{c}
a_{x} C_{\omega}^{\varepsilon_{2}} \\
a_{y} S_{\omega}^{\varepsilon_{2}} \\
a_{z} 
\end{array}
\right] = 
\left[
\begin{array}{c}
a_{x} \cos(\omega) \\
a_{y} \sin(\omega) \\
a_{z} 
\end{array}
\right], 
\label{eq:cylinder}
\end{equation} which degenerates to a canonical cylindrical parametric form. To strictly enforce the fitting result as a cylinder-specific surface (with uniform circular cross-section), we compute the average of $a_x$ and $a_y$ and use this value as the cylinder radius.  

For spherical surface fitting, we similarly tailor the parameters: setting $\epsilon_1 = \epsilon_2 = 1$ in \cref{eq:parametric}, and averaging $a_x$, $a_y$, and $a_z$ to define the spherical radius $r$. To quantify the spherical fitting performance, we adopt the point-to-sphere distance metric:  
\begin{equation}
\text{dist}(\bm{x}, \bm{c}) = \left| \|\bm{x} - \bm{c}\|_2 - r \right|,
\label{eq:sphere_distance}
\end{equation} where $\bm{x}\in\mathbb{R}^3$ denotes a target point and $\bm{c}\in\mathbb{R}^3$ denotes the sphere center. For cylinder fitting, we retain \cref{eq:metric} for evaluation, as a closed-form analytical solution for the point-to-cylindrical surface distance remains unavailable.

\begin{table}[t]
\renewcommand{\arraystretch}{1.5}
\caption{Quantitative comparisons of the type-specific fittings between our method and the algorithm~\cite{torr2000mlesac} built in MATLAB.
 }
\centering
\begin{tabular}{c|cc|cc}
    \hline
  \multirow{2}{*}{\diagbox{Primitive}{Method}}&\multicolumn{2}{c|}{\cite{torr2000mlesac}} &\multicolumn{2}{c}{Ours}\\
  \cline{2-5}
    &Average & Median & Average & Median \\ 
   \hline
    Sphere&0.0041 & 0.0043 & 0.0072 & 0.0061 \\
    \cline{1-5}
    Cylinder&2.806 & 3.314 & 0.781 & 1.100 \\
    \hline
\end{tabular}
\label{tab:sphere_cylinder_fitting}
\end{table}

\begin{figure*}[t]
    \centering
\includegraphics[width=0.98\linewidth]{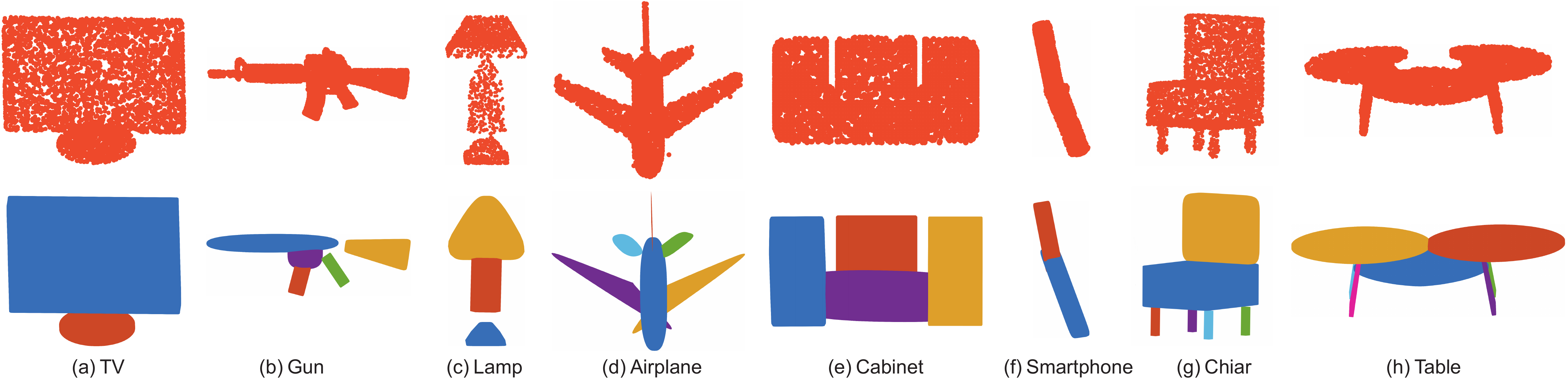}
\caption{Qualitative results on the shape \revise{approximation} for objects from the ShapeNet dataset~\cite{chang2015shapenet} by 
superquadrics. Both rigid and deformable (\eg, the wing of an airplane and the buttstock of a gun) superquadrics are accurately fitted.} 
\label{fig:shapenet_vis1}
\vskip -0.3cm
\end{figure*}

We evaluate the proposed algorithm using three distinct scene datasets. The dataset in \cref{fig:cylinder}(a) is provided by MATLAB~\cite{Matlab} and contains both spherical and cylindrical primitives, whereas those in \cref{fig:cylinder}(b) and (c) were scanned by ourselves using a Revopoint Mini2 3D scanner, containing cylindrical primitives that are occluded by foreground objects (\eg, books). The datasets in \cref{fig:sphere} are sourced from \cite{choi2016large} and include spherical primitives across diverse scene configurations. Notably, all datasets exhibit severe occlusions and partial object visibility, arising from the \emph{Field-of-View} (FOV) constraints of the employed sensors. We compare our method with  the MATLAB-implemented version of the algorithm in \cite{torr2000mlesac} (a widely used baseline for robust fitting). \cref{tab:sphere_cylinder_fitting} presents the average and median fitting accuracy across the aforementioned datasets. Under severe occlusions, our method, despite not being specifically tailored for spherical fitting, still yields high-quality results comparable to the MATLAB-implemented baseline. Moreover, it achieves superior fitting accuracy for cylindrical primitives compared to \cite{torr2000mlesac}, which underscores its robustness to occlusions and versatility across primitive types. Visualizations of the fitted spherical and cylindrical surfaces overlaid on the original point clouds are provided in \cref{fig:cylinder} and \cref{fig:sphere}, respectively.

\subsection{Applications}\label{sec:experiment_applications}
After showing the accuracy and robustness of our algorithm, in this section, we further demonstrate its versatility by applying the proposed superquadric fitting method to three downstream tasks: 
{\emph{shape approximation}}, \emph{geometry editing}, and \emph{medical modeling}. For shape \revise{approximation}, we integrate our fitting method into both traditional optimization and learning-based frameworks, highlighting its adaptability across diverse methodological paradigms. For geometry editing, we leverage the fitted parameters to manipulate the underlying geometry, enabling the generation of novel shapes. For medical modeling, we showcase the utility of deformable bending superquadrics in \revise{characterizing blood vessel structures}.

\subsubsection{\revise{Shape Approximation}}
Decomposing a complex shape into a set of geometric primitives facilitates enhanced perceptual understanding aligned with the human visual system~\cite{biederman1987recognition,edelman1999representation,kinzler2007core}. Consequently, this decomposition strategy emerges as a promising pathway for intelligent systems to accomplish high-level objectives, such as environmental planning and interactive tasks, by leveraging structured shape parsing~\cite{liu2022robust,fedele2025superdec}. Benefiting from the compact and expressive representation capacity of superquadrics,  we integrate our fitting algorithm to {approximate and model} the constituent parts from an initial coarse segmentation. We demonstrate this versatility using segmentation derived from both traditional optimization and learning-based frameworks.\\

{\textit{1) Integration with Traditional Optimization Framework.}}
Traditional optimization-based segmentation approaches typically adopt an unsupervised clustering paradigm. To this end, we integrate our method with a recent clustering-based point cloud segmentation technique~\cite{wu2022primitive} that roughly partitions an object into distinct regions. As illustrated in~\cref{fig:shapenet_vis1}, we evaluate eight diverse object categories from the ShapeNet dataset~\cite{chang2015shapenet} to validate the generalization capability of our proposed method. Specifically, we first apply the segmentation framework from~\cite{wu2022primitive} to obtain a coarse partition of the object, followed by invoking our superquadric fitting algorithm.

The shape decomposition results presented in~\cref{fig:shapenet_vis1} demonstrate that our method not only faithfully captures the overall geometric structures of the objects, enabling more compact storage and description compared to discrete and unorganized point clouds, but also accurately recovers \emph{geometric deformations} such as tapering structures, which are prevalent in various objects (\eg, the wing of an airplane and the buttstock of a gun).\\

\begin{table}[t]
\caption{Quantitative comparisons with state-of-the-art methods on the ShapeNet test dataset, which comprises 8,751 models.}
\label{tab:shapenet_part}
\centering
\resizebox{\linewidth}{!}{
\begin{tabular}{l l c | c c c }
\toprule
\multirow{2}{*}{\textbf{Model}} & \multirow{2}{*}{\textbf{Primitive Type}} & \multirow{2}{*}{\textbf{Segmentation}} & \multicolumn{3}{c}{\textbf{ShapeNet test dataset}}  \\
& & & {${L_1}$ $\downarrow$} & {${L_2}$ $\downarrow$} & {\# Prim.$\downarrow$}  \\
\midrule
EMS~\cite{liu2022robust} & Superquadrics & \XSolidBrush & 5.771 & 1.345 & 5.68  \\
CSA~\cite{yang2021unsupervised} & Cuboids & \Checkmark & 5.157 & 0.527 & 9.21  \\
SQ~\cite{paschalidou2019superquadrics} & Superquadrics & \XSolidBrush & 3.668 & 0.279 & 10 \\
SuperDec~\cite{fedele2025superdec} & Superquadrics &\Checkmark & {1.698} & {0.051} & {5.8}  \\
Ours & Superquadrics & \Checkmark & \textbf{1.346} & \textbf{0.035} & \textbf{5.74}  \\
\bottomrule
\end{tabular}
}
\end{table}

\begin{figure*}[t]
    \centering
    \includegraphics[width=\linewidth]{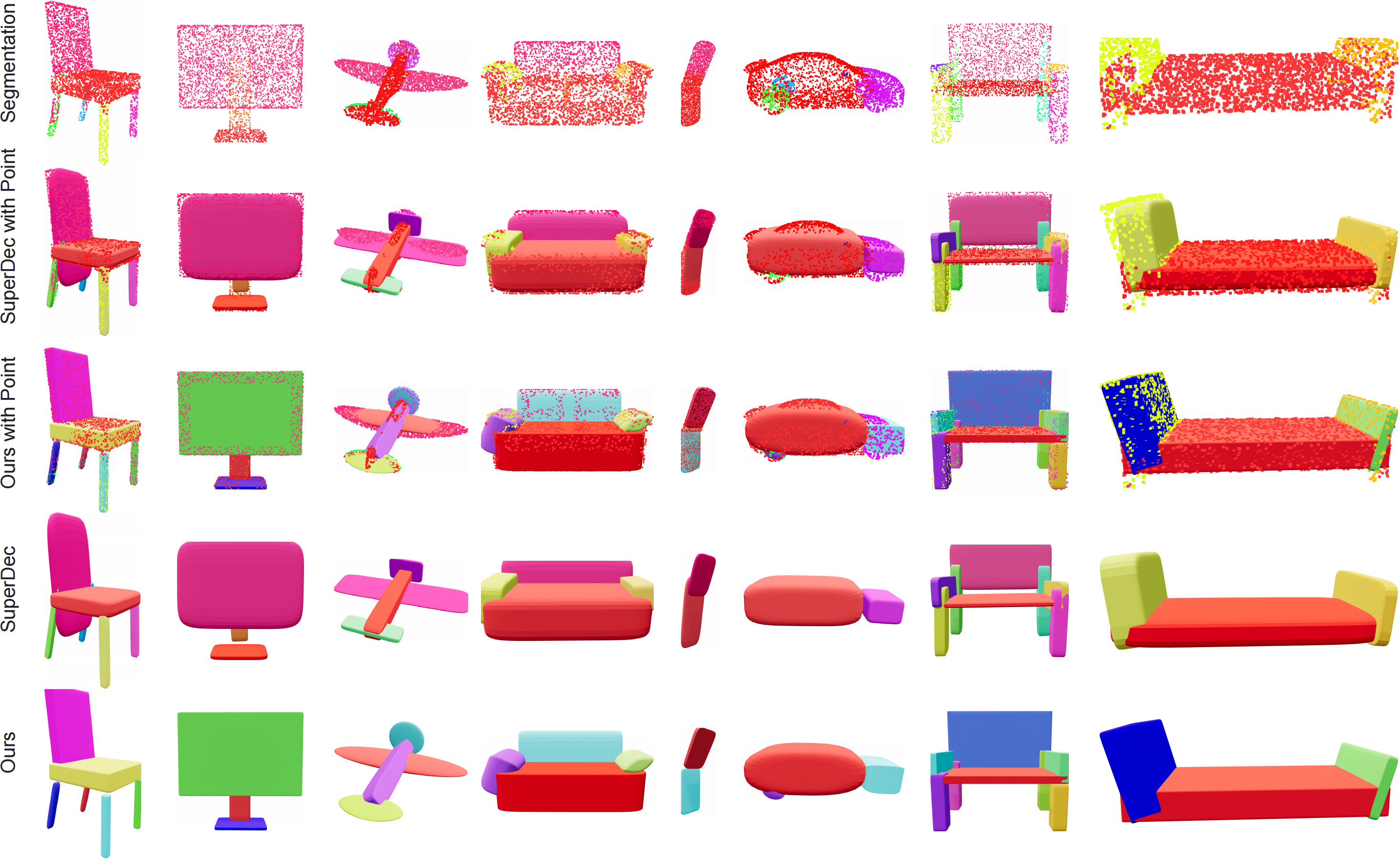}
    \vskip -0.2cm
\caption{Qualitative comparisons with SuperDec on the ShapeNet dataset. The first row presents the initial segmentation results, where different colors indicate different parts. The second and third rows show the fitted superquadrics aligned with the input point clouds; the proposed method yields a more accurate fit, as evidenced by the reduced distance between the recovered surface and the input points. The final two rows display the shape decomposition results, where our fitted superquadrics provide a \emph{tighter approximation} to the original geometry and more faithfully capture \emph{sharp features and contours}.}
\label{fig:shapenet_vis}
\end{figure*}

{{\textit{2) Integration with Learning Framework.}}} 
Next, we integrate the proposed superquadric fitting algorithm with a recent learning-based shape decomposition method SuperDec~\cite{fedele2025superdec}. SuperDec leverages a transformer-based neural network to segment point clouds and predict a set of initial superquadric parameters, which are then further optimized via the Levenberg-Marquardt (LM) method~\cite{more2006levenberg}. To demonstrate the effectiveness and versatility of our fitting approach, we directly deploy it to the segmentation outputs of SuperDec and bypass the parameter initialization step. Consistent with the experimental setup in SuperDec, we utilize its pre-trained model on the ShapeNet dataset and evaluate on the ShapeNet test set, which comprises 8,751 models. For performance benchmarking, we compare our final fitting results against four representative methods: EMS~\cite{liu2022robust}, CSA~\cite{yang2021unsupervised}, SQ~\cite{paschalidou2019superquadrics}, and SuperDec~\cite{fedele2025superdec}. Note that all learning-based methods are jointly trained on the 13 classes of the ShapeNet training set and evaluated on the corresponding test split, ensuring a fair comparison.

We report the quantitative comparison results in~\cref{tab:shapenet_part}, where the first four-row results are from~\cite{fedele2025superdec} for direct reference. As evident from~\cref{tab:shapenet_part}, our method achieves a significant reduction in shape approximation error across both {$L_1$} and $L_2$ point to point loss metrics, indicating its superior fitting accuracy and robustness. Moreover, since we adopt the segmentation results (and thus the primitive count) initialized by SuperDec, our method retains similar number of primitives for shape representation, highlighting its strong potential for compact geometric modeling without sacrificing accuracy. Qualitative decomposition results are presented in~\cref{fig:shapenet_vis}. Our method yields a much \emph{tighter approximation} to the original point cloud structures and more faithfully captures sharp features and contours, resulting in high-quality shape decompositions. {This is particularly beneficial for applications such as accurate collision detection in robot path planning and grasp pose generation.}\\

{\textit{3) Type-Specific Shape \revise{Approximation}.}
Except for shape \revise{approximation} using general superquadrics, as discussed in \cref{sec:type_specific_fitting}, our proposed fitting method can be readily adapted for \emph{type-specific} \revise{approximation}, %decomposition, 
which targets specialized application scenarios. For instance, by fixing the shape parameters to \(\varepsilon_1 = \varepsilon_2 = 1\) in~\cref{eq:implicit}, we instantiate the ellipsoid primitive: a \emph{low-algebraic-degree} quadratic surface representation widely adopted in robotics tasks, such as collision detection~\cite{choi2006continuous,lee2017velocity} and object localization~\cite{zhao2024bayesian}. To validate this type-specific adaptation, {we perform initial segmentation using~\cite{wu2022primitive} and then use ellipsoid primitives for shape approximation} on real-scanned human body point clouds from the D-FAUST dataset~\cite{CVPR2014}, \revise{which exhibit freeform geometry and inherent scanning noise.} As shown in \cref{fig:faust_vis}, our method achieves highly accurate approximation for individual anatomical part of the human body, thereby simplifying the characterization of attributes such as human pose.

\begin{figure*}[!htbp]
    \centering
\includegraphics[width=0.88\linewidth]{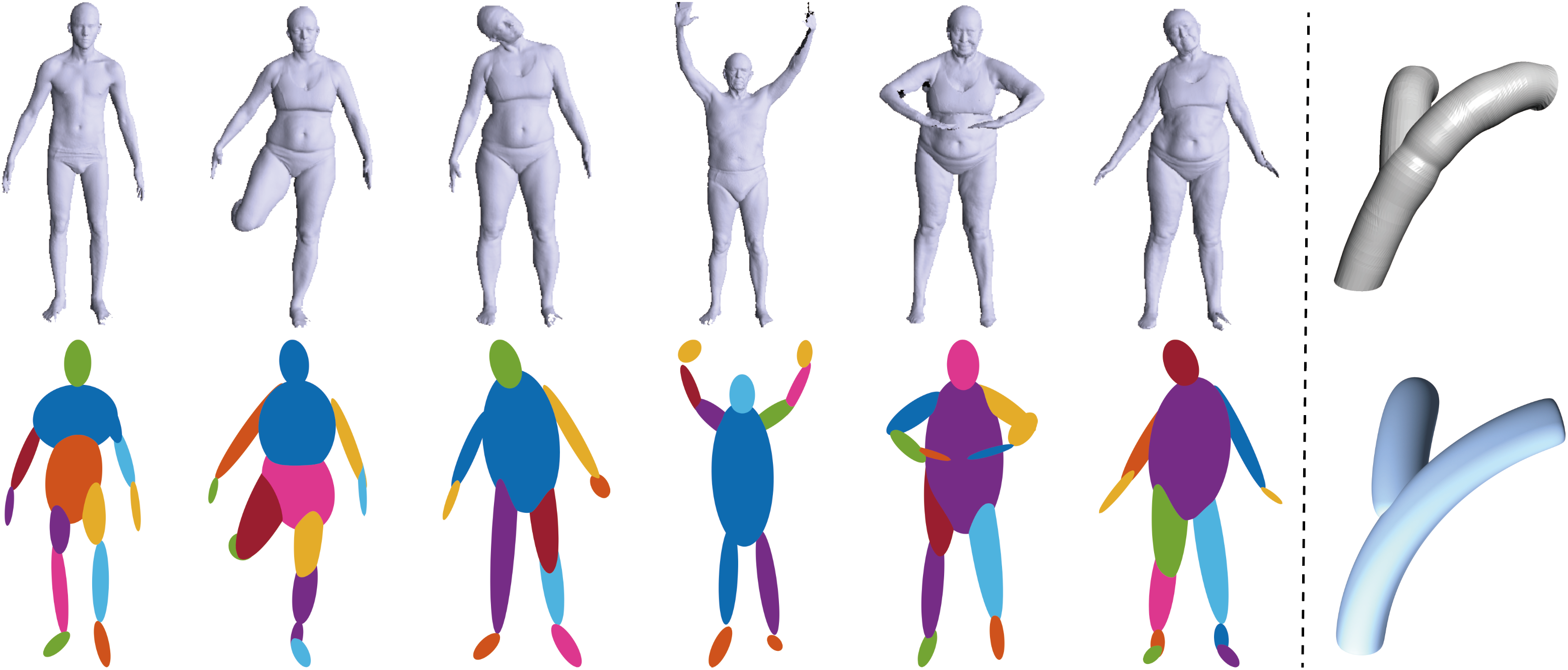}
\vskip -0.2cm
\caption{Shape \revise{approximation} using type-specific superquadrics, \ie, ellipsoid primitives (left);  application of the proposed method to medical modeling using bent superquadrics (right).} 
\label{fig:faust_vis}
\end{figure*}

\begin{figure}[!htbp]
\centering
\includegraphics[width=\linewidth]{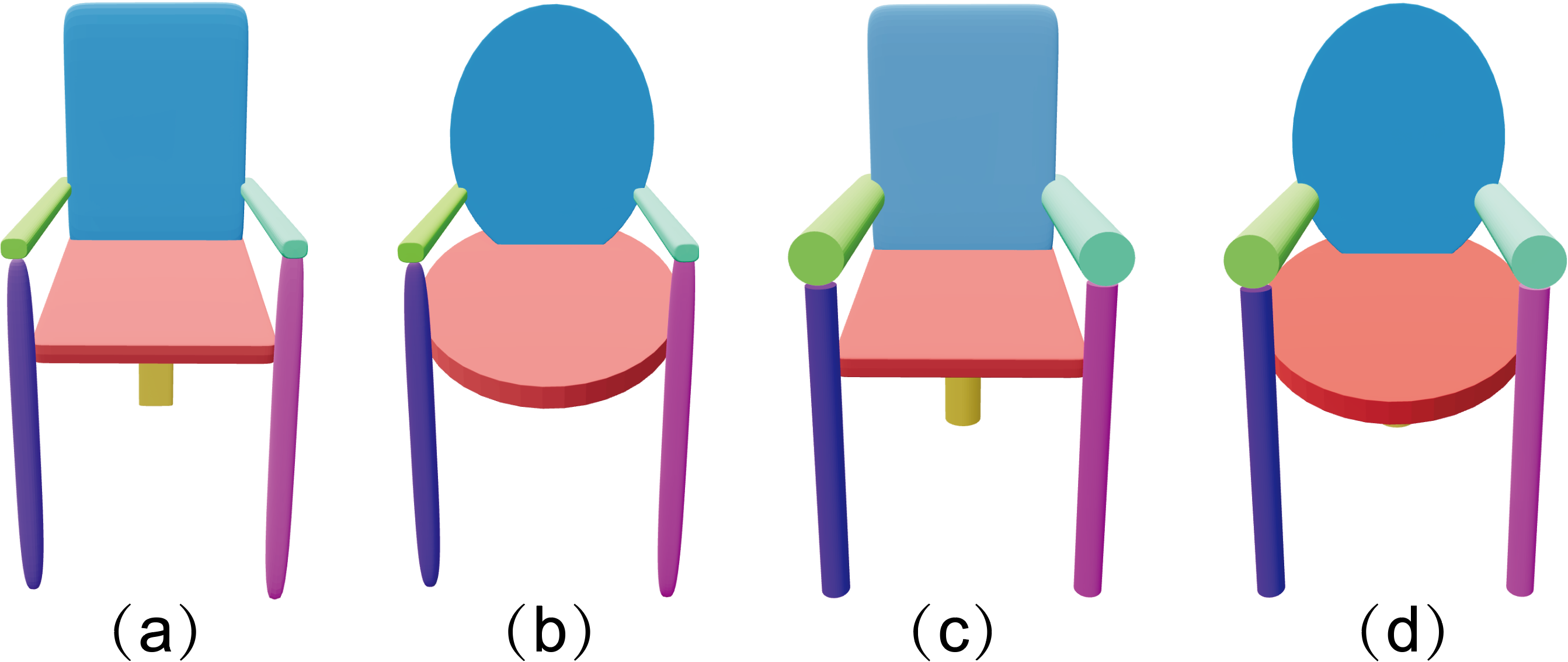}
\vskip -0.2cm
\caption{Geometry editing based on our proposed superquadric fitting algorithm. (a) The original decomposition object. (b)-(d) The newly edited results.}
\label{fig:geometry_edit}
\end{figure}

\subsubsection{Geometry Editing}
As our method provides an accurate \emph{parameterized representation} for each part of an object, the geometry can be easily edited to generate novel shapes. As presented in \cref{fig:geometry_edit}, (a) shows the original shape decomposition result of an chair. In (b), we modify the chair back and seat cushion by replacing the original superquadric primitives with cylinders. In (c), we edit the chair arms and legs. Finally, (d) presents a new object formed by combining all previous edits.

\subsubsection{Medical Modeling} 
Leveraging the expressive power of deformable superquadrics, we further demonstrate the versatility of our method by applying it to medical modeling. The right panel of \cref{fig:faust_vis} presents the fitting results of bent superquadrics (bottom-right) for the blood vessel model (top-right) sourced from the 3D GrabCAD repository~\cite{GrabCAD}.
{This enables joint estimation of boundaries, centerlines, and local poses in vascular images~\cite{tyrrell2007robust}. Another key advantage of using the superquadric model is that its local nature allows us to adapt statistics related to vessel appearance across different regions of the vasculature.}
As shown, our method consistently and accurately reconstructs the blood vessel geometry using bending deformations, which is typically challenging to approximate using general or rigid primitives.

\section{Limitations}\label{sec:limitation}
In this section, we summarize the limitations of the proposed superquadric fitting method.
\subsection{Unification of Segmentation and Fitting}
While our method has demonstrated promising fitting performance in terms of accuracy, efficiency, and robustness for both rigid and deformable superquadrics, its current focus is limited to the precise optimization or accurate estimation of parameter sets for a \revise{single} superquadric. For complex objects composed of multiple components, segmentation algorithms are typically indispensable, {as structured noise (\eg, a floor plane beneath an object) often introduces significant bias into parameter initialization, particularly for the position $\mathbf{t}$ and the rotation matrix $\mathbf{R}$, thereby affecting the fitting result. We provide an experimental analysis of structured noise in the \emph{Supplementary Material}.}

Given that our method is rooted in unsupervised clustering analysis, and clustering analysis itself can be applied to shape segmentation, a promising future direction lies in developing a \emph{unified} superquadric fitting system. This system would be capable of handling complex objects by first performing segmentation and then conducting parameter fitting, \emph{all within the same fuzzy clustering framework}. Such a unification can significantly enhance the understanding of shape modeling, entirely from the perspective of clustering.

\subsection{Determination of the Optimal Parameter $\lambda$}
Another limitation of our method is the lack of an automated technique to efficiently determine the optimal value of \(\lambda\), which controls the degree of convexity of the loss function in~\cref{eq:loss_fit_final} and helps avoid local minima. {Typically, larger deformations require increasing the regularization parameter $\lambda$, as a larger $\lambda$ enhances convexity but may come at the cost of reduced accuracy.} Determining the optimal \(\lambda\) for each fitting task remains a non-trivial and time-consuming process. Nevertheless, our empirical results across both synthetic and real-world settings indicate that setting \(\lambda=3\) generally yields stable and accurate fittings without the need for additional manual tuning.

\subsection{Type-Agnostic Fitting}
 The proposed algorithm effectively fits both rigid and deformable superquadrics. Nevertheless, it requires the deformation type (\eg, tapering or bending) to be specified in advance, as this choice influences the parameter initialization. {To circumvent this limitation, directly fitting a raw, unorganized point cloud without prior structural knowledge entails a trial-and-error process to determine the most suitable deformation type. Alternatively, one can automatically select the deformation type based on model complexity analysis and assessment~\cite{hastie2009elements}, where more complex deformations correspond to more complex fitting models. This can be achieved using statistical criteria such as the Minimum Description Length (MDL), Akaike Information Criterion (AIC), or Bayesian Information Criterion (BIC), with the best parameters chosen by minimizing the selected criterion.}

\section{Conclusions and Future Directions}\label{sec:conclusions}
We have presented a novel algorithm for the accurate and robust fitting of superquadrics, unifying the treatment of \emph{rigid} and \emph{deformable} models within a \emph{holistic} optimization framework. Our core contribution is the reformulation of the fitting problem as an \emph{unsupervised clustering process}. This novel perspective enhances the convexity of the objective function, thereby facilitating optimization and mitigating the problem of local minima, as validated by our experiments. We further derive the correlation between pairwise computations of clustering centroids and clustering samples with geometric distances, which circumvents the computational inefficiency of sampling from potential surfaces. Moreover, our optimization framework enables closed-form solutions for both the fuzzy membership matrix and the covariance matrix, thus ensuring a fast and efficient implementation. We also present a theoretical analysis to guarantee the algorithm's convergence.

We conduct comprehensive numerical studies to investigate the performance of our proposed  optimization and fitting framework. Extensive experiments on both synthetic and real-world data demonstrate the superior accuracy, robustness, and stability of our method. Notably, it exhibits a strong capability to \emph{avoid local minima} and to accurately fit highly deformable superquadrics. Moreover, we highlight its versatility through a series of downstream applications ranging from complex shape \revise{approximation}, geometry editing, to medical modeling.

There are several avenues exist to further extend the proposed method. One of the directions is the integration of advanced semantic segmentation techniques to enhance shape decomposition, particularly for highly \emph{free-form} surfaces. Inspired by CAD modeling, superquadrics can be combined with \emph{constructive solid geometry} (CSG) and \emph{Boolean operations} to construct more intricate models, thereby further expanding their expressive power. A similar modeling framework can involve the composition of multiple simultaneous deformations, such as tapering and bending, to represent more sophisticated shapes. Another interesting direction is to \emph{generalize Gaussian splatting} by replacing its primitive ellipsoids with more compact and expressive superquadrics, potentially unlocking new applications in rendering and reconstruction.

{\section*{Acknowledgment}
The author would like to deeply thank the anonymous referees who provided helpful and constructive comments. This work was supported by the Key Project of the National Natural Science Foundation of China (12494550 and 12494551), the Strategic Priority Research Program of the Chinese Academy of Sciences (XDB0640000, XDB0640200, and XDA0480502), and the National Natural Science Foundation of China (52505001).}

\ifCLASSOPTIONcaptionsoff
  \newpage
\fi

% trigger a \newpage just before the given reference
% number - used to balance the columns on the last page
% adjust value as needed - may need to be readjusted if
% the document is modified later
%\IEEEtriggeratref{8}
% The "triggered" command can be changed if desired:
%\IEEEtriggercmd{\enlargethispage{-5in}}

% references section

% can use a bibliography generated by BibTeX as a .bbl file
% BibTeX documentation can be easily obtained at:
% http://mirror.ctan.org/biblio/bibtex/contrib/doc/
% The IEEEtran BibTeX style support page is at:
% http://www.michaelshell.org/tex/ieeetran/bibtex/
%\bibliographystyle{IEEEtran}
% argument is your BibTeX string definitions and bibliography database(s)
%\bibliography{IEEEabrv,../bib/paper}
%
% <OR> manually copy in the resultant .bbl file
% set second argument of \begin to the number of references
% (used to reserve space for the reference number labels box)
% \begin{thebibliography}{1}

% \bibitem{IEEEhowto:kopka}
% H.~Kopka and P.~W. Daly, \emph{A Guide to \LaTeX}, 3rd~ed.\hskip 1em plus
%   0.5em minus 0.4em\relax Harlow, England: Addison-Wesley, 1999.

% \end{thebibliography}

\bibliographystyle{ieeetr}
\bibliography{main}%references

\newpage

\section*{Supplementary Material}

\begin{abstract}
In this supplementary material, we provide additional content to support our paper. Concretely, the following contents are presented. The implementation details and parameter settings of all compared approaches in the experimental part are reported in \cref{sec:implementation}, followed by the derivation of the closed-form solution of the fuzzy membership matrix $\mathbf{U}$ in \cref{sec:closed_form}. In~\cref{sec:1}, we present the theoretical proof of Proposition 1 in the paper. In \cref{sec:transformation}, we present the comprehensive definition of the three types of spatial inverse mappings, \ie, rigid, tapering, and bending transformations, that revert a general positional superquadric to its canonical form. Finally, we present a detailed description of the optimization bound setting for superquadric parameters in~\cref{sec:2}.
\end{abstract}

\section{Implementation Details and Parameter Settings of Compared Approaches}\label{sec:implementation}
We report the detailed implementation details and 
parameter configurations for all compared approaches, including NS fitting~\cite{vaskevicius2017revisiting}, Robust fitting~\cite{hu1995robust}, Radial fitting~\cite{gross1988error}, VM fitting~\cite{solina1990recovery}, and EMS~\cite{liu2022robust} as follows.\\

$\bullet$ NS fitting. For the numerically stable superquadric fitting algorithm proposed in~\cite{vaskevicius2017revisiting}, we adopt the parameter set recommended by the authors for constructing auxiliary functions. Detailed approximation parameters are provided in~\cref{tab:approximation}.

%In numerically stable superquadric fitting algorithm~\cite{vaskevicius2017revisiting}, we adopt the set of parameters suggested by the authors for the construction of the auxiliary functions. Concrete approximation parameters are reported in~\cref{tab:approximation}.
\begin{table}[!htbp]
\centering
\caption{Approximation parameters for the  auxiliary function construction.}
\begin{tabular}{llllllll}
\hline
$\alpha_\phi$ & $10^{-4}$ & &  $\alpha_\psi$ & 0.1 & & $\alpha_h$ &$10^{-7}$\\
$\alpha_l$ & $10^{-12}$ & & $\beta_\psi$ & 0.5 &  & & \\
\hline
\end{tabular}
\label{tab:approximation}
\end{table}

$\bullet$ Robust fitting. To circumvent the influence of non-Gaussian noise and outliers, Hu et al.~\cite{hu1995robust} developed an adaptive weighted partial-data minimization algorithm for superquadric fitting, whose mathematical formulation is given as follows:

%The authors in~\cite{hu1995robust} developed an adaptive weighted partial data minimization algorithm for superquadric fitting, which is mathematically defined as follows:
\begin{equation}
\min _{\hat{\mathbf{\Theta}}} \mathbf{O}(\mathbf{\Theta})=\sum_{i=1}^n\left(w_i R_i\right)^2,   
\end{equation}
subject to:
\begin{equation}
\sum_{i=1}^n w_i \geq \alpha n,   
\end{equation}where $w_i\in[0,1]$ is the weight factor and $\alpha\in[0.5, 1]$. $R_i$ is the residual of point $\bm{x}_i\in\mathbb{R}^3$ defined using the following function 
\begin{equation}
 R_i=\sqrt{a_x a_y a_z}\left(F^{\epsilon_1}\left(\mathbf{\Theta}\right)-1\right), 
\end{equation} where  $F\left(\mathbf{\Theta}\right)$ is the implicit function under general positions. \\

\iffalse
\begin{equation}
F(x, y, z)\!=\!\left(\left(\frac{x}{a_{x}}\right)^{\frac{2} {\epsilon_{2}}}+\left(\frac{y}{a_{y}}\right)^{\frac{2}{ \epsilon_{2}}}\right)^{\frac{\epsilon_{2}}{\epsilon_{1}}}
\!+\!\left(\frac{z}{a_{z}}\right)^{\frac{2}{\epsilon_{1}}}.
\label{eq:implicit}
\end{equation}
under general positions.  
\fi

$\bullet$ Radial fitting. The objective function for radial fitting under canonical form is defined as 
\begin{small}
\begin{equation}
\min _{\hat{\mathbf{\Theta}}}\sum_{i=1}^n\|\bm{x}_i\|\left[\left(\left(\left(\frac{x_i}{a_{x}}\right)^{\frac{2} {\epsilon_{2}}}+\left(\frac{y_i}{a_{y}}\right)^{\frac{2}{ \epsilon_{2}}}\right)^{\frac{\epsilon_{2}}{\epsilon_{1}}}
\!+\!\left(\frac{z_i}{a_{z}}\right)^{\frac{2}{\epsilon_{1}}}\right)^{-\frac{\epsilon_1}{2}}-1\right],
\label{eq:implicit}
\end{equation}
\end{small}where a revised exponent of $-\frac{\epsilon_1}{2}$ is introduced to enhance the fitting accuracy of the original implicit function.\\

$\bullet$ VM fitting. Analogous to radial fitting, VM fitting modifies the original implicit function by incorporating a minimal volume constraint, \ie, 

%Similar to radial fitting, VM fitting corrects the original implicit function by introducing minimal volume constraint, \ie, 
\begin{equation}
\min _{\hat{\mathbf{\Theta}}}\frac{1}{n}\sum_{i=1}^{n}\sqrt{a_x a_y a_z}\left(F^{\epsilon_1}\left(\mathbf{\Theta}\right)-1\right)^2.  
\end{equation} 

$\bullet$ EMS fitting. For the implementation of EMS fitting, we adopt the default parameters for test, as they yield stable performance across diverse experimental settings. For instance, the \texttt{MaxIterationEM=20} and \texttt{RelativeToleranceEM=1e-1}.

\section{Derivation of the Closed-Form Solution for $\mathbf{U}$}\label{sec:closed_form}
As defined in the paper, fuzzy clustering analysis aims to solve the following optimization problem: 
\begin{equation}
\min_{\mathbf{U},\mathbf{V}}\sum_{j=1}^C\sum_{i=1}^M(u_{ij})^r||\bm{x}_i-\bm{v}_j||_2^2, s.t.\sum_{j=1}^Cu_{ij}=1, u_{ij}\in[0,1].
\label{eq:fcm_origin}
\end{equation} We use Lagrange multiplier method to convert \cref{eq:fcm_origin} as a unconstrained optimization problem:
\begin{equation}
\mathcal{J}({\mathbf{U},\mathbf{V}}, \mathbf{\lambda})=\sum_{j=1}^C\sum_{i=1}^M(u_{ij})^r||\bm{x}_i-\bm{v}_j||_2^2+\sum_{i=1}^M\lambda_i(\sum_{j=1}^Cu_{ij}-1), 
\end{equation}where $\mathbf{\lambda}=[\lambda_1, \lambda_2, \cdots, \lambda_M]^{\top}$ are the set of Lagrange multipliers. Without loss of generality, we have 
\begin{equation}
\frac{\partial\mathcal{J}}{\partial u_{ij}}=r(u_{ij})^{r-1}||\bm{x}_i-\bm{v}_j||_2^2+\lambda_i=0. 
\end{equation} Therefore, we obtain 
\begin{equation}
 u_{ij}=\left(\frac{-\lambda_i}{r||\bm{x}_i-\bm{v}_j||_2^2}\right)^{\frac{1}{r-1}}.
 \label{eq:u_ij}
\end{equation} As $\sum_{j=1}^Cu_{ij}=1$, we have 
\begin{equation}
\sum_{j=1}^C\left(\frac{-\lambda_i}{r||\bm{x}_i\!-\!\bm{v}_j||_2^2}\right)^{\frac{1}{r\!-\!1}}\!=\!(\frac{-\lambda_i}{r})^{\frac{1}{r-1}}\sum_{j=1}^C\left(\frac{1}{||\bm{x}_i\!-\!\bm{v}_j||_2^2}\right)^{\frac{1}{r\!-\!1}}\!=\!1.
\end{equation}Thereby, we have 
\begin{equation}
 (\frac{-\lambda_i}{r})^{\frac{1}{r-1}}= \frac{1}{\sum_{j=1}^C\left(\frac{1}{||\bm{x}_i-\bm{v}_j||_2^2}\right)^{\frac{1}{r-1}}}. 
 \label{eq:lambda_i}
\end{equation} Substituting \cref{eq:lambda_i} into \cref{eq:u_ij}, we obtain the closed-form solution of $u_{ij}$ as
\begin{equation}
u_{ij}=\frac{1}{\sum_{k=1}^{C}\left(\frac{\left\|\boldsymbol{x}_i-\boldsymbol{v}_j\right\|^2_2}{\left\|\boldsymbol{x}_i-\boldsymbol{v}_k\right\|^2_2}\right)^{\frac{1}{r-1}}}. \bm{x}_5
\label{eq:U_origin}
\end{equation} This completes the derivation.

\section{Theoretical Proof}
\label{sec:1}
We present the theoretical proof that relates the pairwise distance computation of clustering centroids and samples with the orthogonal distance as follows.

\begin{proposition}
The sum of weighted pairwise distances from each clustering sample $\bm{x}_i\in \mathbb{R}^n$ to all clustering centroids $\mathbf{V}=\{\bm{v}_j\in\mathbb{R}^n\}$ in \begin{equation}
\min_{\mathbf{U},\mathbf{V}}\sum_{j=1}^C\sum_{i=1}^M(u_{ij})^r||\bm{x}_i-\bm{v}_j||_2^2, s.t.\sum_{j=1}^Cu_{ij}=1, u_{ij}\in[0,1].
\label{eq:fcm_origin}
\end{equation} is equivalent to the closest distance from $\bm{x}_i$ to $\mathbf{V}$ when the fuzzy factor $r\rightarrow1$, i.e., 
\begin{equation}
\lim_{r\rightarrow1}\left(\sum_{j=i}^C(u_{ij})^r||\bm{x}_i-\bm{v}_j||_2^2\right)= \min_{j=1,\cdots, C}||\bm{x}_i-\bm{v}_j||_2^2. 
\end{equation}

%their geometrically shortest distance. 
\label{proposition:1}
\end{proposition}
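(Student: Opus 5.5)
Substituting the closed-form memberships of \cref{eq:U_origin} into the weighted sum reduces the claim to a power-mean limit. Write $D_j=\|\bm{x}_i-\bm{v}_j\|_2^2$ and $s=1/(r-1)$, so that $s\to+\infty$ as $r\to1^+$.

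First I dispose of the degenerate case. If some $D_j=0$, the optimal membership concentrates on that centroid: $u_{ij}=1$ and all other memberships vanish. The sum is then $0=\min_j D_j$ for every $r$, and there is nothing to prove. So assume all $D_j>0$.

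From \cref{eq:U_origin},
\begin{equation*}
u_{ij}=\frac{D_j^{-s}}{\sum_{k=1}^C D_k^{-s}}.
\end{equation*}
Since $r=1+1/s$, we have $(u_{ij})^r D_j = D_j^{-rs+1}\big/\bigl(\sum_k D_k^{-s}\bigr)^r$. Here $-rs+1=-s-1+1=-s$. Summing over $j$ therefore gives
\begin{equation*}
\sum_{j=1}^C (u_{ij})^r D_j=\frac{\sum_j D_j^{-s}}{\bigl(\sum_k D_k^{-s}\bigr)^{r}}=\Bigl(\sum_{k=1}^C D_k^{-s}\Bigr)^{1-r}=\Bigl(\sum_{k=1}^C D_k^{-s}\Bigr)^{-1/s},
\end{equation*}
using $1-r=-1/s$. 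This is precisely the quantity $J$ of \cref{lemma:1} and \cref{lemma:2}, with $v=s$ and $d_k=D_k$.

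Next I apply \cref{lemma:2} with $v=s>0$ and $N=C$:
\begin{equation*}
C^{-1/s}\min_k D_k\le \Bigl(\sum_k D_k^{-s}\Bigr)^{-1/s}\le \min_k D_k .
\end{equation*}
As $r\to1^+$ we have $s\to\infty$, so $C^{-1/s}\to1$. The squeeze theorem then yields the limit $\min_{j\in[C]}\|\bm{x}_i-\bm{v}_j\|_2^2$.

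\cref{lemma:1} is not needed for the limit itself. It adds the interpretation that $J$ is monotone in each $D_j$, so driving centroids closer decreases the objective consistently with the closest-distance surrogate.

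The main obstacle is the algebraic simplification of $\sum_j u_{ij}^r D_j$ into the power mean. The key point there is the exponent bookkeeping $-rs+1=-s$, which makes the numerator and denominator collapse. Beyond that, the only care required is to state that the limit is one-sided ($r\to1^+$, as $r\in(1,\infty)$), and to handle the zero-distance case separately.
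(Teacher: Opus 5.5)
Your proof is correct and follows essentially the same route as the paper: substitute the closed-form memberships to collapse the weighted sum to $\bigl(\sum_{k=1}^C D_k^{-s}\bigr)^{-1/s}$ with $s=1/(r-1)$, then squeeze between $C^{-1/s}\min_k D_k$ and $\min_k D_k$ via Lemma 2. Your additions are small gains in rigor over the paper's version: the explicit exponent bookkeeping (the paper only says ``after an algebraic operation''), the separate zero-distance case (where the closed-form $u_{ij}$ is undefined), the one-sided limit $r\to1^+$, and the observation that Lemma 1 is not actually needed.
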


\begin{proof}
For simplicity, we insert
$$u_{ij}=\frac{1}{\sum_{k=1}^{C}\left(\frac{\left\|\boldsymbol{x}_i-\boldsymbol{v}_j\right\|^2_2}{\left\|\boldsymbol{x}_i-\boldsymbol{v}_k\right\|^2_2}\right)^{\frac{1}{r-1}}}$$ in \begin{equation}
\min_{\mathbf{U},\mathbf{V}}\sum_{j=1}^C\sum_{i=1}^M(u_{ij})^r||\bm{x}_i-\bm{v}_j||_2^2, 
\end{equation} to obtain
\begin{equation}
\sum_{j=1}^C\sum_{i=1}^M(\frac{(\left\|\boldsymbol{x}_i-\boldsymbol{v}_j\right\|^2_2)^{\frac{1}{1-r}}}{\sum_{k=1}^{C}\left({\left\|\boldsymbol{x}_i-\boldsymbol{v}_k\right\|^2_2}\right)^{\frac{1}{1-r}}})^r||\bm{x}_i-\bm{v}_j||_2^2 .
\label{eq:fcm_proof}
\end{equation} After an algebraic operation,   \cref{eq:fcm_proof} equals to
\begin{equation}
\sum_{i=1}^M(\sum_{j=1}^C(\left\|\boldsymbol{x}_i-\boldsymbol{v}_j\right\|^2_2)^{\frac{1}{1-r}})^{1-r}.
\label{eq:fcm}
\end{equation}
According to Lemmas 1 and 2 presented in the paper, we have
\begin{equation*}
\begin{aligned}
\min _{j \in\{1, \cdots, C\}} \frac{\left\|\boldsymbol{x}_i-\boldsymbol{v}_j\right\|^2_2}{C^{1 / v}} &\leq (\sum_{j=1}^C (\left\|\boldsymbol{x}_i-\boldsymbol{v}_j\right\|^2_2)^{-v})^{-1/v}\\ 
&\leq \min _{j \in\{1, \cdots, C\}}\left\|\boldsymbol{x}_i-\boldsymbol{v}_j\right\|^2_2,
\end{aligned}
\end{equation*} where $v=\frac{1}{r-1}.$ As $$\lim\limits_{r\rightarrow1}\min\limits_{j \in\{1, \cdots, C\}} \frac{\left\|\boldsymbol{x}_i-\boldsymbol{v}_j\right\|^2_2}{C^{1 / v}}=\min\limits_{j \in\{1, \cdots, C\}}\left\|\boldsymbol{x}_i-\boldsymbol{v}_j\right\|^2_2,$$
according to the squeeze theorem, we have 
\begin{equation*}
\lim_{r\rightarrow1}(\sum_{j=1}^C (\left\|\boldsymbol{x}_i-\boldsymbol{v}_j\right\|^2_2)^{-v})^{-1/v} =\min _{j \in\{1, \cdots, C\}}\left\|\boldsymbol{x}_i-\boldsymbol{v}_j\right\|^2_2.
\end{equation*}   
This completes the proof.
\end{proof}

\section{Definition of the Spatial Inverse Mapping}\label{sec:transformation}
As we assess the deviation in the recovery of a superquadric in canonical form, we provide a comprehensive definition or derivation of the spatial inverse mapping. This mapping encompasses the transformations, including rigid, tapering, and bending, that revert a superquadric, initially situated in a general spatial location, back to its canonical form. The detailed formulation is as follows:

% Assuming you will then provide the equations and derivations after this introduction.

%As we calculate the superquadric recovery deviation in the canonical form, we present the detailed definition or derivation of the spatial inverse mapping of the superquadric transformations including rigid, tapering, and bending that transform a superquadric situated at a general spatial location in a canonical form.
\begin{figure*}[t]
       \centering
       \subcaptionbox{Input}{\includegraphics[width=0.19\linewidth]{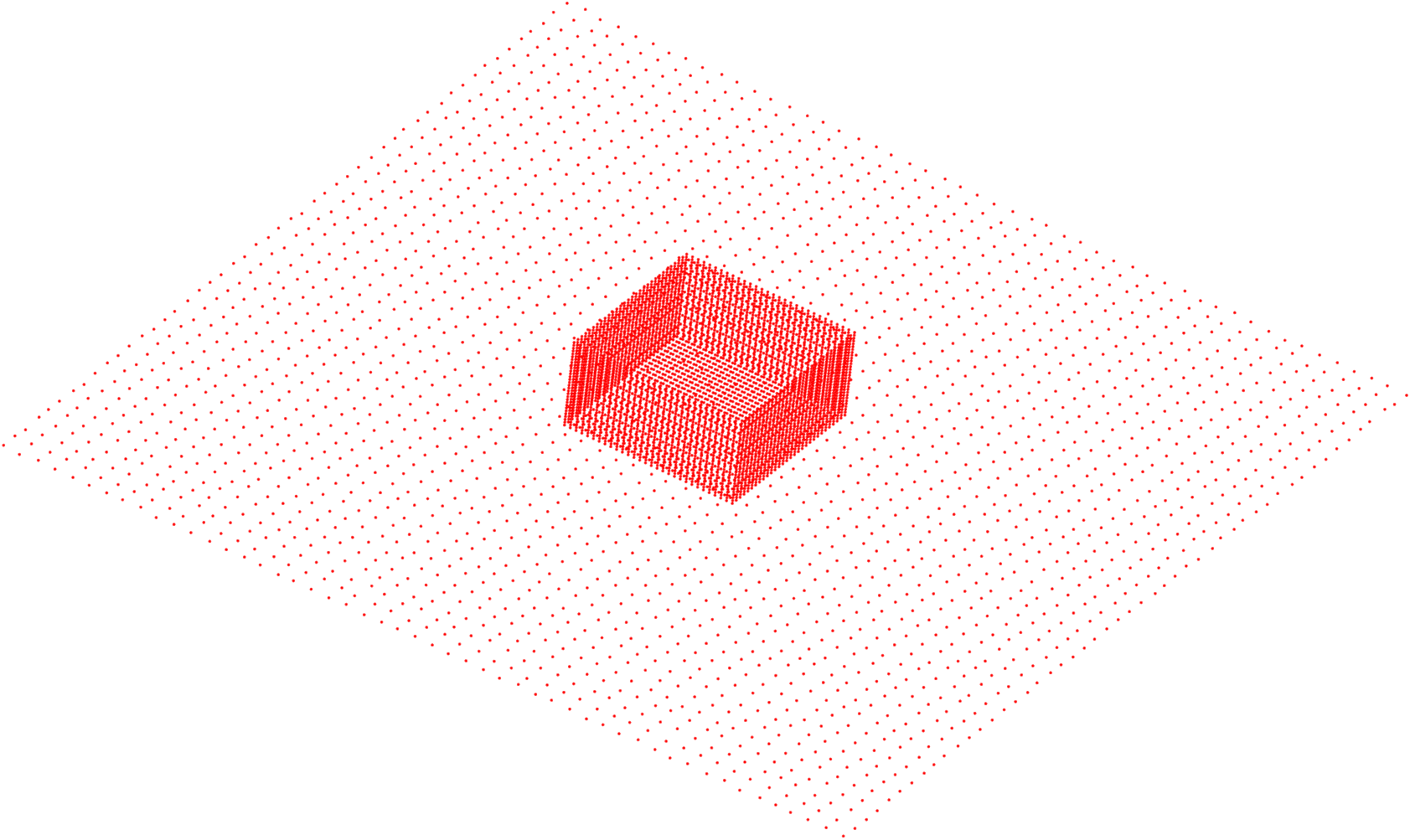}}
       \subcaptionbox{Output}{\includegraphics[width=0.19\linewidth]{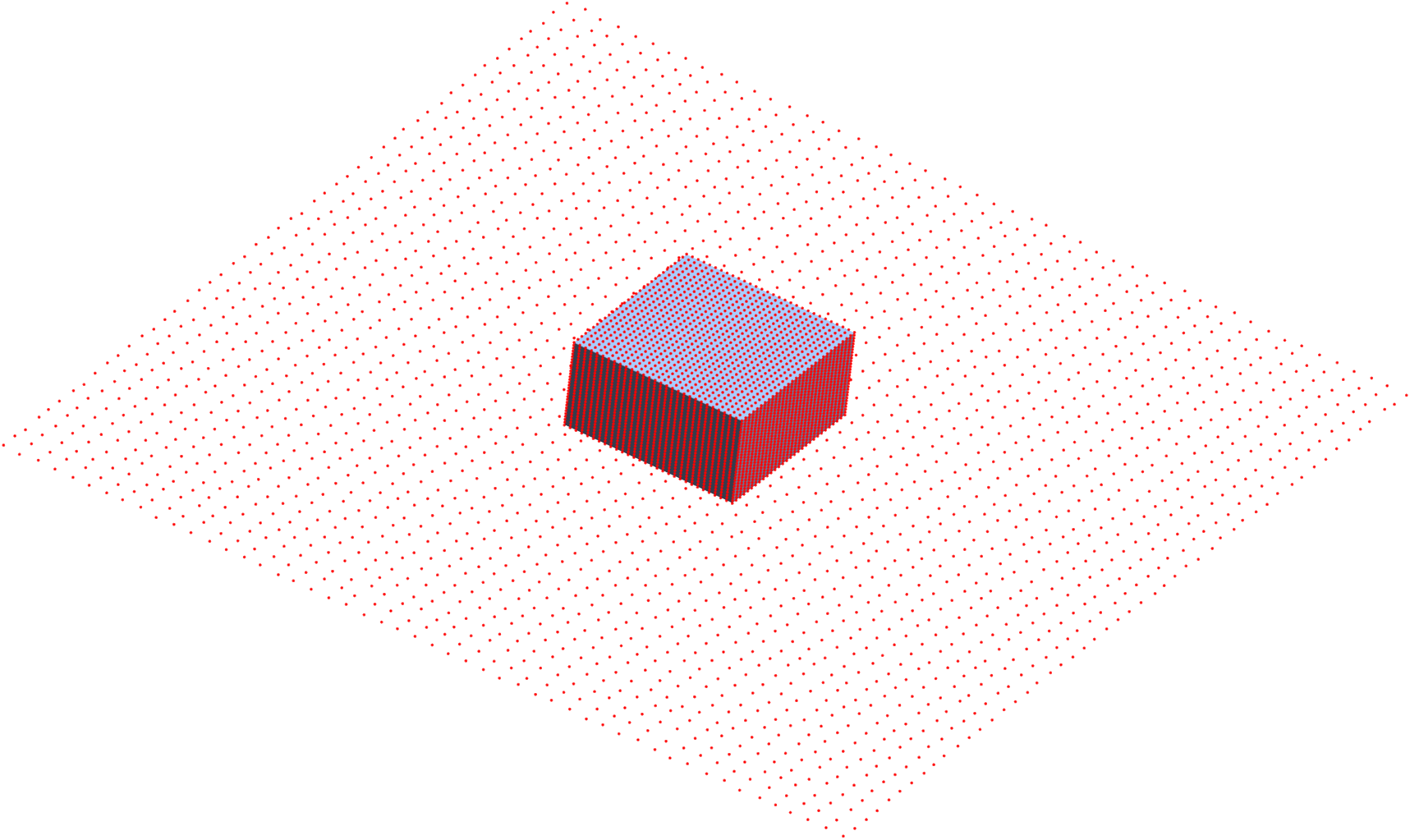}}
       \subcaptionbox{Input}{\includegraphics[width=0.19\linewidth]{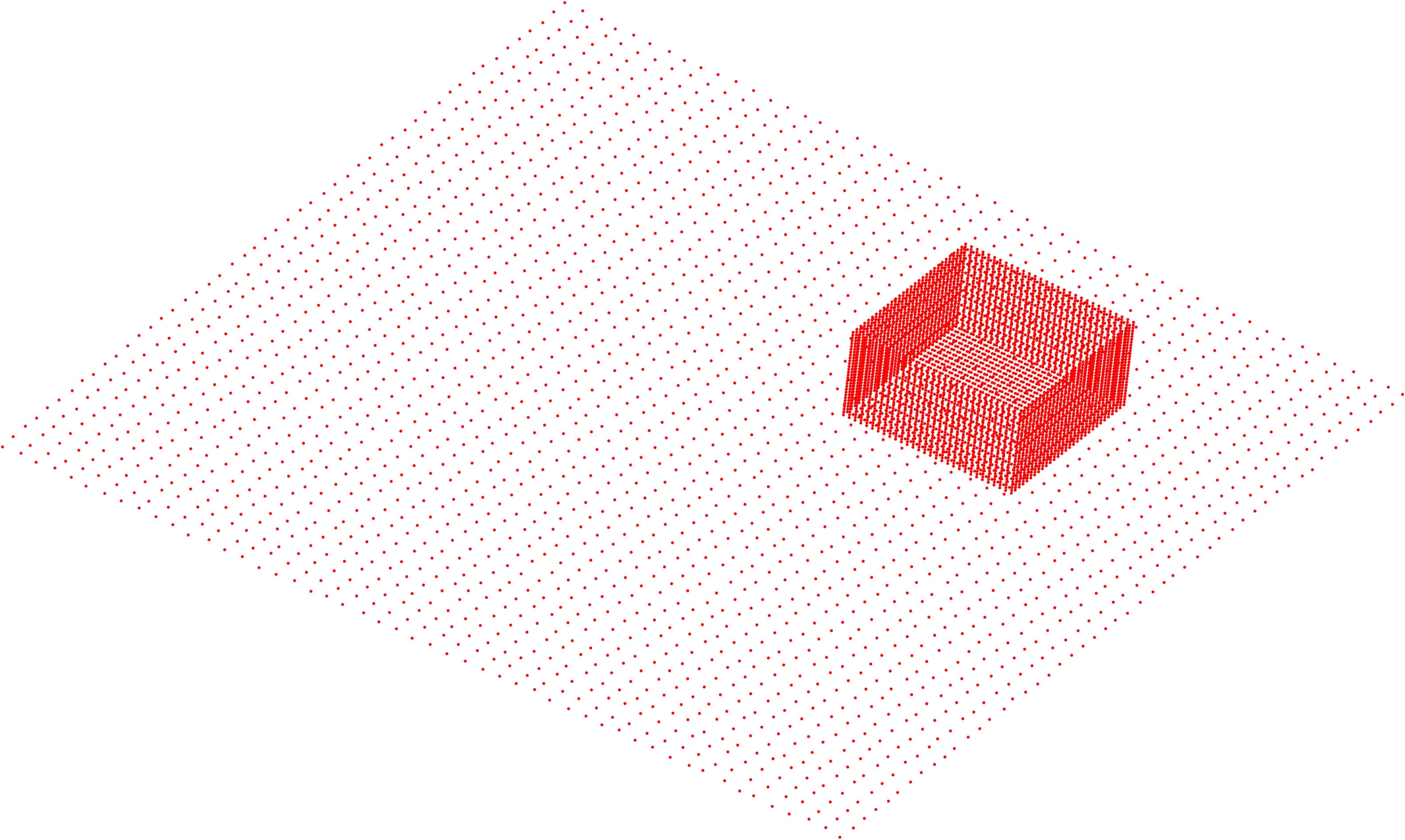}}
       \subcaptionbox{Output 1}{\includegraphics[width=0.19\linewidth]{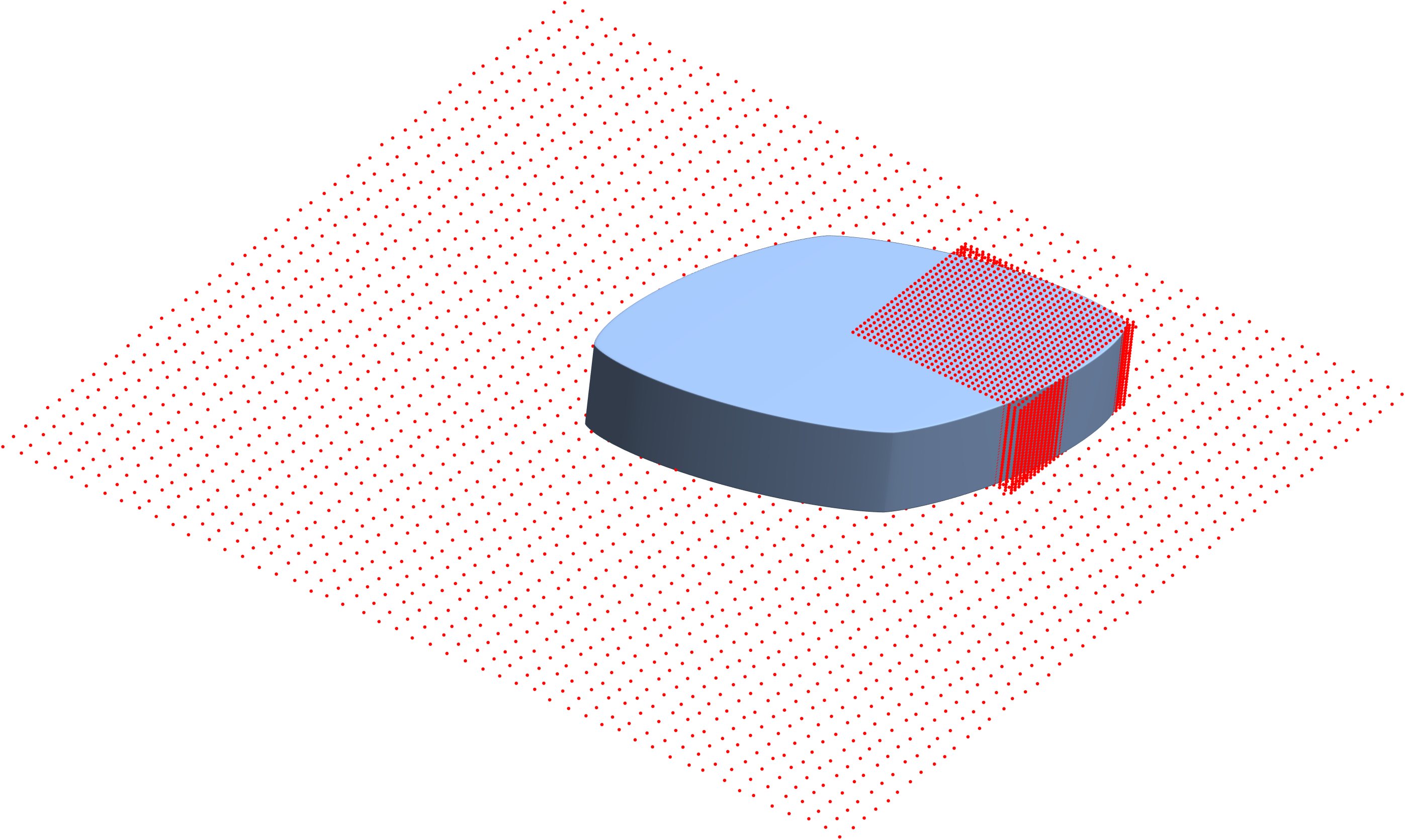}}
       \subcaptionbox{Output 2}{\includegraphics[width=0.19\linewidth]{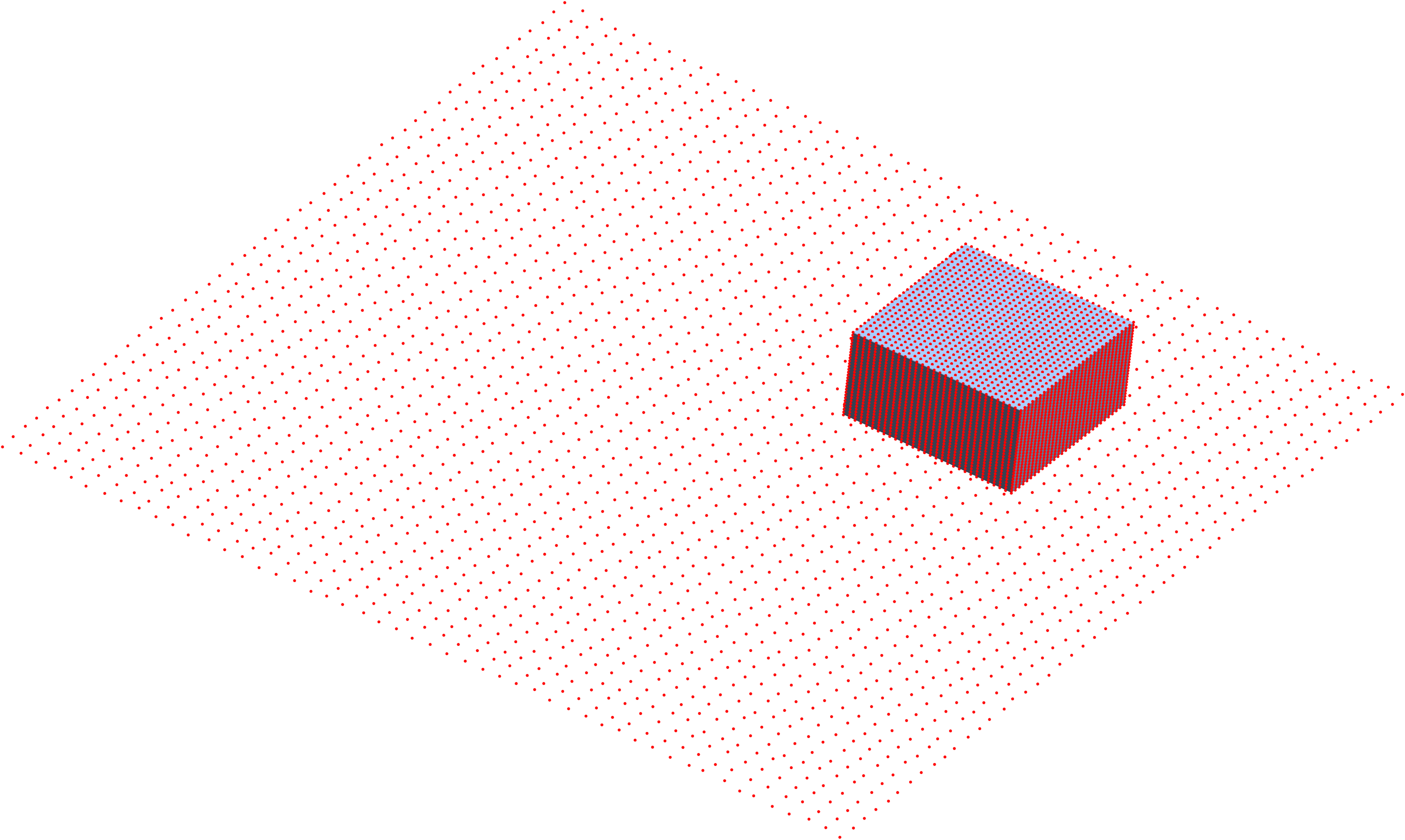}}
       \caption{\revise{Investigation of the proposed method against structured noise.}}
       \label{fig:structure}
   \end{figure*}
\subsection*{Inverse Rigid Transformation}
Given a point $[x', y', z']^{\top}\in\mathbb{R}^3$ situated at a general spatial location, its inverse rigid transformation is formally defined as
\begin{equation}
[x', y', z']^{\top}=\widehat{\mathbf{R}}^{\top}([x', y', z']^{\top}-\widehat{\mathbf{t}}).   
\end{equation} Here, $\widehat{\mathbf{R}}\in\mathbb{R}^{3\times3}$ is the estimated rotation matrix and  $\widehat{\mathbf{t}}\in\mathbb{R}^{3}$ is the estimated translation vector.

\subsection{Inverse Tapering Transformation} 
After the superquadric size parameters $a_x, a_y, a_z$ and the tapering degrees $k_{x}, k_{y}$ along the $x$ and $y$ axes have been recovered, the inverse tapering transformation can be formally expressed as follows:
\begin{equation}
\left\{
\begin{aligned}
 x&=\frac{x'}{f_{k_{x}}(z)}=\frac{x'}{\frac{{k_{x}}}{a_z}z+1}\\
y&=\frac{y'}{f_{k_{y}}(z)}=\frac{y'}{\frac{{k_{y}}}{a_z}z+1}\\  
z&=z'
\end{aligned}.
\right.
\end{equation} Here, $a_z$ represents the estimated superquadric size along the $z$ axis. The point that $[x', y', z']^{\top}$ undergoes the rigid transformation is still denoted as   
$[x', y', z']^{\top}=\widehat{\mathbf{R}}^{\top}([x', y', z']^{\top}-\widehat{\mathbf{t}})$, where $\widehat{\mathbf{R}}\in\mathbb{R}^{3\times3}$ is the rotation matrix and  $\widehat{\mathbf{t}}\in\mathbb{R}^{3}$ is the translation vector. %denoting the inverse rigid transformation. 

\subsection{Inverse Bending Transformation} After recovering the bending degree $\kappa\in\mathbb{R}_{+}$ and the direction $\alpha$ in the \(x\)-\(y\) plane, the inverse bending transformation for a point  $[x', y', z']^{\top}\in\mathbb{R}^3$ situated in a general spatial position can be formally expressed as 
\begin{equation}
\left\{
\begin{aligned}
& x={x}'-\cos (\alpha)(R-r) \\
& y={y}'-\sin (\alpha)(R-r) \\
& z=\kappa^{-1} \gamma
\end{aligned},
\right.
\end{equation}where 
\begin{equation}
\left\{
\begin{aligned}
\gamma & =\arctan \frac{{z}'}{\kappa^{-1}-R} \\
r & =\kappa^{-1}-\sqrt{{z}'^2+\left(\kappa^{-1}-R\right)^2} \\
R & =\cos \left(\alpha-\arctan\frac{{y}'}{{x}'}\right) \sqrt{{x}'^2+{y}'^2}
\end{aligned}
\right.
\end{equation}
and $[x', y', z']^{\top}=\widehat{\mathbf{R}}^{\top}([x', y', z']^{\top}-\widehat{\mathbf{t}})$ with $\widehat{\mathbf{R}}\in\mathbb{R}^{3\times3}$ and  $\widehat{\mathbf{t}}\in\mathbb{R}^{3}$ denoting the inverse rigid transformation.

\section{Optimization Bound Setting}
\label{sec:2}
During the optimization of the unknown superquadric parameters $\mathbf{\Theta}$, we set different upper and lower bounds (abbreviated as ub and lb) for each parameter to boost the convergence of the interior trust region method. Specifically, \\\\
(1) for rigid superquadrics, we have \\
\texttt{lb = [0.0 0.0 0.00001 0.00001 0.00001 -2*pi -2*pi -2*pi -ones(1, 3) * upper],}\\
\texttt{
ub = [2.0 2.0 ones(1, 3) * upper  2*pi 2*pi 2*pi ones(1, 3) * upper].}\\\\
(2) For tapering deformations, since the tapering degree $k_1, k_2\in[-1, 1]$, we have \\
\texttt{lb = [0.0 0.0 0.001 0.001 0.001 -2*pi -2*pi -2*pi -ones(1, 3) * upper,-1,-1],}\\
\texttt{ub = [2.0 2.0 ones(1, 3) * upper  2*pi 2*pi 2*pi ones(1, 3) * upper,1,1].}\\\\ 
(3) For bending deformations, since $\kappa=0$ will make Eq.~(6) in the paper singular, we set the bounds as\\
\texttt{lb = [0.0 0.0 0.00001 0.00001 0.00001 -2*pi -2*pi -2*pi -ones(1, 3) * upper, 0.0001 0],}\\
\texttt{ub = [2.0 2.0 ones(1, 3) * upper  2*pi 2*pi 2*pi ones(1, 3) * upper, pi/2]}.\\
Here, \texttt{upper} is defined as\\
\texttt{upper = 4 * max(max(abs(point))).}\\
In contrast to previous approaches that constrain the region of the shape parameters $\varepsilon_1$ and $\varepsilon_1$, our optimization bounds incorporate all convex superquadrics, as our method is inherently stable.

{\section{Robustness against Structured Noise}
We conduct additional experiments to evaluate the robustness of the proposed method against structured noise. As shown in Fig.~\ref{fig:structure}, we construct a scene consisting of a floor plane beneath a box. In Fig.~\ref{fig:structure}(a), the box is located at the center of the plane, and our method achieves high-quality fitting. In Fig.~\ref{fig:structure}(c), the box is placed far from the center. Due to the resulting deviation in the initialization of the center and axes, our method produces fitting results that incorporate both the floor and the box point clouds (Fig.~\ref{fig:structure}(d)). Fig.~\ref{fig:structure}(e) shows a successful fitting result when the center is initialized near the box. These experiments confirm that structured noise indeed affects the fitting quality, primarily by biasing the initialization. To ensure high-quality fitting, we recommend a coarse segmentation step to obtain a better initialization.}

% \bibliographystyle{ieeetr}
% \bibliography{main}
% that's all folks
\end{document}